\documentclass{article}
\PassOptionsToPackage{table,dvipsnames}{xcolor}

\usepackage[preprint]{neurips_2026}
\usepackage{wrapfig}
\usepackage{enumitem}
\usepackage{bbm}
\usepackage{caption}
\usepackage{wrapfig}
\usepackage{subcaption}

\usepackage[utf8]{inputenc}
\usepackage[T1]{fontenc}

\usepackage{amsmath,amssymb,amsfonts,amsthm}
\usepackage{nicefrac}

\usepackage{booktabs}
\usepackage{array}
\usepackage{multirow}
\usepackage{graphicx}

\usepackage[ruled,vlined,linesnumbered]{algorithm2e}
\usepackage{wrapfig}

\makeatletter
\newcommand{\removelatexerror}{\let\@latex@error\@gobble}
\makeatother

\usepackage{microtype}

\usepackage{xcolor}
\definecolor{lightblue}{RGB}{220,235,250}

\usepackage{titletoc}

\usepackage[ruled,vlined,linesnumbered]{algorithm2e}
\SetKwInput{KwIn}{Input}
\SetKwInput{KwOut}{Output}
\SetKw{Return}{return}

\usepackage{url}
\usepackage[colorlinks]{hyperref}
\hypersetup{
  linkcolor=RawSienna,
  urlcolor=ForestGreen,
  citecolor=ForestGreen,
  anchorcolor=ForestGreen
}
\usepackage[nameinlink]{cleveref}

\usepackage[most]{tcolorbox}

\definecolor{lightgray}{RGB}{240,240,245}
\definecolor{darkgreen}{RGB}{0,100,0}

\newtcolorbox[list inside=prompt,auto counter,number within=section]{prompt}[1][]{
  colback=white,
  colbacktitle=black!60,
  coltitle=white,
  fontupper=\footnotesize,
  boxsep=5pt,
  enhanced,
  left=0pt, right=0pt, top=0pt, bottom=0pt,
  boxrule=1pt,
  breakable,
  #1
}

\theoremstyle{definition}

\theoremstyle{plain}
\newtheorem{proposition}{Proposition}
\newtheorem{theorem}{Theorem}
\newtheorem{condition}{Condition}
\newtheorem{lemma}{Lemma}
\newtheorem{corollary}{Corollary}

\theoremstyle{remark}
\newtheorem{remark}{Remark}
\newtheorem{appassumption}{Assumption}
\counterwithin{appassumption}{section}        

\newcommand{\ms}[2]{#1\raisebox{0.15ex}{\tiny$\pm$#2}}

\newcommand{\bms}[2]{\textbf{#1}\raisebox{0.15ex}{\tiny$\pm$#2}}

\title{LARK: Learnability-Grounded Trajectory Selection for Efficient Reasoning Distillation}

\author{%
{\bfseries Tianrun Yu$^{1}$, Kaixiang Zhao$^{1}$, Chih-Chun Chen$^{1}$, Amanda Hughes$^{1}$}\\
{\bfseries Taylor W. Killian$^{1}$, Fenglong Ma$^{2}$, Weitong Zhang$^{3}$, Porter Jenkins$^{1,*}$}\\[0.5em]
{\small
$^{1}$Brigham Young University \quad
$^{2}$The Pennsylvania State University \quad
$^{3}$University of North Carolina at Chapel Hill
}\\[0.4em]
{\small\texttt{*Corresponding author: pjenkins@cs.byu.edu}}
}

\begin{document}
\maketitle

\begin{abstract}
We study trajectory selection for reasoning distillation, where teacher-generated reasoning trajectories are selectively used as supervision for a student model. Existing methods rely on heuristics such as trajectory quality or model confidence, but they often overlook whether a trajectory is learnable by the student. In this paper, we present LARK\footnote{Our code is available at \url{https://github.com/Tianrun-Yu/LARK}.}, a learnability-grounded method for reasoning trajectory selection. LARK selects trajectories that the student can learn efficiently while preserving the generalization of the full training distribution. At the core of LARK is a learnability factor $\rho$, which characterizes the rate at which the student's training loss decreases. To estimate this rate efficiently and maintain generalization, we introduce a learnability proxy and a $\chi^2$-regularized selection policy that balances learnability and distributional coverage, both with strong theoretical guarantees on their estimation error. Empirically, LARK consistently outperforms data selection baselines across multiple base models and reasoning tasks. Diagnostic analyses show that the LARK score predicts downstream training utility and that LARK-selected trajectories induce faster supervised fine-tuning loss reduction.
\end{abstract}


\section{Introduction}
\label{sec:introduction}

Reasoning distillation has emerged as an effective paradigm for transferring chain-of-thought reasoning abilities from larger teacher Large Language Models (LLMs) to smaller student models~\citep{hsieh2023distilling, yuan2023scaling}. In this setting, a teacher model generates reasoning trajectories, and a student model is fine-tuned to imitate them. Recent studies have shown that \emph{a small set of carefully selected reasoning examples} can yield substantial performance gains~\citep{ye2025limo, muennighoff2025s1}, often rivaling the use of much larger training sets. These findings suggest that reasoning distillation requires both \emph{generating} sufficient reasoning data and \emph{identifying} the supervision that is most useful for the student.

Many existing data selection methods for reasoning distillation still inherit the classical formulation of data selection in LLM fine-tuning~\citep{xia2024less,xiao2025unified}, where selection is performed at the question or sample level~\citep{yu2023metamath, zhang2025d3, liu2024selectit}. However, reasoning distillation presents a finer-grained selection problem. For a single question, we often have multiple candidate reasoning trajectories, generated by different teacher models, sampling runs, or reasoning styles. Even when several trajectories lead to the same correct final answer, they may provide very different training signals to the current student model.
A trajectory that appears natural, high-quality, or well aligned with the student model is not necessarily the one that the student can learn from most efficiently. 

Existing trajectory selection methods largely rely on heuristic criteria. Some methods use external verifiers or LLM-as-a-judge scores to assess reasoning quality \citep{zheng2023judging,
yang2025qwen3}, while GRAPE, Local Naturalness, and RSR score candidates using the student model itself~\citep{zhang2025grape, just2025local, yang2026reasoning}.
Although these methods are useful, they do not explicitly measure whether a trajectory is \emph{learnable} by the student. This gap motivates the central question:

\begin{center}
\emph{Can we design a principled trajectory selection criterion that ensures \\ distilled reasoning trajectories are learnable by the student model?}
\end{center}

We answer this question by proposing \textbf{LARK}—\underline{\textbf{L}}earnability-grounded \underline{\textbf{A}}nchor-time \underline{\textbf{R}}an\underline{\textbf{k}}ing—a learnability-grounded method for reasoning trajectory selection, as illustrated in Figure~\ref{fig:lark_pipeline}. LARK identifies the subset of trajectories from which the student model can learn most efficiently. At its core, LARK introduces a principled selection criterion that characterizes trajectory learnability from an optimization perspective, while preserving generalization through $\chi^2$-regularization. Our contributions are as follows:
\begin{itemize}[leftmargin=*]
\item We formulate reasoning trajectory selection as a learnability-grounded policy optimization problem. We introduce the anchor-time learnability rate $\rho$, which characterizes the decay rate of the post-training loss and thereby turns learnability into a principled optimization objective.
\item To estimate the learnability rate $\rho$ efficiently while preventing the selector from hacking the learnability criterion, we use a first-order Taylor expansion around the unselected data distribution and derive a $\chi^2$-regularized policy optimization problem. Theoretically, we show that this policy optimization implicitly increases data learnability and can be solved in closed form under fixed-budget trajectory selection.
\item Empirically, we show that LARK outperforms existing baselines across multiple base models and tasks. Diagnostic analyses further support our theoretical claims and validate the learnability-grounded perspective for trajectory selection.
\end{itemize}

\begin{figure}[!t]
    \centering
    \includegraphics[width=\textwidth]{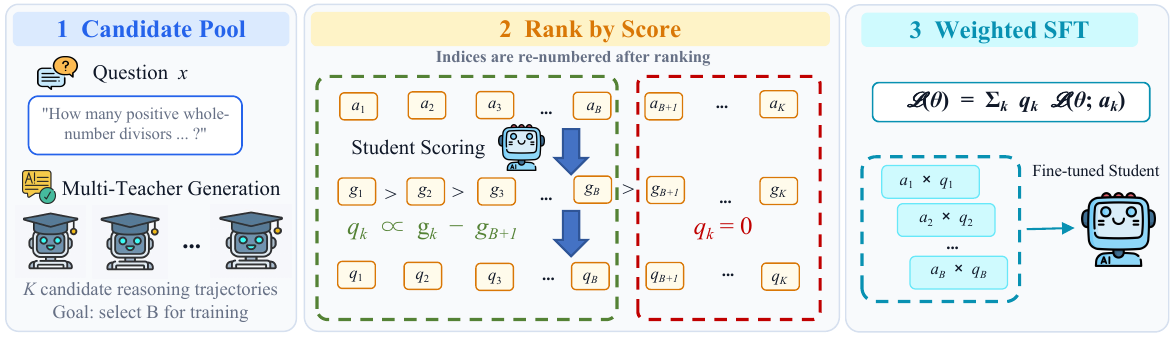}
    \caption{Overview of the LARK pipeline. For each question, multiple
    teacher-generated reasoning trajectories form a candidate pool. LARK uses the
    student model to score these trajectories, ranks them by the score, and selects
    the top-$B$ trajectories for weighted SFT. In the figure, $a_k$ denotes the
    $k$-th trajectory after ranking, $g_k$ denotes the practical LARK score
    $\hat{g}_k$, and $q_k$ denotes its training weight. The indices in the ranking
    panel are re-numbered after sorting.}
    \label{fig:lark_pipeline}
\end{figure}

\paragraph{Notation.}

Vectors are denoted by lowercase boldface letters, e.g., $\mathbf{x}$. For a sequential response, $\mathbf{y}_{<t}$ denotes the sequence of tokens preceding the $t$-th token. We denote by $[n]$ the set $\{1,\cdots,n\}$. For two positive sequences $\{a_n\}_{n\geq 1}$ and $\{b_n\}_{n\geq 1}$, we write $a_n = o(b_n)$ if $a_n/b_n \to 0$ as $n \to \infty$. We denote the $K$-dimensional simplex by $\Delta^K$, and $\|\cdot\|$ denotes the Euclidean norm throughout the paper. For two distributions $p$ and $q$, the $\chi^2$ divergence is defined as $\chi^2(q \parallel p) = \int_x (q(x)-p(x))^2 / p(x) \, \mathrm{d}x$, with the integral replaced by a summation in the discrete case.

\section{Related Work}

\textbf{Reasoning distillation and data selection.}
Reasoning distillation transfers chain-of-thought traces from
stronger teacher models to smaller student models~\citep{hsieh2023distilling,
yuan2023scaling}. Recent work has shown that a small but carefully
chosen set of reasoning examples can rival much larger training
sets~\citep{yu2023metamath,ye2025limo,muennighoff2025s1}. This has
motivated a growing body of work on data-selection methods that filter
examples by diversity, difficulty, uncertainty, or learning
impact~\citep{zhang2025d3,liu2024selectit,li2025limr}. 

Most of these methods operate at the question or example level. In contrast, we study a finer-grained problem that arises in reasoning distillation. For a given question, multiple candidate reasoning trajectories may be available. These trajectories can all lead to correct answers while providing substantially different training signals to the student. LARK therefore asks which trajectory, among candidates for the same question, is most learnable for the current student model.

\textbf{Trajectory selection and optimization-grounded weighting.}
Existing trajectory-selection methods typically score candidates by either
external quality signals or student-side fit. External methods rely on correctness checks, verifiers, or LLM-as-a-judge evaluation to estimate the quality of a reasoning trajectory ~\citep{cobbe2021gsm8k}. Student-side methods instead use the student model's own behavior, such as likelihood (GRAPE~\citep{zhang2025grape}),
local-context likelihood
(Local Naturalness~\citep{just2025local}), or rank-surprisal
alignment (RSR~\citep{yang2026reasoning}), to identify trajectories that appear better aligned with the student. 

LARK differs from both families by deriving its selection criterion from
a local learnability objective. Specifically, it asks whether a
trajectory yields a favorable \emph{optimization signal} for the
current student. This perspective connects our work to gradient-based
data selection rooted in the Polyak--\L{}ojasiewicz
framework~\citep{lojasiewicz1963topological,karimi2016linear,xia2024less},
though unlike those methods we avoid per-trajectory backward passes
by using a forward-pass proxy. For multi-trajectory selection, we
adopt a $\chi^2$-regularized weighting that is related in spirit to
regularized alignment and offline
optimization~\citep{rafailov2023dpo,huang2024correcting,xie2022role},
yielding a closed-form, budgeted rule.

\section{Preliminaries}
\label{sec:prelim}

\subsection{Supervised Fine-Tuning for Reasoning Distillation}
\label{sec:prelim-sft}

We consider the supervised fine-tuning (SFT) of the student model. Let $\mathbf{x}$ be the question and $\mathbf{y}_k{=}(y_1^k, \cdots, y_{|a_k|}^k)^\top$ denote the $k$-th trajectory from the teacher with length $|a_k|$. The SFT loss is defined by
\begin{align}\textstyle{
\ell(\boldsymbol{\theta}, \mathbf{y}_k) = \tfrac{1}{|a_k|}\sum_{t=1}^{|a_k|}
-\log \pi_{\boldsymbol \theta}(y_t \mid \mathbf x,\, \mathbf y_{<t})}, \notag 
\end{align}
where $\pi_{\boldsymbol \theta}$ denotes the student language model with parameter $\boldsymbol{\theta}$. Over a set of response candidates $\{\mathbf y_k\}_{k=1}^K$, we consider the finite-sum loss over a weight $\mathbf q = (q_1, \cdots, q_K)^\top$ written by
\begin{align}\textstyle{
\mathcal L(\boldsymbol{\theta}, \mathbf q) = \sum_{k=1}^K q_k \cdot \ell(\boldsymbol{\theta}, \mathbf y_k),\quad \text{where } \sum_{k=1}^K q_k = 1, q_k \ge 0, \forall k \in [K]},\label{eq:prelim-weighted-sft}
\end{align}
where $\mathbf{q}$ encodes which trajectories should be emphasized during the SFT. For instance, when $q_k = \tfrac1K$ for all $k \in [K]$, Eq.~\eqref{eq:prelim-weighted-sft} reduces to the standard SFT loss over $K$ trajectories. When $B$ trajectories are selected for training, it is equivalent to finding a weight vector $\mathbf{q} $ on the simplex $\Delta^K$ with the number of non-zero elements exactly equal to $B$.

\subsection{Gradient Flow Loss Decay Identity}
\label{sec:prelim-decay}

We characterize the learnability from the lens of optimization theory, in which Polyak--{\L}ojasiewicz (PL) condition of over-parameterized
optimization~\citep{bassily2018exponential, liu2022loss,
chatterjee2022convergence} centers on the ratio of the squared
gradient norm to the loss. We write this ratio as 
\begin{align}
\rho(\boldsymbol \theta; \mathbf{q})
\triangleq
{\|\nabla_{\boldsymbol \theta} \mathcal{L}({\boldsymbol \theta}; \mathbf{q})\|^2} / {\mathcal{L}({\boldsymbol \theta}; \mathbf{q})},
\label{eq:prelim-rho}
\end{align}
which we refer to as the \emph{loss decay rate function}. $\rho(\boldsymbol{\theta}, \mathbf q)$ characterizes the landscape of the optimization process. For instance, when $\rho(\boldsymbol{\theta}, \mathbf q) \ge \mu$, Eq.~\eqref{eq:prelim-rho} implies $\|\nabla_{\boldsymbol \theta} \mathcal{L}\|^2 \ge \mu\mathcal{L}$, which is commonly used in the analysis of over-parameterized optimization~\citep{liu2022loss} (where optimal loss $\mathcal{L}^*{=}0$).

\noindent\textbf{Decay rate of gradient flow.}
Let $T > 0$ denote the total fine-tuning time. For a fixed weight vector $\mathbf{q}$, the \emph{gradient flow} $\phi_{\mathbf{q}}: [0, T] \to \mathbb{R}^{|\boldsymbol \theta|}$ describes the evolution of the parameters when the student is fine-tuned on the weighted objective $\mathcal{L}(\cdot;\, \mathbf{q})$, defined as the solution to the ODE
\begin{align}
\dot\phi_{\mathbf{q}}(s)
=
-\nabla_{\boldsymbol \theta} \mathcal{L}(\phi_{\mathbf{q}}(s); \mathbf{q}),
\qquad
\phi_{\mathbf{q}}(0) = {\boldsymbol \theta}_{\mathrm{ref}},
\qquad s \in [0, T].
\notag 
\end{align}
Here $\phi_{\mathbf{q}}(0) = \theta_{\mathrm{ref}}$ is the starting point of fine-tuning, $\phi_{\mathbf{q}}(T)$ is the parameter at time $T$, and $s$ is the continuous gradient-flow time variable. By the chain rule and Eq.\eqref{eq:prelim-rho}, we have 
\begin{align}
\tfrac{\mathrm d}{\mathrm ds}\mathcal{L}(\phi_{\mathbf{q}}(s); \mathbf{q})
= -\|\nabla_{\boldsymbol{\theta}} \mathcal{L}(\phi_{\mathbf{q}}(s); \mathbf{q})\|^2 = - \rho(\boldsymbol{\theta}, \mathbf q) \mathcal L(\boldsymbol{\theta}, \mathbf q), \quad \phi_{\mathbf{q}}(0) = \theta_{\mathrm{ref}}. \notag 
\end{align}
Solving this ordinary differential equation yields
\begin{align}\textstyle{
\mathcal{L}(\phi_{\mathbf{q}}(T); \mathbf{q})
=
\mathcal{L}(\theta_{\mathrm{ref}}; \mathbf{q}) \cdot
\exp \left(-\int_0^T
\rho(\phi_{\mathbf{q}}(s); \mathbf{q}) \mathrm ds\right),}
\label{eq:prelim-traj-decay}
\end{align}
which implies the integral $\int_0^T
\rho(\phi_{\mathbf{q}}(s); \mathbf{q}) \mathrm ds$ plays an important role in controlling the loss decay.

\section{Proposed Method}
In this section, we present the methodology of LARK. Specifically,
\S\ref{sec:setup} establishes that the time-integrated learnability
objective in Eq.~\eqref{eq:prelim-traj-decay} can be controlled by
the anchor-time learnability rate $\rho(\cdot)$ evaluated at the
initial student model.
\S\ref{sec:method-strategy} then approximately optimizes
$\rho(\boldsymbol{\theta}_{\mathrm{ref}}; \mathbf{q})$ through local
linearization around the uniform prior, develops a forward-pass
proxy $\hat{g}_k$ that estimates the local gradient of $\rho$
without backward propagation, and bounds the residual errors by a
$\chi^2$-regularization term to preserve generalization and prevent
reward hacking of the learnability criterion.
\S\ref{sec:algorithm} combines these pieces into a closed-form,
budget-parameterized selection rule with no hyperparameter to tune.
\subsection{Estimating the Learnability Objective with Anchor-Relative Condition}\label{sec:setup}
As shown in Eq.~\eqref{eq:prelim-traj-decay}, selecting trajectories that are \emph{learnable} by the student model can be formulated as optimizing the time-integrated learnability objective $\max_{\mathbf{q} \in \Delta^K} \int_0^T \rho(\phi_{\mathbf{q}}(s); \mathbf{q})\mathrm ds$, which favors trajectory distributions under which the post-training loss decays rapidly during SFT. However, directly optimizing this objective is infeasible, since it requires tracking the factor $\rho$ along the entire SFT optimization trajectory. To address this challenge, we introduce the following structural condition, which allows the learnability objective to be approximated from a fixed anchor model.
\begin{condition}
\label{ass:anchor-bound}
We say the model satisfies the anchor-relative condition if there exists an absolute constant $\kappa > 0$ such that, for every $\mathbf{q} \in \Delta^K$ and every $s \in [0,T]$,  $\rho(\phi_{\mathbf{q}}(s);\mathbf{q}) \geq \kappa \rho(\boldsymbol{\theta}_{\mathrm{ref}}; \mathbf{q})$.
\end{condition}
Under Condition~\ref{ass:anchor-bound}, the learnability rate evaluated at the initial student parameters $\boldsymbol{\theta}_{\mathrm{ref}}$ provides a tractable proxy for the full time-integrated objective. This anchor-relative condition is consistent with the lazy fine-tuning behavior often observed in over-parameterized LLMs during SFT. In this regime, the model parameters remain close to the initialization $\boldsymbol{\theta}_{\mathrm{ref}}$, and consequently the learnability rate $\rho$ remains close to its anchor-time value $\rho(\boldsymbol{\theta}_{\mathrm{ref}}; \mathbf{q})$. We provide a theoretical justification of this condition under the neural tangent kernel regime~\citep[NTK;][]{jacot2018neural} in Appendix~\ref{app:ntk-justification}, together with empirical validation in Appendix~\ref{app:empirical-verification}.

The following proposition then follows naturally; we defer its proof to Appendix~\ref{app:proof-anchor-control}.
\begin{proposition}
\label{prop:anchor-control}
Under Condition~\ref{ass:anchor-bound}, for every $\mathbf{q} \in \Delta^K$, the post-training loss of the student model satisfies $ \mathcal{L}(\phi_{\mathbf{q}}(T);\mathbf{q}) \leq \mathcal{L}(\boldsymbol{\theta}_{\mathrm{ref}};\mathbf{q}) \cdot \exp\bigl(-\kappa T \cdot \rho(\boldsymbol{\theta}_{\mathrm{ref}}; \mathbf{q})\bigr).$
\end{proposition}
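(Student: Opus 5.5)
The plan is to start from the exact gradient-flow decay identity in Eq.~\eqref{eq:prelim-traj-decay} and lower-bound the exponent using Condition~\ref{ass:anchor-bound}.

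\textbf{Step 1 (decay identity).} Fix $\mathbf{q}\in\Delta^K$. By Eq.~\eqref{eq:prelim-traj-decay},
\[
\mathcal{L}(\phi_{\mathbf{q}}(T);\mathbf{q})=\mathcal{L}(\boldsymbol{\theta}_{\mathrm{ref}};\mathbf{q})\exp\Bigl(-\int_0^T\rho(\phi_{\mathbf{q}}(s);\mathbf{q})\,\mathrm ds\Bigr).
\]

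\textbf{Step 2 (pointwise bound).} Condition~\ref{ass:anchor-bound} gives $\rho(\phi_{\mathbf{q}}(s);\mathbf{q})\ge\kappa\,\rho(\boldsymbol{\theta}_{\mathrm{ref}};\mathbf{q})$ for every $s\in[0,T]$. The right-hand side does not depend on $s$, so integrating over $[0,T]$ gives
\[
\int_0^T\rho(\phi_{\mathbf{q}}(s);\mathbf{q})\,\mathrm ds\ge\kappa T\rho(\boldsymbol{\theta}_{\mathrm{ref}};\mathbf{q}).
\]

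\textbf{Step 3 (monotonicity).} The map $u\mapsto e^{-u}$ is decreasing, and $\mathcal{L}(\boldsymbol{\theta}_{\mathrm{ref}};\mathbf{q})\ge0$ because each $\ell$ is an average of negative log-probabilities and the $q_k$ are nonnegative. Substituting the bound from Step~2 into Step~1 therefore gives the claimed inequality.

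\textbf{Main obstacle: the degenerate case $\mathcal{L}=0$.} The only delicate point is the validity of Eq.~\eqref{eq:prelim-traj-decay} itself. Its derivation divides by $\mathcal{L}$, so it needs $\mathcal{L}(\phi_{\mathbf{q}}(s);\mathbf{q})>0$ along the whole flow. Strictly positive token probabilities from the softmax give this at $\boldsymbol{\theta}_{\mathrm{ref}}$.

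To cover the flow, I would argue directly from $\frac{\mathrm d}{\mathrm ds}\mathcal{L}=-\|\nabla\mathcal{L}\|^2\le0$. If $\mathcal{L}$ ever reaches $0$, it stays at $0$ from then on, since it cannot increase and is bounded below by $0$. In that case the inequality holds trivially, because its right-hand side is nonnegative.

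Otherwise $\mathcal{L}>0$ on all of $[0,T]$. Then $\log\mathcal{L}$ is differentiable, and integrating $\frac{\mathrm d}{\mathrm ds}\log\mathcal{L}=-\rho$ recovers Eq.~\eqref{eq:prelim-traj-decay}, so Steps~1--3 apply.

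We also assume the gradient flow exists on $[0,T]$, as the setup implicitly does. Then $s\mapsto\rho(\phi_{\mathbf{q}}(s);\mathbf{q})$ is continuous wherever $\mathcal{L}>0$, so the integrals in Steps~1 and~2 are well defined.
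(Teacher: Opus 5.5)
Your proof is correct and follows the paper's argument: integrate the pointwise bound from Condition~\ref{ass:anchor-bound} over $[0,T]$, substitute into the decay identity~\eqref{eq:prelim-traj-decay}, and use the monotonicity of $x\mapsto e^{-x}$ together with $\mathcal{L}(\boldsymbol{\theta}_{\mathrm{ref}};\mathbf{q})\ge 0$. Your extra check that the identity is valid along the flow (the case where $\mathcal{L}$ hits $0$) is a harmless refinement the paper omits; under finite logits the softmax cross-entropy is strictly positive, so that case never actually arises.
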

Proposition~\ref{prop:anchor-control} shows that, under the anchor-relative condition, the decay of the SFT loss can be controlled by the learnability rate evaluated at the anchor model $\boldsymbol{\theta}_{\mathrm{ref}}$. Therefore, instead of directly optimizing the intractable trajectory-wide objective $\int_0^T \rho(\phi_{\mathbf{q}}(s); \mathbf{q}) \mathrm ds$, we can optimize the anchor-time surrogate $\max_{\mathbf{q} \in \Delta^K} \rho(\boldsymbol{\theta}_{\mathrm{ref}}; \mathbf{q})$ to encourage faster loss decay for the student model.

\subsection{Approximately Optimizing Learnability $\rho(\boldsymbol{\theta}_{\mathrm{ref}}; \mathbf{q})$ through Local Linearization}\label{sec:method-strategy}
In practice, directly optimizing $\rho(\boldsymbol{\theta}_{\mathrm{ref}}; \mathbf{q})$ is problematic. As shown in Appendix~\ref{app:quasiconvex}, this objective induces a quasi-convex maximization problem over the simplex $\Delta^K$, whose optimum is attained at a vertex of the simplex. Consequently, exact maximization would lead to a degenerate solution that places all probability mass on a single trajectory, thereby ``hacking'' the learnability criterion instead of producing a useful training distribution.

This observation motivates treating the learnability criterion $\rho$ as a \emph{local} indicator around the uniform distribution $\mathbf p = (\tfrac1K, \cdots, \tfrac1K) \in \mathbb R^K$ which corresponds to standard SFT without trajectory selection. Rather than globally maximizing $\rho(\boldsymbol{\theta}_{\mathrm{ref}}; \mathbf{q})$, we restrict attention to distributions $\mathbf{q}$ in a neighborhood of $\mathbf{p}$ and locally linearize the objective around $\mathbf{p}$. The first-order Taylor expansion gives
\begin{align} \rho(\boldsymbol{\theta}_{\mathrm{ref}}; \mathbf{q}) - \rho(\boldsymbol{\theta}_{\mathrm{ref}}; \mathbf{p}) &= \textstyle{\langle \nabla_{\mathbf q} \rho(\boldsymbol{\theta}_{\mathrm{ref}}; \mathbf{q}), \mathbf q - \mathbf p \rangle + o(\|\mathbf q - \mathbf p\|_2)} \notag \\
&= \textstyle{\sum_{k=1}^K} \underbrace{\tfrac{\partial}{\partial q_k} \rho(\boldsymbol{\theta}_{\mathrm{ref}}; \mathbf{q})\mid_{\mathbf q = \mathbf p}}_{g_k^*} \cdot ( q_k -  p_k) + \underbrace{o(\|\mathbf q - \mathbf p\|_2)}_{R_2(\mathbf p, \mathbf q)}. \label{eq:1} 
\end{align}

Two steps remain for optimizing Eq.~\eqref{eq:1}. First, we need to estimate the first-order derivative $g_k^*$ in a computationally efficient manner. Second, we need to bound the residual $R_2(\mathbf p, \mathbf q)$ carefully. 

\subsubsection{Estimating gradient $g_k^*$}
According to the definition of $\rho$, $g_k^* = \tfrac{\partial}{\partial q_k} \rho(\boldsymbol{\theta}_{\mathrm{ref}}; \mathbf{q}) = \tfrac{\partial}{\partial q_k} \tfrac{\|\sum_i q_i \mathbf g_i\|^2}{\sum_i q_i \ell_i}$, where $\mathbf g_i = \nabla_{\boldsymbol{\theta}} \ell_i$ and $\ell_i = \ell(\boldsymbol{\theta}, \mathbf y_i)$. With detailed derivations deferred to Appendix~\ref{app:aux-lemmas}, we can write
\begin{align}
g_k^* = \frac{2\mathbf g_k^\top \sum_iq_i \mathbf g_i}{\sum_i q_i \ell_i} - \frac{\|\sum_iq_i \mathbf g_i\|^2}{\sum_j q_j \ell_k} \cdot \frac{\ell_k}{\sum_i q_i \ell_i} \bigg|_{\mathbf q = \mathbf p} = \frac{2\mathbf g_k^\top \sum_i \mathbf g_i}{\sum_i \ell_i} - \frac{\|\sum_i \mathbf g_i\|^2}{\sum_i \ell_i} \cdot \frac{\ell_k}{\sum_i \ell_i}, \label{eq:2}
\end{align}
where the second equality follows from $q_k = p_k = \tfrac1K$. Based on Eq.~\eqref{eq:2}, we make two observations. First, in many practical overparameterized networks, the per-trajectory gradients $\mathbf g_k$ are nearly orthogonal in high-dimensional parameter space, i.e., $\mathbf g_k^\top \mathbf g_j \approx \delta_{jk} \|\mathbf g_k\|^2$. Second, for the single-trajectory $\rho$-factor defined in Eq.~\eqref{eq:prelim-rho}, denoted by $\rho_k^* = \|\mathbf g_k\|^2 / \ell_k$, the gradient norm can be written as $\|\mathbf g_k\|^2 = \rho_k^* \ell_k$. Plugging these observations into Eq.~\eqref{eq:2}, $g_k^*$ can be approximated by
\begin{align}
    g_k^* \approx \frac{2\|\mathbf g_k\|^2}{\sum_i \ell_i} - \frac{\sum_j \|\mathbf g_j\|^2}{\sum_i \ell_i} \cdot \frac{\ell_k}{\sum_i \ell_i} = \frac{\ell_k}{\sum_i \ell_i} \left(2\rho_k^* - \frac{\sum_i \rho_i^* \ell_i}{\sum_i\ell_i}\right).
\end{align}

In practice, calculating $\rho_k^*$ requires backward propagation to compute $\|\mathbf g_k\|^2$, which is computationally expensive at selection time. Fortunately, we can avoid this gradient computation by leveraging the softmax structure of the last layer in next-token prediction, as stated in the following lemma:
\begin{lemma}[Forward-pass $\rho_k^*$ estimation, informal] \label{lm:1}
For any $k \in [K]$, let $\boldsymbol \pi_t^k \in \Delta^{|\mathcal V|}$ be the probability vector produced by the student model for the $t$-th token, conditioned on the question $\mathbf x$ and the previous response tokens $\mathbf y_{<t}^k$, and let $\boldsymbol \delta(y_t^k) \in \Delta^{|\mathcal V|}$ be the one-hot vector corresponding to the ground-truth token from the teacher model. 
Let 
$\hat \rho_k \triangleq \frac{\sum_t \|\boldsymbol \pi_t^k - \boldsymbol \delta(y_t^k)\|^2}{\ell_k}$,
then under standard structural conditions, there exist absolute positive constants $C_1, C_2$ where $\hat \rho_k \in [C_1 \rho_k^*, C_2 \rho_k^*]$.
\end{lemma}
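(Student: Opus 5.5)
The plan is to reduce the parameter-gradient norm $\|\mathbf g_k\|^2$ to the logit-gradient norms $\|\boldsymbol\pi_t^k-\boldsymbol\delta(y_t^k)\|^2$ through the chain rule at the softmax output layer. The ``standard structural conditions'' will be made concrete as two spectral assumptions: a bounded-and-well-conditioned Jacobian condition, and a near-orthogonality condition across token positions. The second mirrors the near-orthogonality already used for per-trajectory gradients before Eq.~\eqref{eq:2}.

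The key identity is the following. Write $\mathbf z_t^k=\mathbf z_t(\boldsymbol\theta;\mathbf x,\mathbf y^k_{<t})\in\mathbb R^{|\mathcal V|}$ for the logits and $\boldsymbol\pi_t^k=\mathrm{softmax}(\mathbf z_t^k)$. Then $\nabla_{\mathbf z}\bigl(-\log \pi_t^k(y_t^k)\bigr)=\boldsymbol\pi_t^k-\boldsymbol\delta(y_t^k)\triangleq \mathbf e_t^k$. With $J_t^k=\partial \mathbf z_t^k/\partial\boldsymbol\theta$, this gives
\begin{align}
\mathbf g_k=\tfrac{1}{|a_k|}\textstyle\sum_{t} (J_t^k)^\top \mathbf e_t^k,\qquad
\|\mathbf g_k\|^2=\tfrac{1}{|a_k|^2}\Bigl(\textstyle\sum_t \|(J_t^k)^\top\mathbf e_t^k\|^2+\sum_{t\neq t'}\langle (J_t^k)^\top\mathbf e_t^k,(J_{t'}^k)^\top\mathbf e_{t'}^k\rangle\Bigr). \notag
\end{align}
The first condition is that on the relevant subspace (vectors summing to zero, since $\mathbf 1^\top\mathbf e_t^k=0$) the singular values of $(J_t^k)^\top$ lie in $[\sigma_{\min},\sigma_{\max}]$, uniformly in $t,k$. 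This gives $\sigma_{\min}^2\|\mathbf e_t^k\|^2\le\|(J_t^k)^\top\mathbf e_t^k\|^2\le\sigma_{\max}^2\|\mathbf e_t^k\|^2$; note that $J_t^k$ includes the last-layer weights, so this is the NTK-style conditioning also invoked in Appendix~\ref{app:ntk-justification}. The second condition is that the cross terms are a controlled fraction of the diagonal: $\bigl|\sum_{t\ne t'}\langle\cdot,\cdot\rangle\bigr|\le\epsilon\sum_t\|(J_t^k)^\top\mathbf e_t^k\|^2$ for some $\epsilon<1$. Combining the two yields
\begin{align}
\tfrac{(1-\epsilon)\sigma_{\min}^2}{|a_k|^2}\textstyle\sum_t\|\mathbf e_t^k\|^2\le\|\mathbf g_k\|^2\le\tfrac{(1+\epsilon)\sigma_{\max}^2}{|a_k|^2}\sum_t\|\mathbf e_t^k\|^2. \notag
\end{align}

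Dividing by $\ell_k$ gives $\rho_k^*$ sandwiched by multiples of $\hat\rho_k/|a_k|^2$. The factor $|a_k|^2$ must then be handled. I would either fold a length normalization into the formal definition of $\hat\rho_k$, which is probably how the formal version is stated, or assume trajectory lengths lie in a bounded range $[L_{\min},L_{\max}]$ so that the factor is absorbed into $C_1,C_2$. Inverting the sandwich then gives $\hat\rho_k\in[C_1\rho_k^*,C_2\rho_k^*]$ with, for example, $C_1=|a_k|^2/((1+\epsilon)\sigma_{\max}^2)$ and $C_2=|a_k|^2/((1-\epsilon)\sigma_{\min}^2)$, up to the length normalization. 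The constants are independent of $k$, which is what matters for ranking trajectories.

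The main obstacle is the cross-token term. Gradients at different positions share parameters and are genuinely correlated, so $\epsilon<1$ is an assumption and not something provable in general. I would first try to justify it under an NTK or random-features heuristic, in which $\langle J_t^\top \mathbf u,J_{t'}^\top\mathbf v\rangle=\mathbf u^\top K_{tt'}\mathbf v$ with a kernel whose off-diagonal blocks are dominated by the diagonal ones. Failing that, I would state it explicitly as a structural condition, mirroring the paper's $\mathbf g_k^\top\mathbf g_j\approx\delta_{jk}\|\mathbf g_k\|^2$ observation, and rely on the empirical validation in the appendix.
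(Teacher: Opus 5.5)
Your route is genuinely different from the paper's. The paper splits $\boldsymbol\theta=(\mathbf W_{\mathrm{out}},\boldsymbol\theta_{\mathrm{rest}})$ and uses the exact head gradient $\nabla_{\mathbf W_{\mathrm{out}}}\ell_k=\tfrac{1}{|a_k|}\boldsymbol\Delta^\top\mathbf H$.

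\begin{itemize}
\item For the lower bound on $\|\mathbf g_k\|^2$ it uses the head alone, through a length-independent matrix-cosine condition $\|\boldsymbol\Delta^\top\mathbf H\|_F^2\ge\zeta_-\|\boldsymbol\Delta\|_F^2\|\mathbf H\|_F^2$ together with $\|\mathbf h_t\|\ge C_-$.
\item For the upper bound it uses Cauchy--Schwarz on the head and a sequence-level bound $\lambda_{\max}(\mathbf J_H\mathbf J_H^\top)\le\Lambda$ on the backbone, so no cancellation assumption is needed.
\end{itemize}

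Your per-token conditioning is nearly free. The head block of $(J_t^k)^\top\mathbf e$ is $\mathbf e\,\mathbf h_t^\top$, so $\sigma_{\min}\ge C_-$ once hidden-state norms are bounded below. The substantive difference is how cross-token terms are treated. You assume they nearly cancel. The paper's $\zeta_-$ assumes they add coherently; in your regime that cosine would decay like $1/|a_k|$.

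This difference is what goes wrong in your last step. Under your conditions $\|\mathbf g_k\|^2\asymp\mathrm{Brier}(\mathbf y_k)/|a_k|$, whereas the paper's conditions give $\|\mathbf g_k\|^2\asymp\mathrm{Brier}(\mathbf y_k)$.

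The formal version does not absorb the normalization you need. Like Algorithm~\ref{alg:lark}, it uses $\hat\rho_k=\sum_t\|\boldsymbol\delta_t\|^2/\sum_t\ell_t=\mathrm{Brier}(\mathbf y_k)/\ell_k$. That removes one factor of $|a_k|$ from the main-text expression, whereas your scaling needs two. So under your assumptions the proxy LARK actually computes satisfies $\hat\rho_k\asymp|a_k|\,\rho_k^*$.

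Consequently your $C_1,C_2$ necessarily carry a power of $|a_k|$. Your own displayed constants do, which contradicts your claim that they are independent of $k$.

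The bounded-length fallback does give a formally true sandwich. However, $C_2/C_1$ is then of order $(L_{\max}/L_{\min})^2$, or $L_{\max}/L_{\min}$ for the formal $\hat\rho_k$. That means the proxy is biased toward long trajectories, which defeats its purpose of ranking candidates of very different lengths.

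To obtain length-uniform constants for the $\hat\rho_k$ that LARK computes, replace cross-token near-orthogonality in your lower bound with a coherence condition like the paper's $\zeta(\mathbf y_k)\ge\zeta_-$.
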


Lemma~\ref{lm:1} suggests that $\rho_k^*$ can be estimated in a well-controlled manner by the forward-pass quantity $\hat{\rho}_k$, without explicitly computing gradients. We defer the detailed proof to Appendix~\ref{app:proof-lemma1}. Substituting $\hat{\rho}_k$ into Eq.~\eqref{eq:2} yields our practical estimator of the local gradient:
\begin{align}
\hat g_k = \frac{\ell_k}{\sum_i \ell_i} \left(2\hat \rho_k - \frac{\sum_i \hat \rho_i \ell_i}{\sum_i\ell_i}\right), \text{ where }\hat \rho_k \triangleq \frac{\sum_t \|\boldsymbol \pi_t^k - \boldsymbol \delta(y_t^k)\|^2}{\ell_k}.
\end{align}

\subsubsection{Bounding residuals $R(\mathbf p, \mathbf q)$}
We next control the residual errors introduced by the local linearization and the gradient estimation. More precisely, substituting $g_k^*$ with $\hat g_k$ in Eq.~\eqref{eq:1} yields
\begin{align}
    \rho(\boldsymbol{\theta}_{\mathrm{ref}}; \mathbf{q}) - \rho(\boldsymbol{\theta}_{\mathrm{ref}}; \mathbf{p}) = \textstyle{\sum_{k=1}^K} \hat g_k \cdot ( q_k -  p_k) + 
    \underbrace{\textstyle{\sum_{k=1}^K} (g_k^* - \hat g_k) \cdot ( q_k -  p_k)}_{R_1(\mathbf p, \mathbf q)} + \underbrace{o(\|\mathbf q - \mathbf p\|_2)}_{R_2(\mathbf p, \mathbf q)}, \notag 
\end{align}
where $R_1$ captures the error from estimating the true local gradient $g_k^*$ by $\hat g_k$, and $R_2$ captures the higher-order error from the Taylor expansion. The following lemma shows that both terms can be controlled by the $\chi^2$ distance from the uniform distribution.

\begin{lemma}[$\chi^2$ controls both error terms, informal]
\label{lm:chi2_bound}
Under standard structural conditions, there exist constants $\alpha_1, \alpha_2 > 0$ such that for all $\mathbf{q} \in \Delta^K$,
\begin{align}
|R_1(\mathbf{p}, \mathbf{q})| \leq \alpha_1 \chi^2(\mathbf{q}\|\mathbf{p}) + 1, \qquad |R_2(\mathbf{p}, \mathbf{q})| \leq \alpha_2 \chi^2(\mathbf{q}\|\mathbf{p}).
\label{eq:chi2-bound}
\end{align}
\end{lemma}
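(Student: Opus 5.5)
The plan is to reduce both bounds to the elementary identity relating the Euclidean and $\chi^2$ geometry around the uniform prior. Since $p_k = \tfrac1K$ for all $k$,
\begin{align}
\chi^2(\mathbf q\|\mathbf p) = \textstyle{\sum_{k=1}^K} \frac{(q_k - p_k)^2}{1/K} = K\|\mathbf q - \mathbf p\|_2^2, \notag
\end{align}
so any quantity bounded by $C\|\mathbf q-\mathbf p\|_2^2$ is bounded by $(C/K)\chi^2(\mathbf q\|\mathbf p)$. The "standard structural conditions" I would make explicit are three: the per-trajectory losses are bounded away from zero, $\ell_k \ge \ell_{\min} > 0$; the losses are bounded above and the gradients are uniformly bounded, $\ell_k \le \ell_{\max}$ and $\|\mathbf g_k\| \le G$ at $\boldsymbol{\theta}_{\mathrm{ref}}$; and the two-sided estimate of Lemma~\ref{lm:1}, $\hat\rho_k \in [C_1\rho_k^*, C_2\rho_k^*]$, holds. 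Under the first two, every $g_k^*$ and $\hat g_k$ is bounded by a constant $M$.

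For $R_1$, apply Young's inequality termwise: $|(g_k^*-\hat g_k)(q_k-p_k)| \le \tfrac{1}{2\lambda}(g_k^*-\hat g_k)^2 p_k + \tfrac{\lambda}{2}(q_k-p_k)^2/p_k$. Summing over $k$ gives
\begin{align}
|R_1| \le \tfrac{1}{2\lambda}\textstyle{\sum_k} p_k (g_k^*-\hat g_k)^2 + \tfrac{\lambda}{2}\chi^2(\mathbf q\|\mathbf p). \notag
\end{align}
The first term is an average of squared gradient-estimation errors. Two sources of error enter it: the near-orthogonality approximation $\mathbf g_k^\top\mathbf g_j \approx \delta_{jk}\|\mathbf g_k\|^2$, and the constant-factor error from Lemma~\ref{lm:1}. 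Both are bounded by a constant depending on $C_1, C_2, G, \ell_{\min}, \ell_{\max}$. I would choose $\lambda$, equivalently rescale the gradient error by a normalization constant, so that this term is at most $1$; then $\alpha_1 = \lambda/2$. This explains the additive $+1$ in the statement. The main obstacle is making the gradient-error term explicitly at most $1$. Here I would state an assumption that the average squared error $\sum_k p_k(g_k^*-\hat g_k)^2$ is bounded by a constant $\varepsilon^2$, and take $\lambda = \varepsilon^2/2$.

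For $R_2$, I would not rely on little-$o$. Instead I would use Taylor's theorem with an integral remainder: $R_2 = \tfrac12(\mathbf q-\mathbf p)^\top \nabla^2_{\mathbf q}\rho(\boldsymbol{\theta}_{\mathrm{ref}};\tilde{\mathbf q})(\mathbf q-\mathbf p)$ for some $\tilde{\mathbf q}$ on the segment, which lies in the simplex by convexity. Next I would compute the Hessian of $\rho(\mathbf q) = N(\mathbf q)/D(\mathbf q)$, where $N = \|\sum_i q_i\mathbf g_i\|^2$ is quadratic and $D = \sum_i q_i\ell_i$ is linear. The Hessian has the form $\nabla^2 N/D - (\nabla N\,\nabla D^\top + \nabla D\,\nabla N^\top)/D^2 + 2N\nabla D\nabla D^\top/D^3$. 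On the simplex $D \ge \ell_{\min}$, $\|\nabla N\| \lesssim G^2\sqrt K$, $\|\nabla^2 N\| \le 2KG^2$ (it is the Gram matrix), and $\|\nabla D\| \le \sqrt K \ell_{\max}$. Hence $\|\nabla^2\rho\|_{\mathrm{op}} \le H$ for an explicit constant $H$. Therefore $|R_2| \le \tfrac H2\|\mathbf q-\mathbf p\|^2 = \tfrac{H}{2K}\chi^2(\mathbf q\|\mathbf p)$, and $\alpha_2 = H/(2K)$. This step is routine once $\ell_{\min}>0$ is assumed. That assumption is the essential structural condition, since $\rho$ blows up as the weighted loss approaches zero.
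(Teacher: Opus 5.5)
Your proof is correct and follows the paper's own skeleton: Cauchy--Schwarz plus a tuned Young inequality for $R_1$ (the paper's choice $\gamma=(\alpha_1^{\mathrm{rms}})^2/2$ gives $\alpha_1=(\alpha_1^{\mathrm{rms}})^2/4$, exactly your $\varepsilon^2/4$), and, for $R_2$, a Lagrange-form Taylor remainder with a Hessian bound obtained from $D(\mathbf q)\ge\ell_{\min}$, converted via $\chi^2(\mathbf q\|\mathbf p)=K\|\mathbf q-\mathbf p\|^2$. The only difference is where the constants come from: you bound everything through $\max_k\|\mathbf g_k\|$, and the RMS error needs no extra assumption, since $\varepsilon=2M$ already works and $\lambda$ is free, so your ``main obstacle'' is not really one; the paper instead splits $\hat{\mathbf g}-\mathbf g^*$ into a proxy-substitution part and a diagonal-approximation part and uses Lemma~\ref{lm:1} together with the Gram bound $|G_{ij}|\le E_+\sqrt{\hat\rho_i\hat\rho_j\ell_i\ell_j}$, so that $\alpha_1,\alpha_2$ are expressed through the forward-pass quantities $\hat\rho_i,\ell_i$ rather than through gradient norms, which would require backward passes.
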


Lemma~\ref{lm:chi2_bound} shows that the residual errors are controlled by the $\chi^2$ distance between the selected distribution $\mathbf q$ and the uniform distribution $\mathbf p$, with explicit forward-computable constants $\alpha_1, \alpha_2$ given in Appendix~\ref{app:proof-lemma2}. This further supports our intuition that optimization should be performed in a local neighborhood around the original, non-selected data distribution $\mathbf p$, to preserve generalization and prevent the selector from drifting toward degenerate solutions. Combining the linearized objective with the residual bounds gives the following lower bound on the original learnability improvement; we defer its proof to Appendix~\ref{app:proof-thm1}.

\begin{theorem}[LARK objective, informal]
\label{thm:lark-objective}
There exist absolute constants $\alpha > 0$ and $C \in \mathbb R$ such that 
\begin{align}
\rho(\boldsymbol{\theta}_{\mathrm{ref}}; \mathbf{q}) - \rho(\boldsymbol{\theta}_{\mathrm{ref}}; \mathbf{p}) \ge \textstyle{\sum_{k=1}^K} \hat g_k \cdot ( q_k -  p_k) - \alpha \chi^2(\mathbf q \parallel \mathbf p) + C.
\end{align}
\end{theorem}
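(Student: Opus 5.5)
The argument is essentially a bookkeeping step on top of Lemma~\ref{lm:chi2_bound}. I will start from the exact decomposition stated just before the lemma. Once $g_k^*$ is replaced by $\hat g_k$, it reads
\begin{align}
\rho(\boldsymbol{\theta}_{\mathrm{ref}}; \mathbf{q}) - \rho(\boldsymbol{\theta}_{\mathrm{ref}}; \mathbf{p}) = \textstyle{\sum_{k=1}^K} \hat g_k ( q_k - p_k) + R_1(\mathbf p, \mathbf q) + R_2(\mathbf p, \mathbf q). \notag
\end{align}
This identity holds for every $\mathbf q \in \Delta^K$. It is an identity rather than an approximation, because $R_2$ is \emph{defined} as the remainder of the first-order Taylor expansion in Eq.~\eqref{eq:1}, and $R_1$ is defined as the discrepancy $\sum_k (g_k^* - \hat g_k)(q_k - p_k)$. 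I will first make this precise: $R_2$ is taken to be the exact remainder $\rho(\boldsymbol{\theta}_{\mathrm{ref}};\mathbf q) - \rho(\boldsymbol{\theta}_{\mathrm{ref}};\mathbf p) - \langle \nabla_{\mathbf q}\rho|_{\mathbf p}, \mathbf q - \mathbf p\rangle$, not merely an $o(\cdot)$ symbol. With this convention the decomposition is an algebraic tautology.

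Next I will lower bound each remainder by the negative of its absolute value, so that $R_1 + R_2 \ge -|R_1| - |R_2|$. Applying the two bounds of Lemma~\ref{lm:chi2_bound} gives $-|R_1| - |R_2| \ge -(\alpha_1 + \alpha_2)\chi^2(\mathbf q \parallel \mathbf p) - 1$. Setting $\alpha = \alpha_1 + \alpha_2 > 0$ and $C = -1$ then yields the claimed inequality uniformly over $\Delta^K$. The constants inherit whatever dependence $\alpha_1, \alpha_2$ have, which is on forward-computable quantities at $\boldsymbol{\theta}_{\mathrm{ref}}$ and not on $\mathbf q$. This matches the sense of ``absolute'' intended in the informal statement.

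The substantive content therefore sits entirely in Lemma~\ref{lm:chi2_bound}, which I take as given. The only step needing care is checking that the lemma's hypotheses hold simultaneously, so that both bounds can be used for the same $\mathbf q$. I would also verify that $R_2$ is controlled \emph{globally} on the simplex and not only near $\mathbf p$. The Taylor remainder is $o(\|\mathbf q - \mathbf p\|_2)$ only locally, so a global bound needs two facts. First, $\rho$ must be smooth on the compact simplex; this holds because $\sum_i q_i \ell_i \ge \min_i \ell_i > 0$, so the denominator is bounded away from zero. Second, $\|\mathbf q - \mathbf p\|_2^2 = \chi^2(\mathbf q \parallel \mathbf p)/K$ under the uniform $\mathbf p$.

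If I had to re-derive that piece, I would bound the Hessian of $\rho$ in $\mathbf q$ uniformly on $\Delta^K$ by some $H$. The second-order mean value form then gives $|R_2| \le \tfrac{H}{2}\|\mathbf q - \mathbf p\|^2 = \tfrac{H}{2K}\chi^2$. For $R_1$, Cauchy--Schwarz and the arithmetic-geometric mean inequality give $|R_1| \le \|g^* - \hat g\|\,\|\mathbf q - \mathbf p\| \le \tfrac{K}{4}\|g^* - \hat g\|^2 \cdot \tfrac{1}{K} \cdot \ldots + \chi^2$-type terms. This is exactly the source of the additive constant $1$ in the lemma. I expect the uniform Hessian bound to be the main obstacle, but it is already absorbed into the earlier lemma.
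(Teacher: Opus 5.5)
Your proposal is correct and follows the paper's proof essentially verbatim. The paper also starts from the exact decomposition of Lemma~\ref{lem:taylor-decomp}(iii), with $R_2$ defined as the exact Taylor remainder, lower-bounds $R_1+R_2$ by $-|R_1|-|R_2|$, and applies Lemma~\ref{lm:chi2_bound} to get $\alpha=\alpha_1+\alpha_2$ and $C=-1$. Your side sketch of the $R_1$ bound is garbled but not needed, since you take the lemma as given; the paper obtains the additive $1$ from Cauchy--Schwarz followed by Young's inequality with $\gamma=(\alpha_1^{\mathrm{rms}})^2/2$.
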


Theorem~\ref{thm:lark-objective} reduces the intractable learnability improvement to a tractable surrogate that depends on $\mathbf q$ only through (i) the inner product $\langle \mathbf q - \mathbf p,\, \hat{\mathbf g}\rangle$ between the trajectory deviation and the forward-pass score vector, and (ii) the $\chi^2$ distance to the uniform prior. The constant $C$ is independent of $\mathbf q$ and therefore does not affect the maximizer over $\Delta^K$. This reduction sets the stage for the closed-form selection rule developed in \S\ref{sec:algorithm}.
\subsection{LARK: Closed-form Solution for Fixed-Budget Trajectory Selection}
\label{sec:algorithm}

Theorem~\ref{thm:lark-objective} reduces the intractable learnability
improvement to a tractable $\chi^2$-regularized optimization based on
$\{\hat g_k\}$. In practice, many data selection algorithms operate in
the fixed-budget setting, where the agent needs to select $B \le K$
trajectories per question, meaning that the support size of $\mathbf q$
is $B$. This fixed-budget optimization admits a closed-form
solution, as stated in the following lemma.

\begin{lemma}[$B$-parameterized closed form]
\label{prop:b_param}
Sort $\hat g_1 \geq \cdots \geq \hat g_K$ and assume $1 \leq B < K$ with $\hat g_B > \hat g_{B+1}$. 
Under the uniform prior $p_k = 1/K$, the maximizer of optimization objective 
\begin{align}
\hat{\mathbf{q}} = \arg\textstyle{\max_{\mathbf{q}\in\Delta^K, \mathrm{supp}(\mathbf q) = B}} \left\{\textstyle\sum\nolimits_{k=1}^{K} \hat g_k(q_k - p_k) - \tfrac{\tau(B)}{2}\chi^2(\mathbf{q}\|\mathbf{p}) \right\},
\end{align}
with regularizer $\tau(B) = \tfrac{1}{K}\sum_{j=1}^{B}(\hat g_j - \hat g_{B+1})$ takes the closed form $\hat q_i = (\hat g_i - \hat g_{B+1}) / \sum_{j=1}^{B}(\hat g_j - \hat g_{B+1})$ for $i \leq B$, and $\hat q_i = 0$ otherwise.
\end{lemma}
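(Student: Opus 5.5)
The approach is to rewrite the $\chi^2$-regularized objective as a strictly concave quadratic over the simplex. I would drop the support constraint, solve the relaxed problem by KKT, and then check that the relaxed maximizer has support of size exactly $B$. Since the relaxed feasible set contains the constrained one, that maximizer also solves the problem in Lemma~\ref{prop:b_param}.

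\emph{Step 1: reduce the objective to a quadratic.} Under $p_k = 1/K$ we have $\chi^2(\mathbf q\|\mathbf p) = K\sum_k (q_k - 1/K)^2 = K\|\mathbf q\|^2 - 1$ on $\Delta^K$. Also $\sum_k \hat g_k p_k$ does not depend on $\mathbf q$. So, up to additive constants, the objective equals
\begin{align*}
F(\mathbf q) = \langle \hat{\mathbf g}, \mathbf q\rangle - \tfrac{c}{2}\|\mathbf q\|^2, \qquad c \triangleq K\tau(B) = \textstyle\sum_{j=1}^{B}(\hat g_j - \hat g_{B+1}).
\end{align*}
The hypothesis $\hat g_1 \ge \hat g_B > \hat g_{B+1}$ gives $c>0$. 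Hence $F$ is strictly concave and has a unique maximizer on the compact convex set $\Delta^K$.

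\emph{Step 2: solve the relaxed problem on all of $\Delta^K$.} This is a projection-onto-the-simplex computation. The KKT conditions with multiplier $\lambda$ for $\sum_k q_k=1$ give $q_k = (\hat g_k - \lambda)_+/c$, where $\lambda$ is fixed by $\sum_k (\hat g_k-\lambda)_+ = c$. I will take $\lambda = \hat g_{B+1}$ and check each index range.
\begin{itemize}
\item For $k\le B$, $\hat g_k \ge \hat g_B > \hat g_{B+1}$, so $(\hat g_k-\lambda)_+ = \hat g_k - \hat g_{B+1} > 0$.
\item For $k \ge B+1$, the sorting gives $\hat g_k \le \hat g_{B+1}$, so $(\hat g_k-\lambda)_+ = 0$. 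Ties among the tail indices are harmless here.
\end{itemize}
Therefore $\sum_k(\hat g_k-\lambda)_+ = \sum_{j\le B}(\hat g_j-\hat g_{B+1}) = c$, so this $\lambda$ is exactly the right multiplier. Because $F$ is concave and the constraints are linear, the KKT conditions are sufficient. The unique relaxed maximizer is then $\hat q_i = (\hat g_i-\hat g_{B+1})/\sum_{j\le B}(\hat g_j-\hat g_{B+1})$ for $i\le B$, and $\hat q_i=0$ otherwise.

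\emph{Step 3: transfer optimality to the support-constrained problem.} The vector $\hat{\mathbf q}$ has exactly $B$ strictly positive entries, so it lies in the feasible set $\{\mathbf q\in\Delta^K:\mathrm{supp}(\mathbf q)=B\}$. That set is a subset of $\Delta^K$. Hence $\hat{\mathbf q}$ maximizes the constrained problem, and it is the unique maximizer by strict concavity of $F$.

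The main obstacle is the nonconvex cardinality constraint $\mathrm{supp}(\mathbf q)=B$. A direct attack would compare the optimal values over all $\binom{K}{B}$ supports and show the top-$B$ support wins. The relaxation avoids this combinatorial comparison. It works only because the particular choice of $\tau(B)$ makes the simplex threshold $\lambda$ land exactly at $\hat g_{B+1}$, and the gap assumption $\hat g_B>\hat g_{B+1}$ is what makes the support exactly $B$ rather than smaller.
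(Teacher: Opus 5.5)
Your proposal is correct and follows the paper's own route: it solves the $\chi^2$-regularized quadratic on all of $\Delta^K$ via KKT (the simplex projection) and places the threshold at $\hat g_{B+1}$. Your multiplier $\lambda=\hat g_{B+1}$ differs from the paper's $\hat g_{(B+1)}+\tau$ only because you rewrote $\chi^2(\mathbf q\|\mathbf p)$ as $K\|\mathbf q\|^2-1$ before forming the Lagrangian. Your Step 3 additionally makes explicit a point the paper leaves implicit: the relaxed maximizer has exactly $B$ positive entries, so it also (uniquely) solves the support-constrained problem.
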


\setlength{\columnsep}{18pt}
\setlength{\intextsep}{5pt}
\begin{wrapfigure}[24]{r}{0.47\textwidth}
\vspace{-\intextsep}
\removelatexerror
\begin{algorithm}[H]
\small
\DontPrintSemicolon
\SetAlCapFnt{\small\bfseries}
\SetInd{0.4em}{0.7em}
\setlength{\algomargin}{1.5em}
\caption{LARK: Learnability-grounded Anchor-time Ranking}
\label{alg:lark}
\KwIn{Candidates $\{\mathbf{y}_k\}_{k=1}^K$; reference $\pi_{\mathrm{ref}}$
with $\boldsymbol{\theta}_{\mathrm{ref}}$; budget $B$.}
\KwOut{
$\mathcal{L}(\boldsymbol{\theta};\hat{\mathbf{q}})
=\textstyle\sum\nolimits_{k=1}^{K}\hat{q}_k\,\ell(\boldsymbol{\theta};\mathbf{y}_k)$.}

\For{$k=1,\ldots,K$}{
  Forward pass of $\pi_{\mathrm{ref}}$ on $\mathbf{y}_k$; record
  $\boldsymbol{\pi}_t^k,p_t$ for $t\in\{1,\ldots,|a_k|\}$\;
  $\ell_k \leftarrow \tfrac{1}{|a_k|}\textstyle\sum\nolimits_t -\log p_t$\;
  $\hat{\rho}_k \leftarrow
   \dfrac{\sum_t\|\boldsymbol{\pi}_t^k-\boldsymbol{\delta}(y_t^k)\|^{2}}
         {\sum_t -\log p_t}$\;
}

\For{$k=1,\ldots,K$}{
  $\hat{g}_k \leftarrow
  \dfrac{\ell_k}{\sum_i \ell_i}
  \!\left(
  2\hat{\rho}_k -
  \dfrac{\sum_i \hat{\rho}_i\,\ell_i}{\sum_i \ell_i}
  \right)$\;
}

Sort and relabel: $\hat g_1 \ge \cdots \ge \hat g_K$\;

\For{$i=1,\ldots,K$}{
  \eIf{$i\leq B$}{
    $\hat{q}_i \leftarrow
    \dfrac{\hat{g}_i-\hat{g}_{B+1}}
          {\sum_{j=1}^{B}(\hat{g}_j-\hat{g}_{B+1})}$\;
  }{
    $\hat{q}_i \leftarrow 0$\;
  }
}

\Return $\mathcal{L}(\boldsymbol{\theta};\hat{\mathbf{q}})
=\textstyle\sum\nolimits_{k=1}^{K}\hat{q}_k\,\ell(\boldsymbol{\theta};\mathbf{y}_k)$\;
\end{algorithm}
\vspace{-\intextsep}
\end{wrapfigure}

Lemma~\ref{prop:b_param} gives a closed-form top-$B$ rule:
each selected trajectory is weighted by its score margin above
$\hat g_{B+1}$, and $\tau^*(B)$ is fully determined by $B$ with no
hyperparameter to tune. Algorithm~\ref{alg:lark} summarizes the full
procedure.

\begin{remark}[Soft vs.\ hard top-$B$]
\label{rem:soft-vs-hard}
LARK is a \emph{soft} top-$B$ rule: weights are proportional to score
margins above the threshold, rather than uniform $1/B$ as in hard
top-$B$ truncation. This hedges against ranking noise near the
boundary; the ablation in \S\ref{sec:rq2_ablation}
(Figure~\ref{fig:ablation_q2}) shows it outperforms uniform,
proportional, and softmax weighting.
\end{remark}

\begin{remark}[Heuristic reading of $\hat g_k$]
\label{rem:ghat-meaning}
$\hat g_k$ favors trajectories on which the student is
\emph{confidently wrong in a structured way}: $\ell_k$ is not small,
indicating that the student does not yet predict it well, yet the
residual mass $\|\boldsymbol\pi_t^k - \boldsymbol\delta(y_t^k)\|^2$
concentrates on a few near-miss tokens, providing a clear local
correction signal. Although $\hat g_k$ is a forward-pass surrogate
for the exact directional derivative $g_k^*$, the two remain tightly
aligned: Appendix~\ref{app:ghat-empirical} gives an explicit error
bound on $\|\hat{\mathbf g} - \mathbf g^*\|_{\mathbf p}$ and
verifies it on real student models, while the token-level signature
is verified in \S\ref{sec:rq4_why} (Figure~\ref{fig:token_level}).
\end{remark}

\section{Experiment}
\label{sec:experiments}

We evaluate LARK on reasoning distillation through three research
questions.
\textbf{RQ1} asks whether LARK outperforms heuristic, quality-based,
and student-side baselines under single-trajectory ($B{=}1$) and
multi-trajectory ($B{=}3$) supervision.
\textbf{RQ2} isolates the contribution of the trajectory score
$\hat g_k$ and the $\chi^2$-$B$ weighting rule by holding one fixed
and varying the other.
\textbf{RQ3} asks \emph{why} LARK works, by linking $\hat g_k$ to the
anchor-time learnability rate $\rho$, to the SFT loss trajectory, and
to downstream training utility, and by analyzing the token-level
structure of the trajectories LARK selects.
Section~\ref{sec:exp_setup} describes the shared experimental setup,
and Sections~\ref{sec:rq1_main}--\ref{sec:rq3_why} answer each
question in turn.

\subsection{Experimental Setup}
\label{sec:exp_setup}

\noindent\textbf{Candidate pool.}
We use a fixed set of $5{,}000$ math problems from
NuminaMath~\citep{li2024numinamath}, each paired with $33$
trajectories from $11$ teacher models ($3$ rollouts each); see
Appendix~\ref{app:exp_pool}. All methods share this pool, so any
performance difference reflects the selection rule alone.

\noindent\textbf{Selection budgets.}
We evaluate each method at $B{=}1$ (single trajectory per problem)
and $B{=}3$ (three trajectories). Baselines take the top-$B$ under
their score with uniform weights $1/B$; LARK applies the
$\chi^2$-$B$ rule of Lemma~\ref{prop:b_param} to obtain
weighted $\hat{\mathbf{q}}$ from $\hat g_k$ computed at the
reference student $\boldsymbol\theta_{\mathrm{ref}}$.

\noindent\textbf{Baselines.}
We compare LARK against seven methods (Appendix~\ref{app:exp_baselines})
spanning heuristics, quality-based selection, LLM-based evaluation,
and student-side forward-pass scoring: Random, Token Length$_{\max}$,
Rule-based Quality$_{\max}$, LLM-judged Quality$_{\max}$, GRAPE$_{\max}$~\citep{zhang2025grape}, Local
Naturalness$_{\max}$~\citep{just2025local}, and
RSR$_{\min}$~\citep{yang2026reasoning}.

\begin{table*}[t!]
\centering
\footnotesize
\caption{
Performance comparison under single-trajectory and multi-trajectory selection.
Values are Acc@5 percentages, with mean$\pm$std taken across three independent
decoding seeds at inference time.
}
\label{tab:main_weighted_math_merged}
\setlength{\tabcolsep}{3pt}

\resizebox{\textwidth}{!}{
\begin{tabular}{cl|ccccc|ccccc}
\toprule
\multirow{2}{*}{\textbf{Model}}
& \multirow{2}{*}{\textbf{Method}}
& \multicolumn{5}{c|}{\textbf{Budget $B=1$}}
& \multicolumn{5}{c}{\textbf{Budget $B=3$}} \\
\cmidrule(lr){3-7}
\cmidrule(lr){8-12}
& & \textbf{AIME} & \textbf{AMC} & \textbf{GPQA} & \textbf{MATH-500} & \textbf{Avg}
& \textbf{AIME} & \textbf{AMC} & \textbf{GPQA} & \textbf{MATH-500} & \textbf{Avg} \\
\midrule

\multirow{9}{*}{\rotatebox[origin=c]{90}{\textbf{Qwen-2.5-7B}}}
& BaseModel
  & \ms{18.89}{1.92} & \ms{47.79}{4.23} & \ms{39.56}{3.29} & \ms{56.47}{4.80} & \ms{40.68}{3.51}
  & \ms{18.89}{1.92} & \ms{47.79}{4.23} & \ms{39.56}{3.29} & \ms{56.47}{4.80} & \ms{40.68}{3.51} \\
& Random
  & \ms{21.11}{3.85} & \ms{51.00}{3.03} & \ms{56.90}{3.55} & \ms{64.67}{4.41} & \ms{48.42}{3.58}
  & \ms{24.44}{3.85} & \ms{61.45}{3.61} & \ms{59.76}{3.29} & \ms{69.67}{4.20} & \ms{53.83}{3.64} \\
& Token Length$_{\max}$
  & \ms{13.33}{3.33} & \ms{50.60}{3.61} & \ms{41.08}{3.04} & \ms{60.93}{4.10} & \ms{41.49}{3.52}
  & \ms{20.00}{3.33} & \ms{57.03}{4.23} & \ms{43.77}{3.29} & \ms{65.73}{4.40} & \ms{46.63}{3.81} \\
& Rule-based Quality$_{\max}$
  & \ms{21.11}{1.92} & \ms{56.63}{3.61} & \ms{48.15}{3.04} & \ms{65.27}{3.70} & \ms{47.79}{3.01}
  & \ms{28.89}{1.92} & \ms{67.07}{3.03} & \ms{53.54}{3.54} & \ms{70.73}{4.10} & \ms{55.06}{3.10} \\
& LLM-judged Quality$_{\max}$
  & \ms{22.22}{1.92} & \ms{56.22}{4.23} & \ms{50.34}{3.29} & \ms{65.80}{3.90} & \ms{48.65}{3.29}
  & \ms{26.67}{3.33} & \ms{62.25}{3.03} & \ms{53.03}{3.54} & \ms{69.60}{3.70} & \ms{52.89}{3.40} \\
& GRAPE$_{\max}$
  & \ms{16.67}{3.33} & \ms{55.02}{3.03} & \ms{49.33}{3.79} & \ms{64.67}{3.90} & \ms{46.42}{3.51}
  & \ms{25.56}{1.92} & \ms{63.05}{3.03} & \ms{55.05}{3.54} & \ms{71.00}{4.20} & \ms{53.66}{3.11} \\
& Local Naturalness$_{\max}$
  & \ms{23.33}{3.33} & \ms{57.83}{3.61} & \ms{51.18}{3.55} & \ms{65.80}{3.80} & \ms{49.54}{3.57}
  & \ms{27.78}{1.92} & \ms{64.26}{3.03} & \ms{56.23}{3.29} & \ms{70.87}{3.81} & \ms{54.78}{2.97} \\
& RSR$_{\min}$
  & \ms{24.44}{3.85} & \ms{59.44}{3.03} & \ms{52.19}{3.04} & \ms{67.33}{3.90} & \ms{50.85}{3.35}
  & \ms{32.22}{3.85} & \ms{66.27}{3.61} & \ms{56.73}{3.29} & \ms{71.47}{3.90} & \ms{56.67}{3.57} \\
\rowcolor{lightblue!100}
& \textbf{LARK (Ours)}
  & \bms{30.00}{3.33} & \bms{65.86}{3.03} & \bms{65.99}{3.55} & \bms{70.20}{3.82} & \bms{58.01}{3.42}
  & \bms{36.67}{3.33} & \bms{76.31}{3.03} & \bms{67.34}{3.29} & \bms{74.67}{3.45} & \bms{63.74}{3.27} \\

\midrule

\multirow{9}{*}{\rotatebox[origin=c]{90}{\textbf{Llama-3.2-3B}}}
& BaseModel
  & \ms{2.22}{1.92} & \ms{11.24}{4.23} & \ms{11.45}{2.04} & \ms{18.13}{3.21} & \ms{10.76}{2.78}
  & \ms{2.22}{1.92} & \ms{11.24}{4.23} & \ms{11.45}{2.04} & \ms{18.13}{3.21} & \ms{10.76}{2.78} \\
& Random
  & \ms{4.44}{3.85} & \ms{14.86}{3.68} & \ms{40.40}{2.81} & \ms{24.13}{3.45} & \ms{20.96}{3.35}
  & \ms{3.33}{3.33} & \ms{17.67}{3.03} & \ms{44.61}{3.29} & \ms{27.07}{3.80} & \ms{23.17}{3.36} \\
& Token Length$_{\max}$
  & \ms{3.33}{3.33} & \ms{16.47}{2.51} & \ms{38.22}{3.55} & \ms{24.53}{4.03} & \ms{20.64}{3.34}
  & \ms{2.22}{1.92} & \ms{16.06}{3.03} & \ms{38.72}{3.55} & \ms{25.07}{3.84} & \ms{20.52}{3.00} \\
& Rule-based Quality$_{\max}$
  & \ms{2.22}{3.85} & \ms{10.44}{3.03} & \ms{14.48}{2.87} & \ms{19.60}{3.93} & \ms{11.69}{3.29}
  & \ms{5.56}{3.85} & \ms{15.66}{3.61} & \ms{23.57}{2.78} & \ms{25.73}{3.23} & \ms{17.63}{3.28} \\
& LLM-judged Quality$_{\max}$
  & \ms{4.44}{1.92} & \ms{18.88}{3.68} & \ms{41.58}{3.04} & \ms{25.93}{3.51} & \ms{22.71}{2.98}
  & \ms{3.33}{3.33} & \ms{18.47}{3.03} & \ms{43.43}{3.03} & \ms{27.00}{3.50} & \ms{23.06}{3.22} \\
& GRAPE$_{\max}$
  & \ms{2.22}{1.92} & \ms{14.46}{3.19} & \ms{14.31}{2.78} & \ms{21.80}{3.90} & \ms{13.20}{2.84}
  & \ms{6.67}{3.33} & \ms{20.48}{3.19} & \ms{21.55}{2.28} & \ms{24.87}{3.80} & \ms{18.39}{3.11} \\
& Local Naturalness$_{\max}$
  & \ms{6.67}{3.33} & \ms{17.67}{4.23} & \ms{41.08}{2.28} & \ms{25.60}{3.22} & \ms{22.75}{3.26}
  & \ms{6.67}{3.33} & \ms{20.08}{3.03} & \ms{42.76}{3.04} & \ms{26.60}{3.34} & \ms{24.03}{3.17} \\
& RSR$_{\min}$
  & \ms{6.67}{3.33} & \ms{15.66}{2.41} & \ms{42.09}{2.54} & \ms{25.73}{3.81} & \ms{22.54}{3.02}
  & \ms{5.56}{1.92} & \ms{17.27}{3.03} & \ms{43.77}{3.04} & \ms{27.87}{3.92} & \ms{23.62}{2.90} \\
\rowcolor{lightblue!100}
& \textbf{LARK (Ours)}
  & \bms{8.89}{1.92} & \bms{19.28}{2.41} & \bms{48.32}{3.04} & \bms{28.67}{3.50} & \bms{26.29}{2.66}
  & \bms{8.89}{1.92} & \bms{22.09}{2.51} & \bms{54.55}{3.54} & \bms{31.00}{3.22} & \bms{29.13}{2.72} \\

\midrule

\multirow{9}{*}{\rotatebox[origin=c]{90}{\textbf{Qwen-2.5-1.5B}}}
& BaseModel
  & \ms{0.00}{0.00} & \ms{24.50}{4.23} & \ms{30.64}{2.04} & \ms{38.53}{3.21} & \ms{23.42}{2.36}
  & \ms{0.00}{0.00} & \ms{24.50}{4.23} & \ms{30.64}{2.04} & \ms{38.53}{3.21} & \ms{23.42}{2.36} \\
& Random
  & \ms{1.11}{1.92} & \ms{33.33}{2.51} & \ms{54.38}{3.55} & \ms{42.33}{4.03} & \ms{32.79}{2.96}
  & \ms{2.22}{1.92} & \ms{32.93}{4.23} & \ms{55.89}{3.04} & \ms{48.53}{4.20} & \ms{34.89}{3.28} \\
& Token Length$_{\max}$
  & \ms{0.00}{0.00} & \ms{25.30}{3.19} & \ms{34.51}{2.78} & \ms{42.40}{3.90} & \ms{25.55}{2.45}
  & \ms{2.22}{1.92} & \ms{25.70}{3.68} & \ms{42.09}{2.59} & \ms{44.20}{4.01} & \ms{28.55}{2.96} \\
& Rule-based Quality$_{\max}$
  & \ms{5.56}{1.92} & \ms{29.72}{3.03} & \ms{41.25}{2.87} & \ms{43.40}{3.93} & \ms{29.98}{2.85}
  & \ms{6.67}{3.33} & \ms{38.55}{3.19} & \ms{46.30}{2.28} & \ms{46.87}{3.80} & \ms{34.60}{3.11} \\
& LLM-judged Quality$_{\max}$
  & \ms{3.33}{3.33} & \ms{29.32}{2.51} & \ms{42.93}{3.31} & \ms{44.80}{3.82} & \ms{30.09}{3.21}
  & \ms{3.33}{3.33} & \ms{35.34}{3.03} & \ms{46.30}{3.04} & \ms{47.47}{3.45} & \ms{33.11}{3.21} \\
& GRAPE$_{\max}$
  & \ms{2.22}{1.92} & \ms{35.34}{3.68} & \ms{42.93}{2.81} & \ms{45.73}{3.45} & \ms{31.56}{2.90}
  & \ms{5.56}{1.92} & \ms{38.15}{1.84} & \ms{46.63}{3.09} & \ms{48.80}{3.74} & \ms{34.79}{2.57} \\
& Local Naturalness$_{\max}$
  & \ms{6.67}{3.33} & \ms{36.14}{2.41} & \ms{45.62}{2.54} & \ms{45.73}{3.81} & \ms{33.54}{3.02}
  & \ms{6.67}{3.33} & \ms{38.96}{4.23} & \ms{46.80}{2.54} & \ms{48.93}{3.61} & \ms{35.34}{3.42} \\
& RSR$_{\min}$
  & \ms{4.44}{1.92} & \ms{32.13}{3.68} & \ms{47.14}{3.04} & \ms{45.93}{3.51} & \ms{32.41}{2.98}
  & \ms{8.89}{3.85} & \ms{36.14}{3.61} & \ms{46.30}{2.78} & \ms{47.73}{3.23} & \ms{34.77}{3.28} \\
\rowcolor{lightblue!100}
& \textbf{LARK (Ours)}
  & \bms{7.78}{1.92} & \bms{40.56}{4.23} & \bms{63.13}{2.53} & \bms{49.33}{3.50} & \bms{41.09}{2.98}
  & \bms{10.00}{3.33} & \bms{42.57}{3.03} & \bms{66.67}{3.03} & \bms{53.33}{3.40} & \bms{43.14}{3.20} \\

\bottomrule
\end{tabular}}
\end{table*}

\noindent\textbf{Students, training, evaluation.}
We fine-tune \textbf{Qwen-2.5-7B}, \textbf{Qwen-2.5-1.5B}~\citep{yang2024qwen2},
and \textbf{Llama-3.2-3B}~\citep{grattafiori2024llama} via SFT, and
evaluate on AIME-2024~\citep{aops_aime}, AMC~\citep{maa_amc},
GPQA-Diamond~\citep{rein2024gpqa}, and
MATH-500~\citep{hendrycks2021measuring} using Acc@5 ($5$ samples per
problem, marked correct if any contains the gold answer). For each
method, we run inference under three independent decoding seeds using
the corresponding trained checkpoint and evaluation protocol for that
method; we report mean$\pm$std across these three decoding seeds, so
the reported variability reflects inference-time sampling rather than
training-seed noise. The Avg column in Table~\ref{tab:main_weighted_math_merged}
is the unweighted mean across the four benchmarks; per-benchmark
numbers are the primary basis of comparison. A train/eval
contamination audit appears in Appendix~\ref{app:contamination}.
Training and decoding details are in
Appendices~\ref{app:exp_impl}--\ref{app:exp_eval}.


\subsection{RQ1: Does LARK Outperform Existing Trajectory Selection Methods?}
\label{sec:rq1_main}

\begin{wrapfigure}{r}{0.49\textwidth}
\vspace{-0.4em}
\setlength{\intextsep}{0pt}
\setlength{\columnsep}{8pt}
\centering
\begin{subfigure}[t]{0.49\linewidth}
  \centering
  \includegraphics[width=\linewidth]{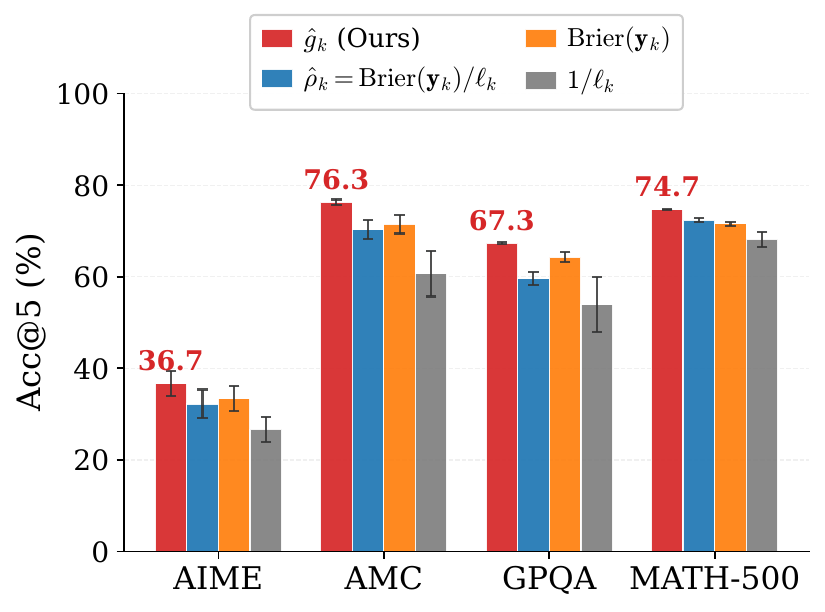}
  \caption{}
  \label{fig:ablation_q1}
\end{subfigure}
\hfill
\begin{subfigure}[t]{0.49\linewidth}
  \centering
  \includegraphics[width=\linewidth]{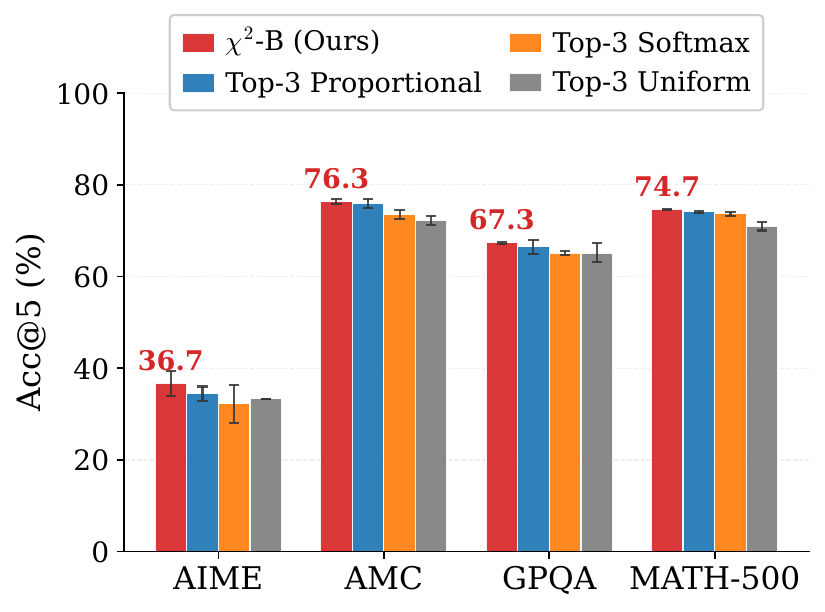}
  \caption{}
  \label{fig:ablation_q2}
\end{subfigure}
\caption{\textbf{Ablation of the two LARK components} on Qwen-2.5-7B
($B=3$). Values are Acc@5 percentages.
\textbf{(a)} Score ablation.
\textbf{(b)} Weighting ablation.}
\label{fig:ablation_q1q2}
\vspace{-0.4em}
\end{wrapfigure}

Table~\ref{tab:main_weighted_math_merged} compares LARK with all
baselines on three students. LARK achieves the best average Acc@5 in
all six model-budget settings. On Qwen-2.5-7B, it improves over the
strongest baseline RSR by $7.16$ points at $B{=}1$ and $7.07$ points
at $B{=}3$, with large gains on AMC and GPQA. The consistent gains
under both budgets suggest that LARK improves trajectory quality, not
only supervision size. The same trend holds for smaller students:
LARK reaches $41.09\%$ / $43.14\%$ on Qwen-2.5-1.5B and
$26.29\%$ / $29.13\%$ on Llama-3.2-3B under $B{=}1$ / $B{=}3$,
suggesting that the score is not tied to one student family or scale.
Appendix~\ref{app:budget_scaling} provides a budget-scaling analysis
for LARK across $B \in \{1,3,5,10,20\}$.

\subsection{RQ2: How Much Does Each Component Contribute?}
\label{sec:rq2_ablation}

We ablate the two design choices in LARK: the trajectory score and
the weighting rule.

\noindent\textbf{Trajectory score.}
Fixing the weighting to $\chi^2$-$B$, we compare four scores
(Figure~\ref{fig:ablation_q1}): the inverse loss
$1/\ell_k$, the Brier residual $\sum_t\|\boldsymbol\pi_t^k -
\boldsymbol\delta(y_t^k)\|^2$, the proxy
$\hat\rho_k$ of Lemma~\ref{lm:1}, and the induced score
$\hat g_k$. The inverse loss is the weakest, confirming that
likelihood alone does not characterize learnability; $\hat\rho_k$
already improves substantially by combining residual error with
loss; and $\hat g_k$ is the best across all four benchmarks, since
it captures the marginal contribution of each trajectory around
$\mathbf p$ rather than scoring trajectories in isolation.

\noindent\textbf{Weighting rule.}
Fixing the score to $\hat g_k$, we vary the weighting at $B{=}3$
(Figure~\ref{fig:ablation_q2}) over uniform,
score-proportional, softmax, and $\chi^2$-$B$ weights. The
$\chi^2$-$B$ rule consistently outperforms the alternatives, with
the largest gains on AIME (where score noise matters most) and on
MATH-500 (where the benefit accumulates over many problems). LARK's
improvement therefore comes not only from selecting high-$\hat g_k$
trajectories but also from assigning robust weights within the
selected support.


\noindent\textbf{Efficiency.}
LARK is forward-pass only and therefore does not require the
per-trajectory backward passes needed to compute the exact directional
derivative $g_k^*$. Its scoring cost matches the lowest-cost
student-side baselines, with full complexity analysis and wall-clock
measurements reported in Appendix~\ref{app:complexity}.

\subsection{RQ3: Why Does LARK Work?}
\label{sec:rq3_why}

We provide three pieces of evidence connecting $\hat g_k$ to the
quantities it is designed to control, then characterize what makes
the trajectories LARK selects informative.

\begin{figure}[t]
\centering
\begin{subfigure}[t]{0.32\linewidth}
  \centering
  \includegraphics[width=\linewidth]{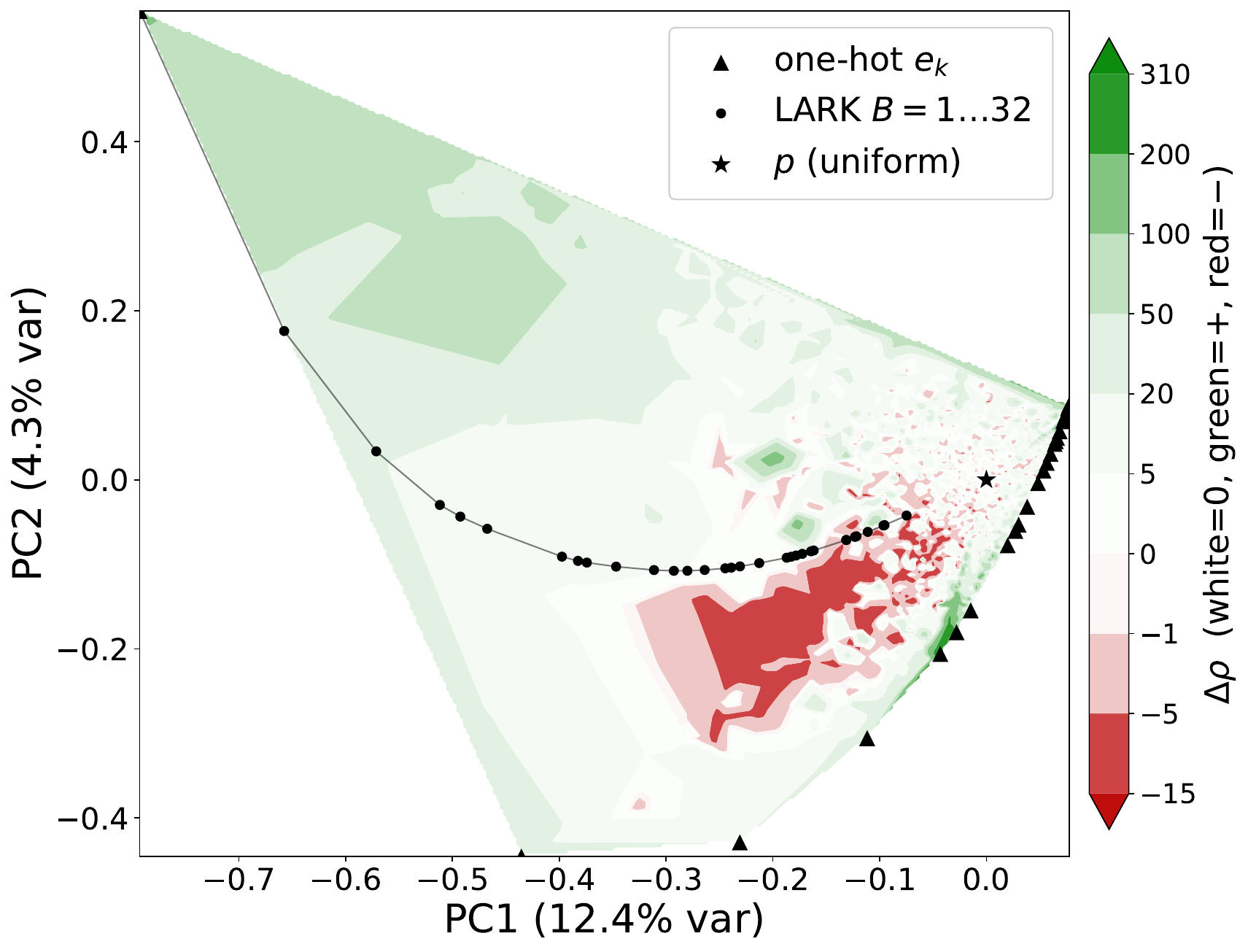}
  \caption{}
  \label{fig:d_2d}
\end{subfigure}
\hfill
\begin{subfigure}[t]{0.32\linewidth}
  \centering
  \includegraphics[width=\linewidth]{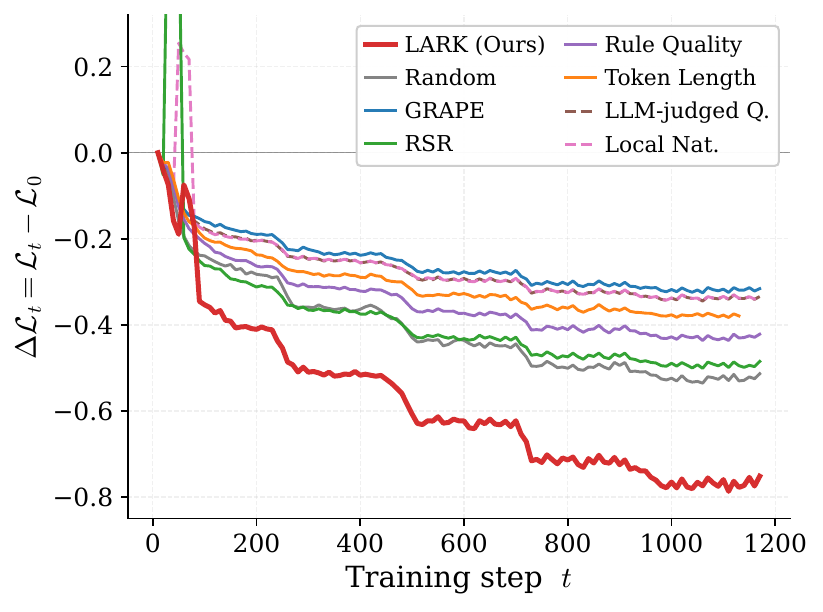}
  \caption{}
  \label{fig:loss_reduction}
\end{subfigure}
\hfill
\begin{subfigure}[t]{0.32\linewidth}
  \centering
  \includegraphics[width=\linewidth]{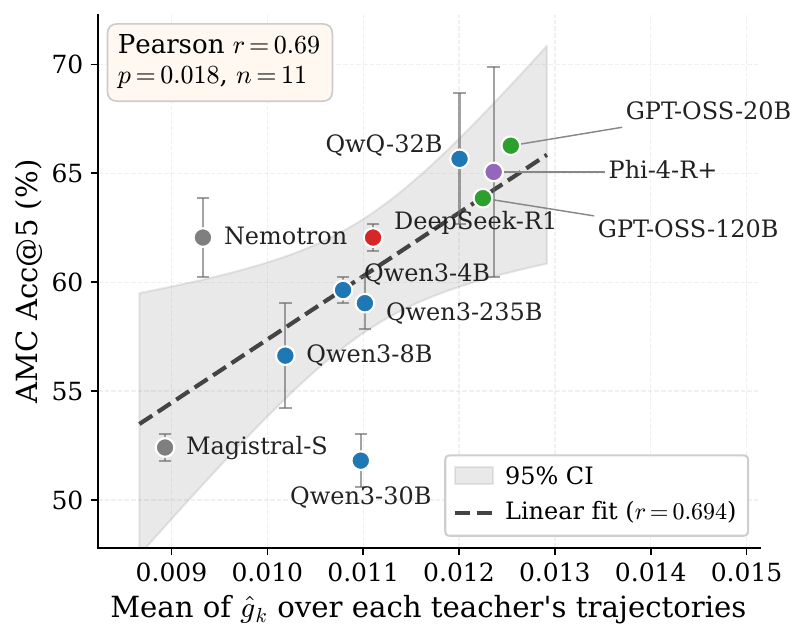}
  \caption{}
  \label{fig:score_utility}
\end{subfigure}
\caption{\textbf{Three pieces of evidence linking $\hat g_k$ to learnability.}
\textbf{(a)} $\Delta\rho(\mathbf{q})$ landscape on the simplex (PCA); black dots are $\hat{\mathbf{q}}(B)$ for $B{=}1{,}\ldots{,}32$. LARK selections trace the $\Delta\rho \geq 0$ region (green) and avoid $\Delta\rho < 0$ (red).
\textbf{(b)} SFT loss reduction $\Delta\mathcal{L}_t = \mathcal{L}_t - \mathcal{L}_0$ on Qwen-2.5-7B ($B{=}3$); LARK induces the largest SFT loss decay.
\textbf{(c)} Per-teacher mean of $\hat g_k$ vs.\ AMC Acc@5.}
\label{fig:rq3_evidence}
\end{figure}
\noindent\textbf{$\hat g_k$ controls the anchor-time learnability
rate.}
We first verify that maximizing $\hat g_k$ actually raises
$\rho(\boldsymbol\theta_{\mathrm{ref}}; \mathbf{q})$, the quantity
LARK is designed to optimize (\S\ref{sec:setup}). For a
representative problem we evaluate
$\Delta\rho(\mathbf{q}) = \rho(\boldsymbol\theta_{\mathrm{ref}};
\mathbf{q}) - \rho(\boldsymbol\theta_{\mathrm{ref}}; \mathbf{p})$
densely over $\Delta^K$ and project onto two PCA components
(Figure~\ref{fig:rq3_evidence}a). The LARK selections
$\hat{\mathbf{q}}(B)$ for $B \in \{1,\ldots,32\}$ trace a path
through the $\Delta\rho > 0$ region (green) and avoid the
$\Delta\rho < 0$ region (red): every $\hat{\mathbf{q}}(B)$ satisfies
$\rho(\boldsymbol\theta_{\mathrm{ref}}; \hat{\mathbf{q}}(B)) \geq
\rho(\boldsymbol\theta_{\mathrm{ref}}; \mathbf{p})$. The
forward-pass surrogate therefore acts as an effective control on
the underlying anchor-time learnability rate on real student models.

\noindent\textbf{High $\hat g_k$ yields faster SFT loss decay.}
By Proposition~\ref{prop:anchor-control}, a higher $\rho$ should
translate into faster SFT loss decay. We verify this by tracking
$\Delta \mathcal{L}_t \triangleq \mathcal{L}_t - \mathcal{L}_0$ on
Qwen-2.5-7B at $B{=}3$. Figure~\ref{fig:rq3_evidence}b shows that
LARK achieves the largest loss reduction throughout training, with
the gap visible already in the early stage; heuristic and
naturalness baselines reduce the loss more slowly and converge to a
shallower plateau, even when starting from a lower initial loss.

\noindent\textbf{$\hat g_k$ predicts downstream training utility.}
We finally check that $\hat g_k$ tracks downstream Acc@5 in
practice. We group the $33$ candidates by their teacher model
($11$ teachers), fine-tune Qwen-2.5-7B separately on each teacher's
trajectories, and evaluate on AMC. The per-teacher mean of
$\hat g_k$ is strongly correlated with downstream AMC accuracy
(Figure~\ref{fig:rq3_evidence}c, Pearson $r = 0.69$, $p = 0.018$):
high-$\hat g_k$ teachers (GPT-OSS-20B/120B, Phi-4-R+, QwQ-32B)
yield students above $65\%$, while low-$\hat g_k$ teachers
(Magistral-S, Nemotron) yield students below $55\%$. The score is
predictively valid at the teacher level, capturing a student-side
training signal that surface quality and likelihood do not.

\begin{wrapfigure}{r}{0.36\textwidth}
\vspace{-0.6em}
\setlength{\intextsep}{0pt}
\setlength{\columnsep}{8pt}
\centering
\includegraphics[width=\linewidth]{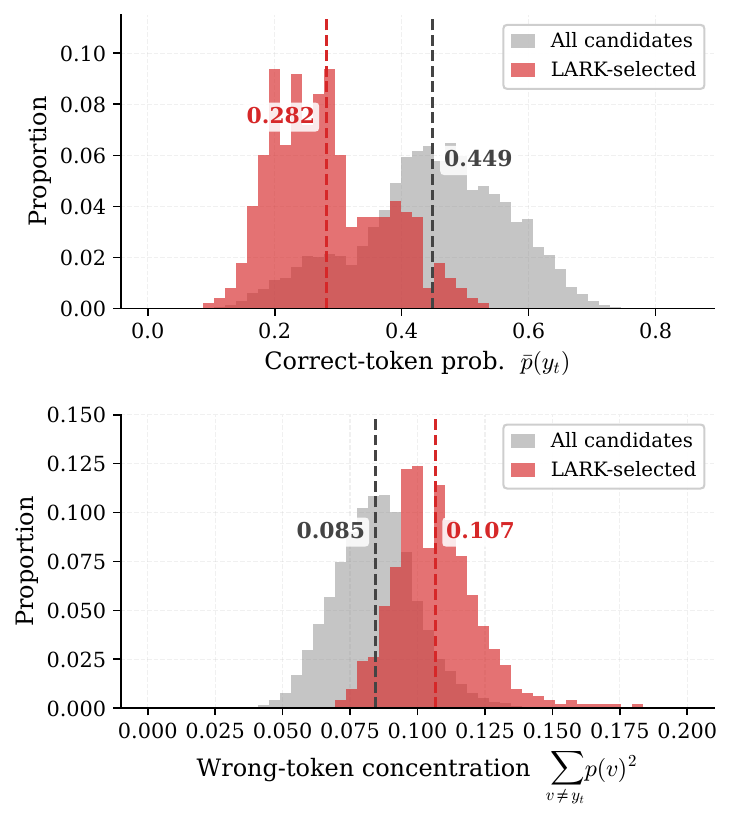}
\caption{\textbf{Token-level behavior} on Qwen-2.5-7B. LARK-selected
trajectories have lower correct-token probability $\bar p(y_t)$
(top) but higher wrong-token concentration
$\sum_{v\neq y_t} p(v)^2$ (bottom).}
\label{fig:token_level}
\vspace{-0.6em}
\end{wrapfigure}

\noindent\textbf{Mechanism: confidently wrong, in a structured way.}
\label{sec:rq4_why}
The three results above establish that maximizing $\hat g_k$ raises
$\rho$, accelerates SFT loss decay, and predicts utility, but do
not explain \emph{why} the selected trajectories carry strong
signal. We compute under $\boldsymbol\theta_{\mathrm{ref}}$ two
token-level statistics for each candidate: the mean correct-token
probability $\bar p(y_t)$, and the wrong-token concentration
$\sum_{v \neq y_t} p(v)^2$, which is large when the residual mass
falls on a few wrong tokens. Comparing all $33 \times 500$
candidates with the LARK-selected top-$1$
(Figure~\ref{fig:token_level}), the selected trajectories have
substantially \emph{lower} $\bar p(y_t)$ ($0.282$ vs.\ $0.449$) and
\emph{higher} wrong-token concentration ($0.107$ vs.\ $0.085$, a
$26\%$ increase). LARK therefore avoids trajectories the student
already predicts confidently and routes supervision toward
trajectories on which the student is wrong but in a \emph{structured}
way: residual mass concentrates on near-miss tokens close to the
gold answer, providing a clear local correction signal. This
signature echoes the rank-based view of
RSR~\citep{yang2026reasoning}, which detects token positions where
the gold answer is a near-miss, and clarifies why LARK and RSR end
up close in Table~\ref{tab:main_weighted_math_merged} despite
arising from distinct theoretical objectives. A single-problem
walkthrough is given in Appendix~\ref{app:case_study}.
\section{Conclusion}
\label{sec:conclusion}

We presented \textbf{LARK}, a learnability-grounded method for reasoning trajectory selection in distillation. LARK estimates trajectory learnability with a forward-pass surrogate $\hat{g}_k$ and uses a closed-form $\chi^2$-regularized soft top-$B$ rule to construct a weighted SFT objective, avoiding expensive per-trajectory backward passes while preserving distributional coverage. Across three student models and four reasoning benchmarks, LARK consistently outperforms strong trajectory selection baselines under both single-trajectory and multi-trajectory budgets. Ablation and diagnostic analyses show that both the learnability score and the weighting rule are important, and that LARK-selected trajectories accelerate SFT loss decay while providing structured local correction signals. Overall, our results support learnability as an effective principle for efficient reasoning distillation, with future extensions to noisier candidate pools and broader reasoning domains.

\bibliographystyle{plainnat}
\bibliography{references}

\appendix
\newpage
\addcontentsline{toc}{section}{Appendix}

\startcontents[sections]
\printcontents[sections]{l}{1}{\setcounter{tocdepth}{2}}

\section{Anchor-Time Analysis of the Learnability Objective}
\label{app:anchor-time-analysis}

This appendix supports the anchor-time reduction in Section~\ref{sec:setup}. We first prove the post-training loss bound (Appendix~\ref{app:proof-anchor-control}). We then provide a theoretical justification of the anchor-relative trajectory bound through the NTK lazy-training framework (Appendix~\ref{app:ntk-justification}), and verify it empirically on real SFT trajectories (Appendix~\ref{app:empirical-verification}).

\subsection{Proof of the Anchor-Time Loss Decay Bound}
\label{app:proof-anchor-control}

\begin{proof}[Proof of Proposition~\ref{prop:anchor-control}]
Fix any $\mathbf{q} \in \Delta^K$. By the gradient-flow decay identity~\eqref{eq:prelim-traj-decay} and Condition~\ref{ass:anchor-bound}, which gives $\rho(\phi_{\mathbf{q}}(s); \mathbf{q}) \geq \kappa \cdot \rho(\boldsymbol{\theta}_{\mathrm{ref}}; \mathbf{q})$ for every $s \in [0, T]$, we have
\begin{equation*}
\int_0^T \rho(\phi_{\mathbf{q}}(s); \mathbf{q})\, ds 
\;\geq\; \int_0^T \kappa \cdot \rho(\boldsymbol{\theta}_{\mathrm{ref}}; \mathbf{q})\, ds 
\;=\; \kappa T \cdot \rho(\boldsymbol{\theta}_{\mathrm{ref}}; \mathbf{q}).
\end{equation*}
Substituting into~\eqref{eq:prelim-traj-decay} and using the non-negativity of $\mathcal{L}(\boldsymbol{\theta}_{\mathrm{ref}}; \mathbf{q})$ together with the monotonicity of $x \mapsto \exp(-x)$ on $\mathbb{R}$,
\begin{equation}
    \mathcal{L}(\phi_{\mathbf{q}}(T); \mathbf{q})
    \;=\;
    \mathcal{L}(\boldsymbol{\theta}_{\mathrm{ref}}; \mathbf{q}) \cdot
    \exp\!\left( -\int_0^T \rho(\phi_{\mathbf{q}}(s); \mathbf{q})\, ds \right)
    \;\leq\;
    \mathcal{L}(\boldsymbol{\theta}_{\mathrm{ref}}; \mathbf{q}) \cdot
    \exp\!\bigl(-\kappa T \cdot \rho(\boldsymbol{\theta}_{\mathrm{ref}}; \mathbf{q})\bigr).
\end{equation}
Since $\mathbf{q} \in \Delta^K$ was arbitrary, the bound holds for every $\mathbf{q} \in \Delta^K$.
\end{proof}

\subsection{NTK-Based Justification of the Trajectory Bound}
\label{app:ntk-justification}

We provide a theoretical justification of Condition~\ref{ass:anchor-bound} through the Neural Tangent Kernel (NTK) framework. Under standard NTK lazy-training conditions for pretrained language model fine-tuning, combined with a local non-degeneracy condition on the ground-truth probability, we derive a strictly positive constant $\kappa$ for which Condition~\ref{ass:anchor-bound} provably holds.

\paragraph{Note on terminology.}
Throughout this subsection, we distinguish between two kinds of structural premises. \emph{Condition~\ref{ass:anchor-bound}} is the anchor-relative condition stated in the main text, which we aim to \emph{justify}. \emph{Assumptions~\ref{app:ass:lazy}--\ref{app:ass:bounded-prob}} below are standard NTK lazy-training premises, which we use to \emph{derive} Condition~\ref{ass:anchor-bound} as a theorem (Proposition~\ref{app:prop:ntk-justification}).

\paragraph{NTK setup.}
We adopt the token-level objects from Section~\ref{sec:method-strategy}. For each candidate trajectory $\mathbf{y}_k$, recall the per-token residual $\delta_t = \pi_t - \mathbf{e}_{y_t} \in \mathbb{R}^V$ and the stacked residual matrix $\Delta_k = [\delta_1, \ldots, \delta_{|a_k|}]^\top \in \mathbb{R}^{|a_k| \times V}$, with $p_t = \pi_t(y_t)$ the probability assigned to the ground-truth token. We additionally introduce the following appendix-only objects.

\textit{Logits Jacobian.} For trajectory $\mathbf{y}_k$, define
\begin{equation}
    J_k(\boldsymbol{\theta}) \triangleq \frac{\partial}{\partial \boldsymbol{\theta}}\, \mathrm{vec}(\mathbf{z}_1, \ldots, \mathbf{z}_{|a_k|}) \;\in\; \mathbb{R}^{|a_k| V \times |\boldsymbol{\theta}|}.
\end{equation}

\textit{Vectorized residual.} Let $\boldsymbol{\Delta}_k(\boldsymbol{\theta}) \triangleq \mathrm{vec}(\Delta_k(\boldsymbol{\theta})) \in \mathbb{R}^{|a_k| V}$.

\textit{Weighted NTK matrix and weighted residual.} Given $\mathbf{q} \in \Delta^K$, let
\begin{equation}
    \tilde{J}(\boldsymbol{\theta}; \mathbf{q}) \triangleq
    \begin{pmatrix}
        \sqrt{q_1 / |a_1|}\, J_1(\boldsymbol{\theta}) \\
        \vdots \\
        \sqrt{q_K / |a_K|}\, J_K(\boldsymbol{\theta})
    \end{pmatrix},
    \qquad
    \tilde{\boldsymbol{\Delta}}(\boldsymbol{\theta}; \mathbf{q}) \triangleq
    \begin{pmatrix}
        \sqrt{q_1 / |a_1|}\, \boldsymbol{\Delta}_1(\boldsymbol{\theta}) \\
        \vdots \\
        \sqrt{q_K / |a_K|}\, \boldsymbol{\Delta}_K(\boldsymbol{\theta})
    \end{pmatrix},
\end{equation}
and define the weighted empirical NTK matrix
\begin{equation}
    \Theta(\boldsymbol{\theta}; \mathbf{q}) \triangleq \tilde{J}(\boldsymbol{\theta}; \mathbf{q})\, \tilde{J}(\boldsymbol{\theta}; \mathbf{q})^\top.
\end{equation}

\textit{NTK identity.} By the standard chain rule for softmax cross-entropy ($\nabla_{\mathbf{z}_t} \ell_t = \delta_t$, see Section~\ref{sec:method-strategy}), the weighted gradient admits the factorization
\[
\nabla_{\boldsymbol{\theta}} \mathcal{L}(\boldsymbol{\theta}; \mathbf{q})
=
\tilde{J}(\boldsymbol{\theta}; \mathbf{q})^\top\,
\tilde{\boldsymbol{\Delta}}(\boldsymbol{\theta}; \mathbf{q}),
\]
which yields the NTK identity
\begin{equation}
    \|\nabla_{\boldsymbol{\theta}} \mathcal{L}(\boldsymbol{\theta}; \mathbf{q})\|^2
    \;=\;
    \tilde{\boldsymbol{\Delta}}(\boldsymbol{\theta}; \mathbf{q})^\top\, \Theta(\boldsymbol{\theta}; \mathbf{q})\, \tilde{\boldsymbol{\Delta}}(\boldsymbol{\theta}; \mathbf{q}).
    \label{eq:ntk-identity}
\end{equation}
When some $q_k=0$, the corresponding trajectory block contributes neither to the weighted loss nor to its gradient. The zero-weight blocks are therefore harmless in the identity above, but they should not be used to impose full-matrix positive definiteness. This is why Assumption~\ref{app:ass:ntk-pd} below conditions the NTK only along the residual directions that enter the weighted objective.

\paragraph{Assumptions.}
We adopt four conditions for the NTK analysis of language model fine-tuning.

\begin{appassumption}[Lazy training]
\label{app:ass:lazy}
There exists a constant $R > 0$ such that, for every $\mathbf{q} \in \Delta^K$,
\begin{equation}
    \sup_{s \in [0, T]} \|\phi_{\mathbf{q}}(s) - \boldsymbol{\theta}_{\mathrm{ref}}\| \;\leq\; R.
\end{equation}
\end{appassumption}

\begin{appassumption}[NTK stability]
\label{app:ass:ntk-stability}
There exists $\epsilon \in (0, 1)$ such that, for every $\mathbf{q} \in \Delta^K$ and every $\boldsymbol{\theta} \in B(\boldsymbol{\theta}_{\mathrm{ref}}, R)$,
\begin{equation}
    (1 - \epsilon)\, \Theta(\boldsymbol{\theta}_{\mathrm{ref}}; \mathbf{q})
    \;\preceq\;
    \Theta(\boldsymbol{\theta}; \mathbf{q})
    \;\preceq\;
    (1 + \epsilon)\, \Theta(\boldsymbol{\theta}_{\mathrm{ref}}; \mathbf{q}).
\end{equation}
\end{appassumption}

\begin{appassumption}[Residual-direction NTK conditioning at anchor]
\label{app:ass:ntk-pd}
There exist constants $0 < \lambda_- \leq \lambda_+ < \infty$ such that, for every $\mathbf{q} \in \Delta^K$ and every $\boldsymbol{\theta} \in B(\boldsymbol{\theta}_{\mathrm{ref}}, R)$,
\begin{equation}
    \tilde{\boldsymbol{\Delta}}(\boldsymbol{\theta}; \mathbf{q})^\top
    \Theta(\boldsymbol{\theta}_{\mathrm{ref}}; \mathbf{q})
    \tilde{\boldsymbol{\Delta}}(\boldsymbol{\theta}; \mathbf{q})
    \;\geq\;
    \lambda_-\, \|\tilde{\boldsymbol{\Delta}}(\boldsymbol{\theta}; \mathbf{q})\|^2,
    \label{eq:residual-ntk-lower}
\end{equation}
and
\begin{equation}
    \tilde{\boldsymbol{\Delta}}(\boldsymbol{\theta}_{\mathrm{ref}}; \mathbf{q})^\top
    \Theta(\boldsymbol{\theta}_{\mathrm{ref}}; \mathbf{q})
    \tilde{\boldsymbol{\Delta}}(\boldsymbol{\theta}_{\mathrm{ref}}; \mathbf{q})
    \;\leq\;
    \lambda_+\, \|\tilde{\boldsymbol{\Delta}}(\boldsymbol{\theta}_{\mathrm{ref}}; \mathbf{q})\|^2.
    \label{eq:residual-ntk-upper}
\end{equation}
\end{appassumption}

\begin{appassumption}[Bounded ground-truth probability]
\label{app:ass:bounded-prob}
There exist constants $0 < p_- \leq p_+ < 1$ such that, for every $\boldsymbol{\theta} \in B(\boldsymbol{\theta}_{\mathrm{ref}}, R)$, every $\mathbf{q} \in \Delta^K$, every $k \in [K]$, and every token position $t$,
\begin{equation}
    p_- \;\leq\; p_t(\boldsymbol{\theta}) \;\leq\; p_+.
\end{equation}
\end{appassumption}

Assumption~\ref{app:ass:lazy} encodes lazy fine-tuning behavior, classical in NTK theory \citep{jacot2018neural, chizat2019lazy} and supported empirically for pre-trained transformer fine-tuning by \citet{malladi2023kernel}. Assumption~\ref{app:ass:ntk-stability} requires the empirical NTK matrix to remain near-constant within the lazy ball, with rate $O(1/\sqrt{m})$ at width $m$ \citep{lee2019wide, jacot2018neural}, and is supported empirically by \citet{malladi2023kernel} for fine-tuning. Assumption~\ref{app:ass:ntk-pd} is a residual-direction conditioning condition: it does not require the full weighted NTK matrix to be positive definite in directions corresponding to zero-weight trajectories, but only requires non-degenerate curvature along the residual directions that determine the weighted loss and its gradient. Assumption~\ref{app:ass:bounded-prob} is a local non-degeneracy condition on teacher-forced token probabilities: $p_t > 0$ holds strictly under finite logits, while $p_t < 1$ excludes already-converged tokens for which the residual signal vanishes.

\paragraph{Loss-residual relation.}
Before establishing the main result, we relate the loss $\mathcal{L}(\boldsymbol{\theta}; \mathbf{q})$ to the squared residual norm $\|\tilde{\boldsymbol{\Delta}}(\boldsymbol{\theta}; \mathbf{q})\|^2$. The relation depends only on Assumption~\ref{app:ass:bounded-prob}, and the constants $\nu_-, \nu_+$ are derived explicitly from $p_-, p_+$.

\begin{lemma}[Loss--residual two-sided bound]
\label{app:lem:loss-residual}
Under Assumption~\ref{app:ass:bounded-prob}, for every $\boldsymbol{\theta} \in B(\boldsymbol{\theta}_{\mathrm{ref}}, R)$ and every $\mathbf{q} \in \Delta^K$,
\begin{equation}
    \nu_-\, \|\tilde{\boldsymbol{\Delta}}(\boldsymbol{\theta}; \mathbf{q})\|^2
    \;\leq\;
    \mathcal{L}(\boldsymbol{\theta}; \mathbf{q})
    \;\leq\;
    \nu_+\, \|\tilde{\boldsymbol{\Delta}}(\boldsymbol{\theta}; \mathbf{q})\|^2,
\end{equation}
where
\begin{equation}
    \nu_- \;=\; \tfrac{1}{2}, \qquad
    \nu_+ \;=\; \frac{\log(1/p_-)}{(1 - p_+)^2}.
    \label{eq:lem-nu-values}
\end{equation}
\end{lemma}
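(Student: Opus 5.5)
The plan is to reduce both inequalities to a single per-token inequality and then sum with nonnegative weights. Unpacking the definitions,
\[
\mathcal{L}(\boldsymbol{\theta};\mathbf{q}) = \sum_{k=1}^K \frac{q_k}{|a_k|}\sum_{t=1}^{|a_k|} \bigl(-\log p_t\bigr),
\qquad
\|\tilde{\boldsymbol{\Delta}}(\boldsymbol{\theta};\mathbf{q})\|^2 = \sum_{k=1}^K \frac{q_k}{|a_k|}\sum_{t=1}^{|a_k|} \|\delta_t\|^2 .
\]
The first identity is the definition of the weighted SFT loss. The second follows from stacking the blocks $\sqrt{q_k/|a_k|}\,\boldsymbol{\Delta}_k$. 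Both are nonnegative combinations of per-token quantities with the same weights $q_k/|a_k|\ge 0$. It therefore suffices to show, for every token position $t$ and every $\boldsymbol{\theta}\in B(\boldsymbol{\theta}_{\mathrm{ref}},R)$,
\[
\tfrac12\|\delta_t\|^2 \le -\log p_t \le \nu_+\|\delta_t\|^2 .
\]
Zero-weight trajectories contribute zero to both sides and cause no issue.

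The key step is a two-sided estimate of the residual norm in terms of $1-p_t$. Write $\delta_t=\pi_t-\mathbf{e}_{y_t}$. The ground-truth coordinate equals $p_t-1$ and the remaining coordinates are $\pi_t(v)$ for $v\ne y_t$, so
\[
\|\delta_t\|^2=(1-p_t)^2+\sum_{v\neq y_t}\pi_t(v)^2 .
\]
The wrong-token mass satisfies $0\le \sum_{v\ne y_t}\pi_t(v)^2\le\bigl(\sum_{v\ne y_t}\pi_t(v)\bigr)^2=(1-p_t)^2$. This gives
\[
(1-p_t)^2\le\|\delta_t\|^2\le 2(1-p_t)^2 .
\]

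For the lower bound, I use $-\log p\ge 1-p$ for $p\in(0,1]$, and $1-p\ge(1-p)^2$ since $1-p\in[0,1)$. Combining these,
\[
-\log p_t\ge(1-p_t)^2\ge\tfrac12\|\delta_t\|^2 .
\]
This gives $\nu_-=\tfrac12$ and needs no assumption beyond $p_t\in(0,1]$.

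For the upper bound, Assumption~\ref{app:ass:bounded-prob} is used in two places:
\begin{itemize}
\item $p_t\ge p_-$ gives $-\log p_t\le\log(1/p_-)$;
\item $p_t\le p_+$ gives $\|\delta_t\|^2\ge(1-p_t)^2\ge(1-p_+)^2>0$.
\end{itemize}
Dividing the first by the second yields
\[
-\log p_t\le\frac{\log(1/p_-)}{(1-p_+)^2}\,\|\delta_t\|^2=\nu_+\|\delta_t\|^2 .
\]
Summing both per-token bounds with the weights $q_k/|a_k|$ gives the lemma.

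There is no real obstacle here. The only delicate point is the upper bound, which fails near $p_t\to 1$ or $p_t\to 0$ without the assumption. This is why the constant is crude, using the worst case of numerator and denominator separately, rather than a uniform ratio bound such as $-\log p/(1-p)^2$.
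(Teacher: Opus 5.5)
Your proposal is correct and follows the same route as the paper. Both prove the per-token sandwich $\tfrac12\|\delta_t\|^2 \le -\log p_t \le \nu_+\|\delta_t\|^2$, with the lower bound via $\|\delta_t\|^2\le 2(1-p_t)^2\le 2(1-p_t)\le -2\log p_t$ and the upper bound via $-\log p_t\le\log(1/p_-)$ together with $\|\delta_t\|^2\ge(1-p_+)^2$, and then aggregate with the nonnegative weights $q_k/|a_k|$; the only cosmetic difference is that you bound $\sum_{v\neq y_t}\pi_t(v)^2$ by $(\sum_{v\neq y_t}\pi_t(v))^2$ where the paper uses $\max_v \pi_t(v)\cdot\sum_v \pi_t(v)$, which gives the identical estimate $(1-p_t)^2$.
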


\begin{proof}
We first establish corresponding pointwise bounds at the token level: for every token position $t$, with $p \triangleq p_t \in [p_-, p_+]$,
\begin{equation}
    \tfrac{1}{2}\, \|\delta_t\|^2
    \;\leq\;
    \ell_t
    \;\leq\;
    \frac{\log(1/p_-)}{(1 - p_+)^2}\, \|\delta_t\|^2.
    \label{eq:token-bound}
\end{equation}

\textbf{Lower bound.}
\begin{align}
    \|\delta_t\|^2
    &\stackrel{(a)}{=} (1 - p)^2 + \sum_{v \neq y_t} \pi_t(v)^2 \notag \\
    &\stackrel{(b)}{\leq} (1 - p)^2 + \Bigl(\max_{v \neq y_t} \pi_t(v)\Bigr) \sum_{v \neq y_t} \pi_t(v) \notag \\
    &\stackrel{(c)}{\leq} (1 - p)^2 + (1 - p) \cdot (1 - p)
    \;=\; 2(1 - p)^2 \notag \\
    &\stackrel{(d)}{\leq} 2(1 - p)
    \stackrel{(e)}{\leq} 2 \ell_t,
\end{align}
where (a) is the definition $\delta_t = \pi_t - \mathbf{e}_{y_t}$ together with $\pi_t(y_t) = p$; (b) uses $\sum_v a_v^2 \leq \max_v a_v \cdot \sum_v a_v$ for non-negative $\{a_v\}$; (c) uses both $\max_{v \neq y_t} \pi_t(v) \leq \sum_{v \neq y_t} \pi_t(v) = 1 - p$ and $\sum_{v \neq y_t} \pi_t(v) = 1 - p$; (d) uses $1 - p \leq 1$; and (e) uses the elementary inequality $-\log p \geq 1 - p$ for $p \in (0, 1]$.

\textbf{Upper bound.}
\begin{align}
    \ell_t
    &\stackrel{(f)}{=} -\log p
    \stackrel{(g)}{\leq} \log(1/p_-) \notag \\
    &\stackrel{(h)}{=} \frac{\log(1/p_-)}{(1 - p_+)^2}\, (1 - p_+)^2
    \stackrel{(i)}{\leq} \frac{\log(1/p_-)}{(1 - p_+)^2}\, (1 - p)^2
    \stackrel{(j)}{\leq} \frac{\log(1/p_-)}{(1 - p_+)^2}\, \|\delta_t\|^2,
\end{align}
where (f) is the definition of $\ell_t$; (g) uses $p \geq p_-$ and the monotonicity of $-\log$; (h) is multiplication and division by $(1 - p_+)^2 > 0$; (i) uses $(1 - p_+)^2 \leq (1 - p)^2$ from $p \leq p_+ < 1$; and (j) uses $\|\delta_t\|^2 \geq (1 - p)^2$ from the $y_t$-th coordinate.

\textbf{Aggregation.}
Summing \eqref{eq:token-bound} over $t = 1, \ldots, |a_k|$ and dividing by $|a_k|$ yields
\[
\nu_-\, \|\boldsymbol{\Delta}_k\|^2 / |a_k|
\leq
\mathcal{L}(\boldsymbol{\theta}; \mathbf{y}_k)
\leq
\nu_+\, \|\boldsymbol{\Delta}_k\|^2 / |a_k|,
\]
where $\|\boldsymbol{\Delta}_k\|^2 = \sum_t \|\delta_t\|^2$. Multiplying by $q_k$ and summing over $k$ gives
\begin{equation}
    \nu_-\, \sum_{k=1}^K \frac{q_k}{|a_k|}\, \|\boldsymbol{\Delta}_k\|^2
    \;\leq\;
    \mathcal{L}(\boldsymbol{\theta}; \mathbf{q})
    \;\leq\;
    \nu_+\, \sum_{k=1}^K \frac{q_k}{|a_k|}\, \|\boldsymbol{\Delta}_k\|^2,
\end{equation}
and the result follows from
\[
\sum_{k} (q_k/|a_k|)\, \|\boldsymbol{\Delta}_k\|^2
=
\|\tilde{\boldsymbol{\Delta}}(\boldsymbol{\theta}; \mathbf{q})\|^2.
\]
\end{proof}

\paragraph{Main result.}
We now establish the main result.

\begin{proposition}[NTK-based justification of Condition~\ref{ass:anchor-bound}]
\label{app:prop:ntk-justification}
Under Assumptions~\ref{app:ass:lazy}--\ref{app:ass:bounded-prob}, Condition~\ref{ass:anchor-bound} holds with
\begin{equation}
    \kappa
    \;=\;
    \frac{(1 - \epsilon)\, \lambda_-\, \nu_-}{\lambda_+\, \nu_+}
    \;=\;
    \frac{(1 - \epsilon)\, \lambda_-\, (1 - p_+)^2}{2\, \lambda_+\, \log(1/p_-)}
    \;>\; 0.
    \label{eq:kappa-explicit}
\end{equation}
\end{proposition}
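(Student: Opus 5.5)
The plan is to bound $\rho(\phi_{\mathbf{q}}(s);\mathbf{q})$ from below and $\rho(\boldsymbol{\theta}_{\mathrm{ref}};\mathbf{q})$ from above. Both bounds will be expressed through the ratio $\|\tilde{\boldsymbol{\Delta}}\|^2/\mathcal{L}$, which Lemma~\ref{app:lem:loss-residual} controls uniformly on the lazy ball. We fix $\mathbf{q}\in\Delta^K$ and $s\in[0,T]$, and write $\boldsymbol{\theta}=\phi_{\mathbf{q}}(s)$. By Assumption~\ref{app:ass:lazy}, $\boldsymbol{\theta}\in B(\boldsymbol{\theta}_{\mathrm{ref}},R)$, so every assumption and Lemma~\ref{app:lem:loss-residual} apply at both $\boldsymbol{\theta}$ and $\boldsymbol{\theta}_{\mathrm{ref}}$. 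Zero-weight blocks are harmless, as already noted after \eqref{eq:ntk-identity}.

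\textbf{Lower bound along the flow.} Start from the NTK identity \eqref{eq:ntk-identity} at $\boldsymbol{\theta}$. Apply the left inequality of Assumption~\ref{app:ass:ntk-stability} to the quadratic form in the vector $\tilde{\boldsymbol{\Delta}}(\boldsymbol{\theta};\mathbf{q})$. This gives $\|\nabla\mathcal{L}(\boldsymbol{\theta};\mathbf{q})\|^2\ge(1-\epsilon)\,\tilde{\boldsymbol{\Delta}}(\boldsymbol{\theta};\mathbf{q})^\top\Theta(\boldsymbol{\theta}_{\mathrm{ref}};\mathbf{q})\tilde{\boldsymbol{\Delta}}(\boldsymbol{\theta};\mathbf{q})$. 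Next apply \eqref{eq:residual-ntk-lower}, which is stated for the residual at an arbitrary point of the ball against the anchor kernel and so fits exactly, to get $\ge(1-\epsilon)\lambda_-\|\tilde{\boldsymbol{\Delta}}(\boldsymbol{\theta};\mathbf{q})\|^2$. Finally divide by $\mathcal{L}(\boldsymbol{\theta};\mathbf{q})\le\nu_+\|\tilde{\boldsymbol{\Delta}}(\boldsymbol{\theta};\mathbf{q})\|^2$. The result is $\rho(\boldsymbol{\theta};\mathbf{q})\ge(1-\epsilon)\lambda_-/\nu_+$.

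\textbf{Upper bound at the anchor.} Use the identity \eqref{eq:ntk-identity} at $\boldsymbol{\theta}_{\mathrm{ref}}$ together with \eqref{eq:residual-ntk-upper} to get $\|\nabla\mathcal{L}(\boldsymbol{\theta}_{\mathrm{ref}};\mathbf{q})\|^2\le\lambda_+\|\tilde{\boldsymbol{\Delta}}(\boldsymbol{\theta}_{\mathrm{ref}};\mathbf{q})\|^2$. Then use $\mathcal{L}(\boldsymbol{\theta}_{\mathrm{ref}};\mathbf{q})\ge\nu_-\|\tilde{\boldsymbol{\Delta}}(\boldsymbol{\theta}_{\mathrm{ref}};\mathbf{q})\|^2$ to get $\rho(\boldsymbol{\theta}_{\mathrm{ref}};\mathbf{q})\le\lambda_+/\nu_-$.

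\textbf{Combining.} The two bounds give
\begin{equation*}
\rho(\phi_{\mathbf{q}}(s);\mathbf{q})\ \ge\ \frac{(1-\epsilon)\lambda_-}{\nu_+}\ =\ \frac{(1-\epsilon)\lambda_-\nu_-}{\lambda_+\nu_+}\cdot\frac{\lambda_+}{\nu_-}\ \ge\ \kappa\,\rho(\boldsymbol{\theta}_{\mathrm{ref}};\mathbf{q}).
\end{equation*}
Substituting $\nu_\pm$ from \eqref{eq:lem-nu-values} yields the explicit form in \eqref{eq:kappa-explicit}. Positivity of $\kappa$ follows from $\epsilon<1$, $\lambda_->0$ and $p_+<1$. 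The constant does not depend on $\mathbf{q}$ or $s$, so Condition~\ref{ass:anchor-bound} holds.

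\textbf{Nondegeneracy.} The divisions require $\mathcal{L}>0$ and $\tilde{\boldsymbol{\Delta}}\neq 0$. Both follow from Assumption~\ref{app:ass:bounded-prob}, since $p_t\le p_+<1$ gives $\|\delta_t\|\ge 1-p_+>0$ on every positive-weight block.

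\textbf{Main obstacle.} The main subtlety is making the kernels line up. The stability sandwich must be applied at the current point $\boldsymbol{\theta}$ with the residual also taken at $\boldsymbol{\theta}$. This is why \eqref{eq:residual-ntk-lower} is phrased with the anchor kernel but a moving residual, while the upper bound is needed only at the anchor. The proof does not compare residuals at $\boldsymbol{\theta}$ and at $\boldsymbol{\theta}_{\mathrm{ref}}$ directly, because the ratio structure of $\rho$ cancels residual magnitudes on each side separately.
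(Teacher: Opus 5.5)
Your proposal is correct and follows the paper's proof essentially step for step. As in the paper, the NTK identity, the lower stability bound and \eqref{eq:residual-ntk-lower}, divided by $\mathcal{L}\le\nu_+\|\tilde{\boldsymbol{\Delta}}\|^2$, give $\rho(\phi_{\mathbf q}(s);\mathbf q)\ge(1-\epsilon)\lambda_-/\nu_+$; \eqref{eq:residual-ntk-upper} together with $\mathcal{L}\ge\nu_-\|\tilde{\boldsymbol{\Delta}}\|^2$ gives $\rho(\boldsymbol\theta_{\mathrm{ref}};\mathbf q)\le\lambda_+/\nu_-$; and the ratio of the two yields $\kappa$. Your explicit nondegeneracy check ($\tilde{\boldsymbol{\Delta}}\neq 0$ and $\mathcal{L}>0$ via $p_t\le p_+<1$) is a small addition; the paper only covers this implicitly when it notes that both bounds are positive.
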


\begin{proof}
Fix any $\mathbf{q} \in \Delta^K$ and $s \in [0, T]$, and set $\boldsymbol{\theta} = \phi_{\mathbf{q}}(s)$. Assumption~\ref{app:ass:lazy} gives $\boldsymbol{\theta} \in B(\boldsymbol{\theta}_{\mathrm{ref}}, R)$.

Writing $\tilde{\boldsymbol{\Delta}} \triangleq \tilde{\boldsymbol{\Delta}}(\boldsymbol{\theta}; \mathbf{q})$,
\begin{align}
    \|\nabla_{\boldsymbol{\theta}} \mathcal{L}(\boldsymbol{\theta}; \mathbf{q})\|^2
    &\stackrel{(a)}{=}
    \tilde{\boldsymbol{\Delta}}^\top
    \Theta(\boldsymbol{\theta}; \mathbf{q})
    \tilde{\boldsymbol{\Delta}} \notag \\
    &\stackrel{(b)}{\geq}
    (1 - \epsilon)\,
    \tilde{\boldsymbol{\Delta}}^\top
    \Theta(\boldsymbol{\theta}_{\mathrm{ref}}; \mathbf{q})
    \tilde{\boldsymbol{\Delta}} \notag \\
    &\stackrel{(c)}{\geq}
    (1 - \epsilon)\, \lambda_-\, \|\tilde{\boldsymbol{\Delta}}\|^2,
\end{align}
where (a) is the NTK identity~\eqref{eq:ntk-identity}; (b) applies the lower-bound part of Assumption~\ref{app:ass:ntk-stability}; and (c) applies the residual-direction lower bound~\eqref{eq:residual-ntk-lower} from Assumption~\ref{app:ass:ntk-pd}. Combined with the upper bound $\mathcal{L}(\boldsymbol{\theta}; \mathbf{q}) \leq \nu_+\, \|\tilde{\boldsymbol{\Delta}}\|^2$ from Lemma~\ref{app:lem:loss-residual},
\begin{equation}
    \rho(\boldsymbol{\theta}; \mathbf{q})
    \;=\;
    \frac{\|\nabla_{\boldsymbol{\theta}} \mathcal{L}(\boldsymbol{\theta}; \mathbf{q})\|^2}
    {\mathcal{L}(\boldsymbol{\theta}; \mathbf{q})}
    \;\geq\;
    \frac{(1 - \epsilon)\, \lambda_-\, \|\tilde{\boldsymbol{\Delta}}\|^2}
    {\nu_+\, \|\tilde{\boldsymbol{\Delta}}\|^2}
    \;=\;
    \frac{(1 - \epsilon)\, \lambda_-}{\nu_+}.
    \label{eq:rho-lb-traj}
\end{equation}

Writing $\tilde{\boldsymbol{\Delta}}_{\mathrm{ref}} \triangleq \tilde{\boldsymbol{\Delta}}(\boldsymbol{\theta}_{\mathrm{ref}}; \mathbf{q})$,
\begin{align}
    \|\nabla_{\boldsymbol{\theta}} \mathcal{L}(\boldsymbol{\theta}_{\mathrm{ref}}; \mathbf{q})\|^2
    &\stackrel{(d)}{=}
    \tilde{\boldsymbol{\Delta}}_{\mathrm{ref}}^\top
    \Theta(\boldsymbol{\theta}_{\mathrm{ref}}; \mathbf{q})
    \tilde{\boldsymbol{\Delta}}_{\mathrm{ref}} \notag \\
    &\stackrel{(e)}{\leq}
    \lambda_+\, \|\tilde{\boldsymbol{\Delta}}_{\mathrm{ref}}\|^2,
\end{align}
where (d) is~\eqref{eq:ntk-identity} at $\boldsymbol{\theta}_{\mathrm{ref}}$ and (e) applies the residual-direction upper bound~\eqref{eq:residual-ntk-upper} from Assumption~\ref{app:ass:ntk-pd}. Combined with the lower bound $\mathcal{L}(\boldsymbol{\theta}_{\mathrm{ref}}; \mathbf{q}) \geq \nu_-\, \|\tilde{\boldsymbol{\Delta}}_{\mathrm{ref}}\|^2$ from Lemma~\ref{app:lem:loss-residual},
\begin{equation}
    \rho(\boldsymbol{\theta}_{\mathrm{ref}}; \mathbf{q})
    \;=\;
    \frac{\|\nabla_{\boldsymbol{\theta}} \mathcal{L}(\boldsymbol{\theta}_{\mathrm{ref}}; \mathbf{q})\|^2}
    {\mathcal{L}(\boldsymbol{\theta}_{\mathrm{ref}}; \mathbf{q})}
    \;\leq\;
    \frac{\lambda_+\, \|\tilde{\boldsymbol{\Delta}}_{\mathrm{ref}}\|^2}
    {\nu_-\, \|\tilde{\boldsymbol{\Delta}}_{\mathrm{ref}}\|^2}
    \;=\;
    \frac{\lambda_+}{\nu_-}.
    \label{eq:rho-ub-anchor}
\end{equation}

Combining~\eqref{eq:rho-lb-traj} and~\eqref{eq:rho-ub-anchor}, both being positive,
\begin{equation}
    \frac{\rho(\phi_{\mathbf{q}}(s); \mathbf{q})}
    {\rho(\boldsymbol{\theta}_{\mathrm{ref}}; \mathbf{q})}
    \;\stackrel{(f)}{\geq}\;
    \frac{(1 - \epsilon)\, \lambda_- / \nu_+}{\lambda_+ / \nu_-}
    \;=\;
    \frac{(1 - \epsilon)\, \lambda_-\, \nu_-}{\lambda_+\, \nu_+},
\end{equation}
where (f) uses the lower bound on the numerator from~\eqref{eq:rho-lb-traj} and the upper bound on the denominator from~\eqref{eq:rho-ub-anchor}. Since $\mathbf{q}$ and $s$ were arbitrary,
\begin{equation}
    \inf_{s \in [0, T]} \rho(\phi_{\mathbf{q}}(s); \mathbf{q})
    \;\geq\;
    \frac{(1 - \epsilon)\, \lambda_-\, \nu_-}{\lambda_+\, \nu_+}
    \cdot
    \rho(\boldsymbol{\theta}_{\mathrm{ref}}; \mathbf{q}),
\end{equation}
which is Condition~\ref{ass:anchor-bound} with
\[
\kappa = \frac{(1 - \epsilon)\lambda_-\nu_-}{\lambda_+\nu_+}.
\]
Substituting $\nu_- = 1/2$ and $\nu_+ = \log(1/p_-)/(1 - p_+)^2$ from Lemma~\ref{app:lem:loss-residual} yields~\eqref{eq:kappa-explicit}.
\end{proof}

\paragraph{Remark on $\kappa$.}
The expression
\[
\kappa
=
\frac{(1 - \epsilon)\lambda_-(1 - p_+)^2}
{2\lambda_+ \log(1/p_-)}
\]
admits a clean interpretation. The ratio $\lambda_-/\lambda_+$ is the effective condition number of the empirical NTK along the residual directions that determine the weighted loss; $1 - \epsilon$ captures NTK stability along the trajectory; and $(1 - p_+)^2 / \log(1/p_-)$ is a geometric correction accounting for the cross-entropy loss, arising from the unbounded growth of $-\log p$ near $p = 0$. Under squared loss, this last factor reduces to a constant and $\kappa$ collapses to $(1 - \epsilon)\lambda_-/\lambda_+$, recovering the classical NTK convergence-rate dependence on the kernel condition number \citep{jacot2018neural, du2019gradient}. The post-training loss bound in Proposition~\ref{prop:anchor-control} thus holds with this $\kappa$, providing theoretical justification for using $\rho(\boldsymbol{\theta}_{\mathrm{ref}}; \mathbf{q})$ as the anchor-time selection objective in Section~\ref{sec:setup}.

\subsection{Empirical Verification of the Trajectory Bound}
\label{app:empirical-verification}

\paragraph{Experimental setup.}
Condition~\ref{ass:anchor-bound} quantifies over all $\mathbf{q} \in \Delta^K$, with no restriction on the candidate pool of the weighted SFT loss in Eq.~\eqref{eq:prelim-weighted-sft}: the pool may consist of the $K$ trajectories of a single question, or of (question, trajectory) pairs aggregated across multiple questions. In actual deployment, LARK is not applied to a single question in isolation but to mini-batches drawn from a corpus of questions, where each batch induces a weight vector over the (question, trajectory) pairs it contains. We therefore verify Condition~\ref{ass:anchor-bound} directly in the corpus-level regime in which LARK is deployed.

\textit{Model and anchor.} We verify on Qwen-2.5-7B, the main student model used in Section~\ref{sec:exp_setup}. The pre-trained checkpoint serves as the anchor parameter $\boldsymbol{\theta}_{\mathrm{ref}}$.

\textit{Verification corpus.} From the 5{,}000 NuminaMath training questions used in Section~\ref{sec:exp_setup}, we randomly sample 500 questions $\{\mathbf{x}_n\}_{n=1}^{500}$ as the verification corpus. Each $\mathbf{x}_n$ retains its full pool of 33 candidate trajectories (11 teachers $\times$ 3 rollouts). For each $\mathbf{x}_n$ and each budget $B \in \{1, 3, 5, 10, 20\}$, we apply the LARK selection rule (Lemma~\ref{prop:b_param}) to obtain a weight vector $\hat{\mathbf{q}}_n^{(B)}$. Stacking and rescaling the $500$ per-question weights yields a corpus-level weight vector $\mathbf{Q}^{(B)} \in \Delta^{500 \times 33}$, with the corresponding weighted SFT loss
\begin{equation}
    \mathcal{L}_{\mathrm{total}}(\boldsymbol{\theta}; B)
    \;\triangleq\;
    \mathcal{L}\bigl(\boldsymbol{\theta}; \mathbf{Q}^{(B)}\bigr)
    \;=\;
    \frac{1}{500} \sum_{n=1}^{500} \mathcal{L}\bigl(\boldsymbol{\theta}; \hat{\mathbf{q}}_n^{(B)}\bigr).
\end{equation}
We treat $\mathbf{Q}^{(B)}$ directly as the weight vector $\mathbf{q}$ in Condition~\ref{ass:anchor-bound}, so the corpus-level quantities $\rho_t$ and $\hat{\kappa}^{(B)}$ defined below are instantiations of Condition~\ref{ass:anchor-bound} at $\mathbf{Q}^{(B)}$.

\textit{SFT training.} For each budget $B \in \{1, 3, 5, 10, 20\}$, we run mini-batch SFT on $\mathcal{L}_{\mathrm{total}}(\cdot; B)$ starting from $\boldsymbol{\theta}_{\mathrm{ref}}$, using the same hyperparameters as the main experiments. Full training details are deferred to Appendix~\ref{app:exp_impl}. 

\textit{Recorded quantities.} At each training step $t \in \{0, 1, \ldots, T_{\max}\}$, we record the corpus-level decay rate
\begin{equation}
    \rho_t
    \;\triangleq\;
    \frac{\|\nabla_{\boldsymbol{\theta}} \mathcal{L}_{\mathrm{total}}(\boldsymbol{\theta}_t; B)\|^2}{\mathcal{L}_{\mathrm{total}}(\boldsymbol{\theta}_t; B)},
\end{equation}
and the anchor-relative ratio $\hat{\kappa}_t \triangleq \rho_t / \rho_0$, which is the discrete-time analog of $\rho(\phi_{\mathbf{Q}^{(B)}}(s); \mathbf{Q}^{(B)}) / \rho(\boldsymbol{\theta}_{\mathrm{ref}}; \mathbf{Q}^{(B)})$ in Condition~\ref{ass:anchor-bound}. After training, for each $B$ we report the trajectory-wise minimum
\begin{equation}
    \hat{\kappa}^{(B)}
    \;\triangleq\;
    \min_{t \in [0, T_{\max}]} \rho_t / \rho_0,
\end{equation}
as the empirical estimate of $\kappa$ in Condition~\ref{ass:anchor-bound} under budget $B$.

\paragraph{Findings.}
Figure~\ref{fig:kappa-verification-qwen25-7b} reports the verification results for $B \in \{1, 3, 5, 10, 20\}$. Each panel plots the corpus-level decay rate $\rho_t$ (blue solid, left axis) and the anchor-relative ratio $\hat{\kappa}_t = \rho_t / \rho_0$ (red dashed, right axis) against the training step $t$. The trajectory-wise minimum $\hat{\kappa}^{(B)}$ and the step at which it is attained are annotated in each panel title.

\begin{figure}[t]
    \centering
    \includegraphics[width=\textwidth]{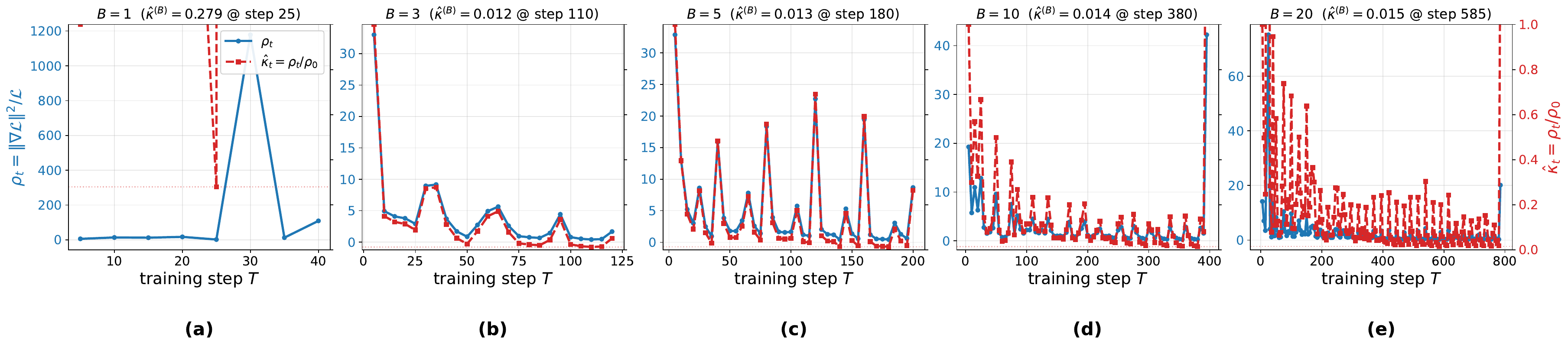}
    \caption{Empirical verification of Condition~\ref{ass:anchor-bound} on Qwen-2.5-7B for $B \in \{1, 3, 5, 10, 20\}$. Blue: $\rho_t$ (left axis); red dashed: $\hat{\kappa}_t = \rho_t/\rho_0$ (right axis). Panel titles report $\hat{\kappa}^{(B)} = \min_t \hat{\kappa}_t$ and the step at which it is attained.}
    \label{fig:kappa-verification-qwen25-7b}
\end{figure}

We highlight three observations from the results.
\begin{itemize}[leftmargin=*]
    \item \emph{Condition~\ref{ass:anchor-bound} holds across all configurations.} For every $B$, the trajectory-wise minimum $\hat{\kappa}^{(B)}$ is strictly positive ($\hat{\kappa}^{(B)} \in [0.012, 0.279]$), confirming that $\rho$ along the SFT trajectory does not collapse to zero relative to its anchor-time value. This empirically validates the existence of a positive $\kappa$ in Condition~\ref{ass:anchor-bound}.
    \item \emph{Smallest budget yields the largest $\hat{\kappa}$.} The minimum ratio at $B = 1$ ($\hat{\kappa}^{(1)} = 0.279$) substantially exceeds the values for $B \geq 3$ (which fall in the range $0.012$--$0.015$). Single-trajectory training produces a more stable $\rho$ trajectory and a tighter anchor-relative bound, while multi-trajectory training induces more oscillatory $\rho_t$ but still maintains a positive lower bound.
    \item \emph{$\hat{\kappa}^{(B)}$ stabilizes for moderate $B$.} For $B \in \{3, 5, 10, 20\}$, the minimum ratio remains within a narrow range ($0.012$--$0.015$), indicating that beyond a small budget, the corpus-level anchor-relative bound depends only weakly on $B$.
\end{itemize}

\paragraph{On the early-step transient in $\rho_t$.}
A closer look at Figure~\ref{fig:kappa-verification-qwen25-7b} reveals that, for $B \geq 3$, $\rho_t$ is non-monotone: it exhibits a sharp transient spike during the first $\sim$10\% of training steps (reaching values an order of magnitude above $\rho_0$) before settling to a positive plateau. Two factors are consistent with this behavior. First, the empirical NTK $\Theta(\boldsymbol{\theta}_t; \mathbf{q})$ adapts most rapidly during the warmup phase of SFT, before the lazy-training regime of Assumption~\ref{app:ass:lazy} fully takes effect; the looser NTK stability in this early window admits larger fluctuations of $\rho_t$ around its anchor-time value. Second, weighted multi-trajectory gradients $\nabla_{\boldsymbol{\theta}} \mathcal{L}(\boldsymbol{\theta}_t; \mathbf{q})$ admit cancellation across trajectory blocks with different residual directions, which can produce non-monotone $\|\nabla_{\boldsymbol{\theta}} \mathcal{L}\|^2$ in early steps. Crucially, the empirical $\hat{\kappa}^{(B)}$ reported in the panel titles is the trajectory-wise \emph{minimum} of $\rho_t / \rho_0$ over the entire training run, which already absorbs this transient: even at the lowest point of the trajectory, $\rho_t / \rho_0$ remains bounded away from zero, which is exactly what Condition~\ref{ass:anchor-bound} requires. The transient therefore tightens (rather than relaxes) the empirical estimate of $\kappa$.

\section{Theoretical Analysis of LARK}
\label{app:lark-theory}

This appendix collects the formal statements, proofs, and supporting derivations for the theoretical claims in Section~\ref{sec:method-strategy} (the local linearization analysis of LARK) and Section~\ref{sec:algorithm} (the closed-form selection rule).

We begin in Appendix~\ref{app:quasiconvex} by establishing that $\rho(\boldsymbol{\theta}_{\mathrm{ref}};\, \mathbf{q})$ is quasiconvex on $\Delta^K$ and that its (constrained or unconstrained) maximum collapses to a one-hot solution; this is the formal justification for the local-linearization strategy adopted in Section~\ref{sec:method-strategy}. Appendix~\ref{app:notation-assumptions} fixes the additional notation and structural assumptions used by the rest of the appendix. Appendix~\ref{app:aux-lemmas} collects auxiliary lemmas (gradient identities and bounds at the LM-head and backbone, together with the Taylor decomposition of $\rho$ around $\mathbf p$) that are reused by the subsequent proofs. Appendices~\ref{app:proof-lemma1}, \ref{app:proof-lemma2}, \ref{app:proof-thm1}, and \ref{app:proof-prop2} respectively give the formal versions and full proofs of Lemma~\ref{lm:1} (forward-pass $\rho_k^*$ estimation), Lemma~\ref{lm:chi2_bound} ($\chi^2$ controls both $R_1$ and $R_2$), Theorem~\ref{thm:lark-objective} (the LARK objective lower bound), and Lemma~\ref{prop:b_param} (the $B$-parameterized closed form). Finally, Appendix~\ref{app:ghat-empirical} provides a quantitative error bound on the surrogate $\hat g_k$ relative to the exact directional derivative $g_k^*$, together with empirical evidence that the two are tightly correlated on real student models.

\subsection{Quasiconvexity of \texorpdfstring{$\rho(\boldsymbol{\theta}_{\mathrm{ref}};\, \mathbf{q})$}{rho(theta\_ref; q)} and Collapse to One-Hot}
\label{app:quasiconvex}

Throughout this subsection, fix $\boldsymbol{\theta} = \boldsymbol{\theta}_{\mathrm{ref}}$ and recall from Section~\ref{sec:method-strategy} that $\mathbf g_k = \nabla_{\boldsymbol\theta} \ell_k$ with $\ell_k = \ell(\boldsymbol\theta_{\mathrm{ref}}, \mathbf y_k) > 0$. Define
\begin{equation}
    N(\mathbf{q}) \;\triangleq\; \Bigl\|\textstyle\sum_k q_k \mathbf{g}_k\Bigr\|^2
    \;=\; \mathbf{q}^\top G\, \mathbf{q},
    \qquad
    D(\mathbf{q}) \;\triangleq\; \textstyle\sum_k q_k \ell_k,
    \label{eq:ND-def}
\end{equation}
where $G \in \mathbb{R}^{K\times K}$ with $G_{ij} = \langle \mathbf{g}_i,\, \mathbf{g}_{j}\rangle$ is the gradient Gram matrix. Since $G \succeq 0$, $N(\mathbf{q})$ is a convex quadratic in $\mathbf{q}$, and $D(\mathbf{q}) > 0$ is affine, so $\rho(\boldsymbol{\theta}_{\mathrm{ref}};\, \mathbf{q}) = N(\mathbf{q}) / D(\mathbf{q})$.

\begin{proposition}[Quasiconvexity of $\rho(\boldsymbol{\theta}_{\mathrm{ref}};\, \mathbf{q})$]
\label{prop:quasiconvex}
$\rho(\boldsymbol{\theta}_{\mathrm{ref}};\, \mathbf{q})$ is quasiconvex in $\mathbf{q} \in \Delta^K$.
\end{proposition}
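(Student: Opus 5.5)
We prove quasiconvexity through the sublevel-set characterization: $\rho(\boldsymbol{\theta}_{\mathrm{ref}};\,\cdot)$ is quasiconvex on the convex set $\Delta^K$ iff for every $c \in \mathbb{R}$ the set $S_c \triangleq \{\mathbf q \in \Delta^K : \rho(\boldsymbol{\theta}_{\mathrm{ref}};\mathbf q) \le c\}$ is convex. We work with the decomposition $\rho = N/D$ from \eqref{eq:ND-def}, where $N(\mathbf q) = \mathbf q^\top G \mathbf q$ and $D(\mathbf q) = \sum_k q_k \ell_k$.

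First we check positivity of the denominator. Since every $\ell_k > 0$ and $\mathbf q \in \Delta^K$ has nonnegative entries summing to one, we get $D(\mathbf q) \ge \min_k \ell_k > 0$ on the whole simplex. Hence, for $\mathbf q \in \Delta^K$,
\[
\rho(\boldsymbol{\theta}_{\mathrm{ref}};\mathbf q) \le c \iff N(\mathbf q) - c\,D(\mathbf q) \le 0 .
\]
We then split on the sign of $c$. If $c < 0$, then $S_c$ is empty, because $N \ge 0$ and $D > 0$ force $\rho \ge 0$; the empty set is convex.

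If $c \ge 0$, consider the function $h_c(\mathbf q) \triangleq \mathbf q^\top G \mathbf q - c\sum_k q_k \ell_k$. Here $G = \sum$ of Gram entries $\langle \mathbf g_i, \mathbf g_j\rangle$ is positive semidefinite, since $\mathbf v^\top G \mathbf v = \|\sum_k v_k \mathbf g_k\|^2 \ge 0$. So $h_c$ is a convex quadratic plus a linear term, and therefore convex on $\mathbb{R}^K$. Its $0$-sublevel set is convex, and intersecting it with the convex set $\Delta^K$ gives $S_c$, which is therefore convex.

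Since every sublevel set is convex, quasiconvexity follows. An equivalent direct route, useful as a cross-check, is this: for $\mathbf q_1, \mathbf q_2 \in \Delta^K$ and $\lambda \in [0,1]$, set $c = \max\{\rho(\mathbf q_1), \rho(\mathbf q_2)\}$. Convexity of $h_c$ together with $h_c(\mathbf q_i) \le 0$ gives $h_c(\lambda \mathbf q_1 + (1-\lambda)\mathbf q_2) \le 0$, which is exactly the inequality $\rho(\lambda\mathbf q_1+(1-\lambda)\mathbf q_2) \le \max\{\rho(\mathbf q_1),\rho(\mathbf q_2)\}$.

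There is no real obstacle in this argument. The only points needing care are the strict positivity of $D$ on the simplex, which justifies multiplying through by $D$ without flipping the inequality, and the PSD property of $G$. Both follow immediately from the assumption $\ell_k > 0$ and the Gram structure of $G$.
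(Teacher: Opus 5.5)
Your proposal is correct and follows the paper's proof essentially verbatim: both use the sublevel-set characterization, multiply through by $D(\mathbf q) > 0$, and observe that $\mathbf q^\top G\mathbf q - c\sum_k q_k\ell_k$ is convex because $G \succeq 0$, so its zero-sublevel set intersected with $\Delta^K$ is convex. Your explicit handling of $c<0$ (empty sublevel set) and your two-point cross-check only make explicit what the paper leaves implicit by restricting to $c \ge 0$.
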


\begin{proof}
It suffices to show that for every $c \geq 0$, the sublevel set $\mathcal{S}_c \triangleq \{\mathbf{q}\in\Delta^K : \rho(\boldsymbol{\theta}_{\mathrm{ref}};\, \mathbf{q}) \leq c\}$ is convex. We have
\begin{align}
    \mathbf{q}\in\mathcal{S}_c
    &\;\stackrel{(a)}{\iff}\;
    \rho(\boldsymbol{\theta}_{\mathrm{ref}};\, \mathbf{q}) \leq c \notag\\
    &\;\stackrel{(b)}{\iff}\;
    \frac{N(\mathbf{q})}{D(\mathbf{q})} \leq c \notag\\
    &\;\stackrel{(c)}{\iff}\;
    N(\mathbf{q}) \leq c\cdot D(\mathbf{q}) \notag\\
    &\;\stackrel{(d)}{\iff}\;
    \mathbf{q}^\top G\,\mathbf{q} - c\textstyle\sum_k q_k\ell_k \leq 0,
    \label{eq:sublevel_condition}
\end{align}
where (a) is the definition of $\mathcal{S}_c$; (b) substitutes $\rho(\boldsymbol{\theta}_{\mathrm{ref}};\, \mathbf{q}) = N(\mathbf{q})/D(\mathbf{q})$; (c) multiplies both sides by $D(\mathbf{q}) > 0$; and (d) substitutes the definitions of $N$ and $D$ from~\eqref{eq:ND-def}. The left-hand side of the final inequality in~\eqref{eq:sublevel_condition} is the sum of the convex quadratic $\mathbf{q}^\top G\,\mathbf{q}$ (since $G\succeq 0$) and the affine term $-c\sum_k q_k\ell_k$, hence convex in $\mathbf{q}$. Therefore $\mathcal{S}_c$ is the intersection of a convex sublevel set with the convex set $\Delta^K$, which is convex.
\end{proof}

\begin{corollary}[Maximum at a vertex]
\label{cor:vertex}
$\max_{\mathbf{q}\in\Delta^K} \rho(\boldsymbol{\theta}_{\mathrm{ref}};\, \mathbf{q})$ is attained at an extreme point of $\Delta^K$.
\end{corollary}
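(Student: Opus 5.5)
The plan is to use the standard fact that a continuous quasiconvex function on a compact convex polytope attains its maximum at an extreme point, and to check that fact directly here. Proposition~\ref{prop:quasiconvex} gives quasiconvexity of $\rho(\boldsymbol{\theta}_{\mathrm{ref}};\,\cdot)$ on $\Delta^K$. It remains to supply existence of the maximum and the reduction to vertices.

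\textbf{Existence.} The map $\mathbf{q}\mapsto N(\mathbf q)/D(\mathbf q)$ is continuous on $\Delta^K$. This holds because $N$ is a quadratic form and $D(\mathbf q)=\sum_k q_k\ell_k\ge \min_k \ell_k>0$, using $\ell_k>0$. Since $\Delta^K$ is compact, the maximum is attained at some $\mathbf q^\star\in\Delta^K$.

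\textbf{Reduction to vertices.} Write $\mathbf q^\star=\sum_k q^\star_k \mathbf e_k$ as a convex combination of the vertices $\mathbf e_1,\dots,\mathbf e_K$. Set $c=\max_k \rho(\boldsymbol{\theta}_{\mathrm{ref}};\mathbf e_k)$. Each vertex lies in the sublevel set $\mathcal S_c$, and $\mathcal S_c$ is convex by Proposition~\ref{prop:quasiconvex}. Hence $\mathbf q^\star\in\mathcal S_c$, that is, $\rho(\boldsymbol{\theta}_{\mathrm{ref}};\mathbf q^\star)\le c$. This is the inequality $f(\sum\lambda_i x_i)\le\max_i f(x_i)$, extended from two points to $K$ points; the extension is immediate from sublevel-set convexity and needs no induction. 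Because $\mathbf q^\star$ is a maximizer, we also have $\rho(\boldsymbol{\theta}_{\mathrm{ref}};\mathbf q^\star)\ge c$. So equality holds, and the maximum is attained at the vertex $\mathbf e_{k^\star}$ with $k^\star\in\arg\max_k \rho(\boldsymbol{\theta}_{\mathrm{ref}};\mathbf e_k)$.

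\textbf{Where care is needed.} There is no real obstacle. The one point to watch is that $\rho$ is well defined and continuous on the whole simplex, which rests on $D>0$ everywhere; this follows from $\ell_k>0$. I would also remark that the argument shows only that some vertex is a maximizer, not that every maximizer is a vertex. That is exactly what the corollary claims, and it is what the main text uses: $\rho(\boldsymbol{\theta}_{\mathrm{ref}};\mathbf e_k)=\|\mathbf g_k\|^2/\ell_k=\rho_k^*$, so the maximum value is $\max_k\rho_k^*$.
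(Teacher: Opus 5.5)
Your proof is correct and follows essentially the same route as the paper: write any $\mathbf q$ as a convex combination of the vertices $\mathbf e_k$, and use quasiconvexity (sublevel-set convexity) to get $\rho(\boldsymbol\theta_{\mathrm{ref}};\mathbf q)\le\max_k\rho(\boldsymbol\theta_{\mathrm{ref}};\mathbf e_k)$. Combined with the trivial reverse inequality, this gives the claim. The only difference is your separate existence step via continuity and compactness, which is harmless but unnecessary: the vertex bound holds for every $\mathbf q$ and is attained at some $\mathbf e_k$, and this already shows the maximum exists.
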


\begin{proof}
We have
\begin{align}
    \max_{\mathbf{q}\in\Delta^K} \rho(\boldsymbol{\theta}_{\mathrm{ref}};\, \mathbf{q})
    &\;\stackrel{(a)}{=}\;
    \max_{\mathbf{q}\in\Delta^K} \rho(\boldsymbol{\theta}_{\mathrm{ref}};\, \mathbf{q})\big|_{\text{extreme point}}
    \notag\\
    &\;\stackrel{(b)}{=}\;
    \max_{k\in[K]} \rho(\boldsymbol{\theta}_{\mathrm{ref}};\, \mathbf{e}_k),
\end{align}
where (a) follows because every $\mathbf q\in\Delta^K$ can be written as a convex combination of the extreme points $\{\mathbf e_1,\ldots,\mathbf e_K\}$, namely $\mathbf q=\sum_{k=1}^K q_k\mathbf e_k$, and quasiconvexity implies
\(
\rho(\boldsymbol{\theta}_{\mathrm{ref}};\, \mathbf q)
\leq
\max_{k\in[K]:\,q_k>0}
\rho(\boldsymbol{\theta}_{\mathrm{ref}};\, \mathbf e_k)
\leq
\max_{k\in[K]}
\rho(\boldsymbol{\theta}_{\mathrm{ref}};\, \mathbf e_k).
\)
The reverse inequality is immediate since each $\mathbf e_k\in\Delta^K$. Hence the maximum over $\Delta^K$ is attained at an extreme point. Step (b) identifies the extreme points of $\Delta^K$ as the standard basis vectors $\{\mathbf{e}_1,\ldots,\mathbf{e}_K\}$.
\end{proof}

\begin{corollary}[One-hot optimum]
\label{cor:onehot}
\begin{equation}
    \max_{\mathbf{q}\in\Delta^K} \rho(\boldsymbol{\theta}_{\mathrm{ref}};\, \mathbf{q})
    \;=\; \max_{k\in[K]} \rho(\boldsymbol{\theta}_{\mathrm{ref}};\, \mathbf{e}_k)
    \;=\; \max_{k\in[K]} \frac{\|\mathbf{g}_k\|^2}{\ell_k}
    \;=\; \max_{k\in[K]} \rho_k^*.
\end{equation}
\end{corollary}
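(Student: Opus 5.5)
The chain of equalities follows in three moves. The first equality is Corollary~\ref{cor:vertex} itself: by quasiconvexity (Proposition~\ref{prop:quasiconvex}), the maximum of $\rho(\boldsymbol{\theta}_{\mathrm{ref}};\,\cdot)$ over $\Delta^K$ equals $\max_{k\in[K]}\rho(\boldsymbol{\theta}_{\mathrm{ref}};\,\mathbf e_k)$. I would cite it directly and not re-derive it.

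For the second equality I will evaluate $N$ and $D$ from~\eqref{eq:ND-def} at a vertex $\mathbf e_k$. Since only the $k$-th coordinate is nonzero, $N(\mathbf e_k)=\mathbf e_k^\top G\,\mathbf e_k = G_{kk}=\|\mathbf g_k\|^2$ and $D(\mathbf e_k)=\ell_k$. Therefore $\rho(\boldsymbol{\theta}_{\mathrm{ref}};\,\mathbf e_k)=\|\mathbf g_k\|^2/\ell_k$. This step is legitimate because $\ell_k>0$ was assumed at the start of the subsection, so the ratio is well defined.

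The third equality is the definition of the single-trajectory factor $\rho_k^*=\|\mathbf g_k\|^2/\ell_k$ from Section~\ref{sec:method-strategy}.

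Taking $\max_k$ through each identity gives the result. I would also note that the maximizer is the one-hot vector $\mathbf e_{k^*}$ with $k^*\in\arg\max_k\rho_k^*$, since that is the point of the ``one-hot'' label.

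There is no real obstacle here. The only care needed is to confirm that the vertex evaluation uses $\mathcal L(\boldsymbol\theta;\mathbf e_k)=\ell_k$ and $\nabla\mathcal L(\boldsymbol\theta;\mathbf e_k)=\mathbf g_k$, both of which follow immediately from the weighted loss~\eqref{eq:prelim-weighted-sft}.
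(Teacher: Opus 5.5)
Your proposal is correct and follows the paper's proof step for step. You cite Corollary~\ref{cor:vertex} for the first equality, evaluate $N(\mathbf e_k)=\|\mathbf g_k\|^2$ and $D(\mathbf e_k)=\ell_k$ for the second, and invoke the definition $\rho_k^*=\|\mathbf g_k\|^2/\ell_k$ for the third.
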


\begin{proof}
We have
\begin{align}
    \max_{\mathbf{q}\in\Delta^K} \rho(\boldsymbol{\theta}_{\mathrm{ref}};\, \mathbf{q})
    &\;\stackrel{(a)}{=}\;
    \max_{k\in[K]} \rho(\boldsymbol{\theta}_{\mathrm{ref}};\, \mathbf{e}_k) \notag\\
    &\;\stackrel{(b)}{=}\;
    \max_{k\in[K]} \frac{N(\mathbf{e}_k)}{D(\mathbf{e}_k)}
    \;=\;
    \max_{k\in[K]} \frac{\|\mathbf{g}_k\|^2}{\ell_k}
    \;\stackrel{(c)}{=}\;
    \max_{k\in[K]} \rho_k^*,
\end{align}
where (a) applies Corollary~\ref{cor:vertex}; (b) substitutes $\mathbf{q} = \mathbf{e}_k$ into the definitions in~\eqref{eq:ND-def}, giving $N(\mathbf{e}_k) = \|\mathbf{g}_k\|^2$ and $D(\mathbf{e}_k) = \ell_k$; and (c) uses the definition $\rho_k^* = \|\mathbf{g}_k\|^2/\ell_k$ from Section~\ref{sec:method-strategy}.
\end{proof}

\subsection{Notation and Structural Assumptions}
\label{app:notation-assumptions}

This subsection fixes the additional notation and structural assumptions used throughout Appendices~\ref{app:aux-lemmas}--\ref{app:ghat-empirical}.

\subsubsection{Notation}
\label{app:notation}

\paragraph{Symbols inherited from the main text.}
We use the same symbols as Section~\ref{sec:method-strategy}: $\mathbf x$ is the question; $\mathbf y_k = (y_1^k, \ldots, y_{|a_k|}^k)$ is the $k$-th candidate trajectory, of length $|a_k|$; $\boldsymbol\pi_t^k \in \Delta^{|\mathcal V|}$ is the predictive distribution of the reference student $\pi_{\mathrm{ref}}$ at position $t$ under teacher forcing; $\boldsymbol\delta(y_t^k) \in \Delta^{|\mathcal V|}$ is the one-hot vector for the ground-truth token; $\ell_k = \ell(\boldsymbol\theta_{\mathrm{ref}}, \mathbf y_k)$ is the length-normalized cross-entropy of trajectory $\mathbf y_k$; $\mathbf g_k = \nabla_{\boldsymbol\theta} \ell_k$ is its gradient; $\rho_k^* = \|\mathbf g_k\|^2 / \ell_k$ is the per-trajectory rate; $\hat\rho_k$ is its forward-pass proxy (see Lemma~\ref{lm:1}); $\hat g_k$ and $g_k^*$ are the surrogate and exact directional derivatives at $\mathbf p$; $R_1, R_2$ are the residual terms in~\eqref{eq:1}; $\mathbf p = (1/K, \ldots, 1/K) \in \Delta^K$ is the uniform prior.

\paragraph{Parameter decomposition.}
We split the student parameters as $\boldsymbol\theta = (\mathbf W_{\mathrm{out}}, \boldsymbol\theta_{\mathrm{rest}})$, where $\mathbf W_{\mathrm{out}} \in \mathbb R^{|\mathcal V| \times d}$ is the LM head matrix and $\boldsymbol\theta_{\mathrm{rest}}$ collects all backbone parameters. For each trajectory $\mathbf y_k$ and position $t$, the backbone produces a hidden state $\mathbf h_t \in \mathbb R^d$, the LM head produces logits $\mathbf z_t = \mathbf W_{\mathrm{out}} \mathbf h_t \in \mathbb R^{|\mathcal V|}$, and $\boldsymbol\pi_t^k = \mathrm{softmax}(\mathbf z_t)$. The token-level loss is $\ell_t = -\log \pi_t^k(y_t^k)$, so that $\ell_k = \tfrac{1}{|a_k|} \sum_t \ell_t$.

\paragraph{Token-level residual.}
We write the per-token prediction residual as $\boldsymbol\delta_t \triangleq \boldsymbol\pi_t^k - \boldsymbol\delta(y_t^k) \in \mathbb R^{|\mathcal V|}$, which coincides with $\nabla_{\mathbf z_t}\ell_t$ by the standard softmax--cross-entropy identity (see Lemma~\ref{lem:lmhead_grad} below).

\paragraph{Stacked matrices.}
For each trajectory $\mathbf y_k$ we collect the per-token residuals and hidden states into matrices
\begin{equation}
    \boldsymbol\Delta
    \;\triangleq\;
    [\boldsymbol\delta_1, \ldots, \boldsymbol\delta_{|a_k|}]^\top
    \;\in\; \mathbb R^{|a_k| \times |\mathcal V|},
    \qquad
    \mathbf H
    \;\triangleq\;
    [\mathbf h_1, \ldots, \mathbf h_{|a_k|}]^\top
    \;\in\; \mathbb R^{|a_k| \times d}.
    \label{eq:Delta-H-def}
\end{equation}
We also write $\mathbf J_H(\mathbf y_k) \triangleq [\partial \mathbf h_t / \partial \boldsymbol\theta_{\mathrm{rest}}]_{t=1}^{|a_k|} \in \mathbb R^{|a_k|\,d \times |\boldsymbol\theta_{\mathrm{rest}}|}$ for the sequence-level Jacobian of the hidden states with respect to the backbone parameters. The dependence on $\mathbf y_k$ is suppressed when the trajectory is clear from context.

\paragraph{Aggregated quantities.}
Two aggregations of the per-token residual norms appear repeatedly. Define the (length-normalized) Brier score
\begin{equation}
    \mathrm{Brier}(\mathbf y_k)
    \;\triangleq\;
    \tfrac{1}{|a_k|}\sum_{t=1}^{|a_k|} \|\boldsymbol\delta_t\|^2
    \;=\;
    \tfrac{\|\boldsymbol\Delta\|_F^2}{|a_k|}.
    \label{eq:brier-def}
\end{equation}
With this notation, the forward-pass proxy of Lemma~\ref{lm:1} can be written as
\begin{equation}
    \hat\rho_k
    \;=\;
    \frac{\sum_t \|\boldsymbol\delta_t\|^2}{\sum_t \ell_t}
    \;=\;
    \frac{\mathrm{Brier}(\mathbf y_k)}{\ell_k},
    \label{eq:rhohat-equiv}
\end{equation}
so that the $1/|a_k|$ factors cancel exactly in numerator and denominator.

\paragraph{Multi-trajectory quantities.}
Across the candidate pool, write $G \in \mathbb R^{K \times K}$ for the gradient Gram matrix with $G_{ij} = \langle \mathbf g_i, \mathbf g_j \rangle$, and let $\boldsymbol\ell = (\ell_1, \ldots, \ell_K)^\top$. We use the shorthands
\begin{equation}
    \bar{\mathbf g} \;\triangleq\; \tfrac{1}{K}\textstyle\sum_k \mathbf g_k,
    \qquad
    \bar\ell \;\triangleq\; \tfrac{1}{K}\textstyle\sum_k \ell_k,
\end{equation}
together with $\hat\rho_{\max} \triangleq \max_i \hat\rho_i$, $\ell_{\min} \triangleq \min_i \ell_i$, and $\ell_{\max} \triangleq \max_i \ell_i$. We also write $\|\mathbf v\|_{\mathbf p}^2 \triangleq \sum_k p_k v_k^2$ for the $\mathbf p$-weighted Euclidean norm; under uniform $\mathbf p$, $\|\mathbf v\|_{\mathbf p}^2 = \tfrac{1}{K} \sum_k v_k^2$.

\subsubsection{Structural assumptions}
\label{app:assumptions}

We make four structural assumptions on the reference student $\pi_{\mathrm{ref}}$ and its candidate-pool statistics. All four are mild conditions that are either standard in the convergence analysis of deep networks or directly verifiable from a single forward pass.

\begin{appassumption}[Hidden-state norm stability]
\label{ass:norm}
There exist constants $0 < C_- \leq C_+$ such that for every candidate trajectory $\mathbf y_k$ and every token position $t$,
\begin{equation}
    C_-^2 \;\leq\; \|\mathbf h_t\|^2 \;\leq\; C_+^2.
\end{equation}
\end{appassumption}
\begin{remark}
This holds naturally in modern LLMs employing RMSNorm~\citep{zhang2019root}, which constrains hidden-state norms to a tight interval; we have $C_+/C_- \approx 1$ in practice.
\end{remark}

\begin{appassumption}[Local Jacobian conditioning]
\label{ass:jacobian}
There exists $\Lambda > 0$ such that for every candidate trajectory $\mathbf y_k$,
\begin{equation}
    \lambda_{\max}\bigl(\mathbf J_H(\mathbf y_k)\, \mathbf J_H(\mathbf y_k)^\top\bigr) \;\leq\; \Lambda.
\end{equation}
\end{appassumption}
\begin{remark}
This is a standard local Lipschitz continuity assumption on the backbone, widely adopted in convergence analyses of SFT~\citep{liu2022loss}.
\end{remark}

\begin{appassumption}[Matrix cosine alignment]
\label{ass:align}
There exists $\zeta_- > 0$ such that for every candidate trajectory $\mathbf y_k$,
\begin{equation}
    \zeta(\mathbf y_k)
    \;\triangleq\;
    \frac{\|\boldsymbol\Delta^\top \mathbf H\|_F^2}
         {\|\boldsymbol\Delta\|_F^2 \cdot \|\mathbf H\|_F^2}
    \;\geq\; \zeta_- \;>\; 0.
    \label{eq:zeta-align}
\end{equation}
\end{appassumption}
\begin{remark}
$\zeta(\mathbf y_k)$ is the squared Frobenius cosine similarity between the residual matrix $\boldsymbol\Delta$ and the hidden-state matrix $\mathbf H$. Assumption~\ref{ass:align} requires that the joint signal between prediction residuals and hidden states does not degenerate, and is verifiable from a single forward pass.
\end{remark}

\begin{appassumption}[Strictly positive trajectory loss]
\label{ass:lmin}
$\ell_{\min} = \min_{i \in [K]} \ell_i > 0$.
\end{appassumption}
\begin{remark}
Under finite logits, $\ell_t > 0$ at every token, so $\ell_k > 0$ for every trajectory and Assumption~\ref{ass:lmin} holds automatically. We state it explicitly because it is needed in Appendix~\ref{app:proof-lemma2} to ensure the Hessian of $\rho(\boldsymbol\theta_{\mathrm{ref}};\, \mathbf q)$ is uniformly bounded over $\Delta^K$ via $D(\mathbf q) = \boldsymbol\ell^\top \mathbf q \geq \ell_{\min}$.
\end{remark}
\subsection{Auxiliary Lemmas}
\label{app:aux-lemmas}

This subsection collects building-block lemmas that are reused by the proofs in Appendices~\ref{app:proof-lemma1}--\ref{app:proof-thm1}. Lemmas~\ref{lem:lmhead_grad}--\ref{lem:backbone_bound} are the gradient identity and gradient norm bounds for the LM head and the backbone; they are the main tools for the proof of Lemma~\ref{lm:1} in Appendix~\ref{app:proof-lemma1}. Lemma~\ref{lem:taylor-decomp} gives the Taylor decomposition of $\rho(\boldsymbol\theta_{\mathrm{ref}};\, \mathbf q)$ around the uniform prior $\mathbf p$, together with the closed form of $g_k^*$ and the definition of $\hat g_k$ in terms of forward-pass quantities; this is the starting identity for the proof of Lemma~\ref{lm:chi2_bound} in Appendix~\ref{app:proof-lemma2}. Lemma~\ref{lem:gram-bound} provides a forward-computable upper bound on the entries of the gradient Gram matrix $G$ via the proxy $\hat\rho_i$, which controls every $G$-dependent quantity that appears in the proof of Lemma~\ref{lm:chi2_bound}. Finally, Lemma~\ref{lem:hessian-form} gives the closed-form Hessian of $\rho$ on $\Delta^K$, used to bound the second-order remainder $R_2$.

\subsubsection{LM head and backbone gradient identities and bounds}

\begin{lemma}[LM head gradient]
\label{lem:lmhead_grad}
Let $\ell_t = -\log \pi_t^k(y_t^k)$, $\boldsymbol\pi_t^k = \mathrm{softmax}(\mathbf z_t)$, and $\mathbf z_t = \mathbf W_{\mathrm{out}} \mathbf h_t$. Then
\begin{equation}
    \frac{\partial \ell_t}{\partial \mathbf W_{\mathrm{out}}}
    \;=\; \boldsymbol\delta_t\, \mathbf h_t^\top \;\in\; \mathbb R^{|\mathcal V|\times d},
    \qquad
    \frac{\partial \ell_t}{\partial \mathbf h_t}
    \;=\; \mathbf W_{\mathrm{out}}^\top\, \boldsymbol\delta_t.
    \label{eq:lmhead-grad}
\end{equation}
\end{lemma}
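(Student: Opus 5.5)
The plan is a direct chain-rule computation through the LM head, in two steps.

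\textbf{Logit gradient.} First I fix a token position $t$ and compute $\nabla_{\mathbf z_t}\ell_t$. Write $\ell_t = -\log \pi_t^k(y_t^k) = -z_{t,y_t} + \log\sum_{v} \exp(z_{t,v})$. Differentiating coordinatewise gives
\[
\partial \ell_t/\partial z_{t,v} = -\mathbb 1[v=y_t] + \exp(z_{t,v})/\textstyle\sum_{u}\exp(z_{t,u}) = \pi_t^k(v) - \mathbb 1[v=y_t].
\]
In vector form this is $\nabla_{\mathbf z_t}\ell_t = \boldsymbol\pi_t^k - \boldsymbol\delta(y_t^k) = \boldsymbol\delta_t$. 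This is the softmax--cross-entropy identity already invoked in the NTK section, so I will verify it explicitly rather than merely cite it.

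\textbf{Gradient with respect to $\mathbf W_{\mathrm{out}}$.} Next I push this through the linear map $\mathbf z_t = \mathbf W_{\mathrm{out}}\mathbf h_t$. Since $z_{t,v} = \sum_j W_{vj} h_{t,j}$, we have $\partial z_{t,v}/\partial W_{v'j} = \mathbb 1[v=v']\, h_{t,j}$, where $\mathbf h_t$ is held fixed because it depends only on $\boldsymbol\theta_{\mathrm{rest}}$. By the chain rule,
\[
\partial \ell_t/\partial W_{vj} = (\boldsymbol\delta_t)_v\, h_{t,j},
\]
which in matrix form is the outer product $\boldsymbol\delta_t\mathbf h_t^\top$.

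\textbf{Gradient with respect to $\mathbf h_t$.} Likewise $\partial z_{t,v}/\partial h_{t,j} = W_{vj}$. Hence $\partial\ell_t/\partial h_{t,j} = \sum_v W_{vj}(\boldsymbol\delta_t)_v$, that is, $\nabla_{\mathbf h_t}\ell_t = \mathbf W_{\mathrm{out}}^\top\boldsymbol\delta_t$.

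There is no real obstacle here. The only point needing care is the bookkeeping: the LM head is the only place $\mathbf W_{\mathrm{out}}$ enters the loss, and $\mathbf h_t$ does not depend on $\mathbf W_{\mathrm{out}}$. For an untied head this holds by the parameter decomposition fixed in the notation. If embeddings were tied, extra terms would appear, and I would remark that the decomposition treats $\mathbf W_{\mathrm{out}}$ as a separate block.
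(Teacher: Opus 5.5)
Your proof is correct and follows essentially the same route as the paper: the chain rule through the linear map $\mathbf z_t = \mathbf W_{\mathrm{out}}\mathbf h_t$, combined with the softmax--cross-entropy identity $\nabla_{\mathbf z_t}\ell_t = \boldsymbol\delta_t$. The differences are only presentational. You compute entrywise and derive the identity from the log-sum-exp form, and you spell out the $\mathbf h_t$ identity. The paper instead pairs the derivative with an arbitrary direction $\mathbf U$, concludes via $\langle \mathbf a, \mathbf U\mathbf c\rangle = \langle \mathbf a\mathbf c^\top, \mathbf U\rangle_F$, and calls the $\mathbf h_t$ identity immediate.
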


\begin{proof}
We prove the first identity; the second is immediate from the chain rule applied to $\mathbf z_t = \mathbf W_{\mathrm{out}} \mathbf h_t$. For any direction $\mathbf U \in \mathbb R^{|\mathcal V|\times d}$,
\begin{align}
    \left\langle\frac{\partial\ell_t}{\partial\mathbf W_{\mathrm{out}}},\,\mathbf U\right\rangle_F
    &\;\stackrel{(a)}{=}\;
    \left\langle\frac{\partial\ell_t}{\partial\mathbf z_t},\,
    \frac{\partial\mathbf z_t}{\partial\mathbf W_{\mathrm{out}}}[\mathbf U]\right\rangle
    \notag\\
    &\;\stackrel{(b)}{=}\;
    \left\langle\frac{\partial\ell_t}{\partial\mathbf z_t},\,\mathbf U \mathbf h_t\right\rangle
    \notag\\
    &\;\stackrel{(c)}{=}\;
    \sum_{v=1}^{|\mathcal V|}
    \frac{\partial\ell_t}{\partial z_{t,v}}\,[\mathbf U \mathbf h_t]_v
    \notag\\
    &\;\stackrel{(d)}{=}\;
    \sum_{v=1}^{|\mathcal V|}
    \bigl(\pi_t^k(v) - \mathbf{1}[v = y_t^k]\bigr)\,[\mathbf U \mathbf h_t]_v
    \notag\\
    &\;\stackrel{(e)}{=}\;
    \langle\boldsymbol\pi_t^k - \boldsymbol\delta(y_t^k),\,\mathbf U \mathbf h_t\rangle
    \notag\\
    &\;\stackrel{(f)}{=}\;
    \langle\boldsymbol\delta_t\, \mathbf h_t^\top,\,\mathbf U\rangle_F,
\end{align}
where (a) is the chain rule for the Fr\'echet derivative; (b) follows from the linearity of $\mathbf z_t = \mathbf W_{\mathrm{out}} \mathbf h_t$ in $\mathbf W_{\mathrm{out}}$, giving $\frac{\partial \mathbf z_t}{\partial \mathbf W_{\mathrm{out}}}[\mathbf U] = \mathbf U \mathbf h_t$; (c) expands the inner product componentwise; (d) is the standard softmax--cross-entropy gradient identity, $\partial \ell_t / \partial z_{t,v} = \pi_t^k(v) - \mathbf{1}[v = y_t^k]$, obtained by applying the quotient rule to $\pi_t^k(y_t^k) = e^{z_{t,y_t^k}}/\sum_{v'} e^{z_{t,v'}}$; (e) recognizes the sum as $\langle \boldsymbol\delta_t, \mathbf U \mathbf h_t\rangle$ with $\boldsymbol\delta_t = \boldsymbol\pi_t^k - \boldsymbol\delta(y_t^k)$; and (f) uses the trace identity $\langle \mathbf a, \mathbf B \mathbf c\rangle = \langle \mathbf a \mathbf c^\top, \mathbf B\rangle_F$. Since $\mathbf U$ was arbitrary, $\partial \ell_t / \partial \mathbf W_{\mathrm{out}} = \boldsymbol\delta_t \mathbf h_t^\top$.
\end{proof}

\begin{lemma}[LM head gradient bounds]
\label{lem:lmhead_bounds}
Under Assumptions~\ref{ass:norm} and~\ref{ass:align}, for every candidate trajectory $\mathbf y_k$,
\begin{equation}
    \zeta_-\, C_-^2\, \mathrm{Brier}(\mathbf y_k)
    \;\leq\;
    \|\nabla_{\mathbf W_{\mathrm{out}}}\ell_k\|_F^2
    \;\leq\;
    C_+^2\, \mathrm{Brier}(\mathbf y_k).
    \label{eq:lmhead-bounds}
\end{equation}
\end{lemma}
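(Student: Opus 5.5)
The plan is to start from Lemma~\ref{lem:lmhead_grad} and sum over positions. Since $\ell_k = \tfrac{1}{|a_k|}\sum_t \ell_t$, the LM-head gradient is
\[
\nabla_{\mathbf W_{\mathrm{out}}}\ell_k \;=\; \tfrac{1}{|a_k|}\sum_t \boldsymbol\delta_t \mathbf h_t^\top \;=\; \tfrac{1}{|a_k|}\,\boldsymbol\Delta^\top \mathbf H,
\]
with $\boldsymbol\Delta,\mathbf H$ the stacked matrices of~\eqref{eq:Delta-H-def}. Hence
\[
\|\nabla_{\mathbf W_{\mathrm{out}}}\ell_k\|_F^2 \;=\; \|\boldsymbol\Delta^\top\mathbf H\|_F^2/|a_k|^2 .
\]
Both bounds then reduce to two-sided estimates of $\|\boldsymbol\Delta^\top\mathbf H\|_F^2$ in terms of $\|\boldsymbol\Delta\|_F^2\,\|\mathbf H\|_F^2$.

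\textbf{Lower bound.} By the definition of $\zeta(\mathbf y_k)$ in~\eqref{eq:zeta-align} and Assumption~\ref{ass:align},
\[
\|\boldsymbol\Delta^\top\mathbf H\|_F^2 \;=\; \zeta(\mathbf y_k)\,\|\boldsymbol\Delta\|_F^2\,\|\mathbf H\|_F^2 \;\ge\; \zeta_-\,\|\boldsymbol\Delta\|_F^2\,\|\mathbf H\|_F^2 .
\]
Assumption~\ref{ass:norm} gives $\|\mathbf H\|_F^2=\sum_t\|\mathbf h_t\|^2 \ge |a_k|\,C_-^2$. Dividing by $|a_k|^2$ and using $\mathrm{Brier}(\mathbf y_k)=\|\boldsymbol\Delta\|_F^2/|a_k|$ from~\eqref{eq:brier-def} yields $\zeta_-C_-^2\,\mathrm{Brier}(\mathbf y_k)$.

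\textbf{Upper bound.} The naive route uses submultiplicativity, $\|\boldsymbol\Delta^\top\mathbf H\|_F\le \|\boldsymbol\Delta\|_F\|\mathbf H\|_F$ (equivalently $\zeta\le 1$ by Cauchy--Schwarz). This gives $\|\boldsymbol\Delta\|_F^2\cdot|a_k|C_+^2/|a_k|^2 = C_+^2\,\mathrm{Brier}(\mathbf y_k)$, which is exactly the claimed constant. An alternative that avoids Frobenius bookkeeping is the triangle inequality with Cauchy--Schwarz over $t$:
\[
\Bigl\|\tfrac{1}{|a_k|}\sum_t\boldsymbol\delta_t\mathbf h_t^\top\Bigr\|_F^2 \;\le\; \tfrac{1}{|a_k|}\sum_t\|\boldsymbol\delta_t\|^2\|\mathbf h_t\|^2 \;\le\; C_+^2\,\mathrm{Brier}(\mathbf y_k).
\]
This uses $\|\boldsymbol\delta_t\mathbf h_t^\top\|_F=\|\boldsymbol\delta_t\|\,\|\mathbf h_t\|$ and Jensen for the average.

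There is no real obstacle here. The only point needing care is the normalization: the factor $1/|a_k|^2$ from the gradient must combine correctly with the factor $|a_k|$ from $\|\mathbf H\|_F^2$ to produce the length-normalized Brier score rather than the unnormalized residual sum. The lower bound also relies on the exact identity defining $\zeta$, not on an inequality.
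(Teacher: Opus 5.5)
Your proposal is correct and follows essentially the same route as the paper: write $\nabla_{\mathbf W_{\mathrm{out}}}\ell_k = \tfrac{1}{|a_k|}\boldsymbol\Delta^\top\mathbf H$, then obtain the lower bound from Assumption~\ref{ass:align} together with $\|\mathbf H\|_F^2\ge |a_k|C_-^2$, and the upper bound from $\|\boldsymbol\Delta^\top\mathbf H\|_F\le\|\boldsymbol\Delta\|_F\|\mathbf H\|_F$ together with $\|\mathbf H\|_F^2\le |a_k|C_+^2$. Your alternative triangle-inequality/Jensen argument for the upper bound is also valid and gives the same constant $C_+^2$, but it is not needed.
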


\begin{proof}
By Lemma~\ref{lem:lmhead_grad} and the definition $\ell_k = \tfrac{1}{|a_k|}\sum_t \ell_t$,
\begin{equation}
    \nabla_{\mathbf W_{\mathrm{out}}}\ell_k
    \;=\; \tfrac{1}{|a_k|}\textstyle\sum_t \boldsymbol\delta_t \mathbf h_t^\top
    \;=\; \tfrac{1}{|a_k|}\boldsymbol\Delta^\top \mathbf H,
    \label{eq:lmhead_exact}
\end{equation}
where the second equality follows from the definitions of $\boldsymbol\Delta, \mathbf H$ in~\eqref{eq:Delta-H-def}. Hence $\|\nabla_{\mathbf W_{\mathrm{out}}}\ell_k\|_F^2 = \|\boldsymbol\Delta^\top \mathbf H\|_F^2 / |a_k|^2$.

\textit{Lower bound.}
\begin{align}
    \|\nabla_{\mathbf W_{\mathrm{out}}}\ell_k\|_F^2
    &\;\stackrel{(a)}{=}\;
    \frac{\|\boldsymbol\Delta^\top \mathbf H\|_F^2}{|a_k|^2}
    \notag\\
    &\;\stackrel{(b)}{\geq}\;
    \frac{\zeta_-\, \|\boldsymbol\Delta\|_F^2\, \|\mathbf H\|_F^2}{|a_k|^2}
    \notag\\
    &\;\stackrel{(c)}{\geq}\;
    \frac{\zeta_-\, |a_k|\, \mathrm{Brier}(\mathbf y_k) \cdot |a_k|\, C_-^2}{|a_k|^2}
    \notag\\
    &\;=\;
    \zeta_-\, C_-^2\, \mathrm{Brier}(\mathbf y_k),
\end{align}
where (a) is~\eqref{eq:lmhead_exact}; (b) applies Assumption~\ref{ass:align}, which gives $\|\boldsymbol\Delta^\top \mathbf H\|_F^2 \geq \zeta_-\, \|\boldsymbol\Delta\|_F^2\, \|\mathbf H\|_F^2$; and (c) substitutes $\|\boldsymbol\Delta\|_F^2 = |a_k|\, \mathrm{Brier}(\mathbf y_k)$ from~\eqref{eq:brier-def} together with $\|\mathbf H\|_F^2 = \sum_t \|\mathbf h_t\|^2 \geq |a_k|\, C_-^2$ from Assumption~\ref{ass:norm}.

\textit{Upper bound.}
\begin{align}
    \|\nabla_{\mathbf W_{\mathrm{out}}}\ell_k\|_F^2
    &\;\stackrel{(d)}{=}\;
    \frac{\|\boldsymbol\Delta^\top \mathbf H\|_F^2}{|a_k|^2}
    \notag\\
    &\;\stackrel{(e)}{\leq}\;
    \frac{\|\boldsymbol\Delta\|_F^2\, \|\mathbf H\|_F^2}{|a_k|^2}
    \notag\\
    &\;\stackrel{(f)}{\leq}\;
    \frac{|a_k|\, \mathrm{Brier}(\mathbf y_k) \cdot |a_k|\, C_+^2}{|a_k|^2}
    \notag\\
    &\;=\;
    C_+^2\, \mathrm{Brier}(\mathbf y_k),
\end{align}
where (d) is~\eqref{eq:lmhead_exact}; (e) is the matrix Cauchy--Schwarz inequality $\|\boldsymbol\Delta^\top \mathbf H\|_F^2 \leq \|\boldsymbol\Delta\|_F^2\, \|\mathbf H\|_F^2$; and (f) substitutes $\|\boldsymbol\Delta\|_F^2 = |a_k|\, \mathrm{Brier}(\mathbf y_k)$ from~\eqref{eq:brier-def} together with $\|\mathbf H\|_F^2 = \sum_t \|\mathbf h_t\|^2 \leq |a_k|\, C_+^2$ from Assumption~\ref{ass:norm}.
\end{proof}

\begin{lemma}[Backbone gradient bound]
\label{lem:backbone_bound}
Under Assumptions~\ref{ass:norm} and~\ref{ass:jacobian}, for every candidate trajectory $\mathbf y_k$,
\begin{equation}
    \|\nabla_{\boldsymbol\theta_{\mathrm{rest}}}\ell_k\|^2
    \;\leq\;
    \frac{\Lambda\, \|\mathbf W_{\mathrm{out}}\|_{\mathrm{op}}^2}{|a_k|}\, \mathrm{Brier}(\mathbf y_k).
    \label{eq:backbone-bound}
\end{equation}
\end{lemma}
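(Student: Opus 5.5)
The plan is to push the backbone gradient through the LM head using the chain rule, and then bound the result with the Jacobian conditioning of Assumption~\ref{ass:jacobian}. Lemma~\ref{lem:lmhead_grad} gives $\partial \ell_t/\partial \mathbf h_t = \mathbf W_{\mathrm{out}}^\top \boldsymbol\delta_t$. Since $\ell_k = \tfrac{1}{|a_k|}\sum_t \ell_t$, the chain rule then yields
\begin{equation*}
\nabla_{\boldsymbol\theta_{\mathrm{rest}}}\ell_k \;=\; \tfrac{1}{|a_k|}\,\mathbf J_H^\top \mathbf u, \qquad \mathbf u \triangleq \bigl(\mathbf W_{\mathrm{out}}^\top\boldsymbol\delta_1;\ldots;\mathbf W_{\mathrm{out}}^\top\boldsymbol\delta_{|a_k|}\bigr)\in\mathbb R^{|a_k| d}.
\end{equation*}
Here $\mathbf J_H$ is the stacked sequence-level Jacobian, whose $t$-th row block is $\partial\mathbf h_t/\partial\boldsymbol\theta_{\mathrm{rest}}$.

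Next I will bound the squared norm through the quadratic form:
\begin{equation*}
\|\mathbf J_H^\top\mathbf u\|^2 = \mathbf u^\top \mathbf J_H\mathbf J_H^\top\mathbf u \le \lambda_{\max}(\mathbf J_H\mathbf J_H^\top)\|\mathbf u\|^2 \le \Lambda\|\mathbf u\|^2 .
\end{equation*}
The last inequality is Assumption~\ref{ass:jacobian}. Then, block by block,
\begin{equation*}
\|\mathbf u\|^2=\sum_t\|\mathbf W_{\mathrm{out}}^\top\boldsymbol\delta_t\|^2\le\|\mathbf W_{\mathrm{out}}\|_{\mathrm{op}}^2\sum_t\|\boldsymbol\delta_t\|^2=\|\mathbf W_{\mathrm{out}}\|_{\mathrm{op}}^2\,|a_k|\,\mathrm{Brier}(\mathbf y_k),
\end{equation*}
by~\eqref{eq:brier-def}. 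Putting the pieces together gives $\|\nabla_{\boldsymbol\theta_{\mathrm{rest}}}\ell_k\|^2\le \tfrac{1}{|a_k|^2}\Lambda\|\mathbf W_{\mathrm{out}}\|_{\mathrm{op}}^2|a_k|\mathrm{Brier}(\mathbf y_k)$, which is exactly~\eqref{eq:backbone-bound}.

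The main obstacle is bookkeeping rather than analysis. I need to check that the stacking convention for $\mathbf J_H$ makes $\sum_t (\partial\mathbf h_t/\partial\boldsymbol\theta_{\mathrm{rest}})^\top \mathbf W_{\mathrm{out}}^\top\boldsymbol\delta_t$ equal to $\mathbf J_H^\top\mathbf u$. The key point is that Assumption~\ref{ass:jacobian} bounds the whole sequence Jacobian jointly, not each token separately. This joint bound is what avoids an extra factor of $|a_k|$ from the triangle inequality and produces the favorable $1/|a_k|$ factor. In this step the hidden states enter only implicitly through the Jacobian, so Assumption~\ref{ass:norm} is not actually needed; it is cited only for consistency with Lemma~\ref{lem:lmhead_bounds}. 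I also need to confirm that $\mathbf W_{\mathrm{out}}$ is held fixed, i.e.\ that it is not part of $\boldsymbol\theta_{\mathrm{rest}}$, so that no additional term appears. With tied embeddings this would fail, and that case is excluded by the parameter split above.
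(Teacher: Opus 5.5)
Your proposal is correct and follows the paper's proof essentially step for step: chain rule through $\partial\ell_t/\partial\mathbf h_t=\mathbf W_{\mathrm{out}}^\top\boldsymbol\delta_t$ to write the gradient as $\mathbf J_H^\top$ applied to $\tfrac{1}{|a_k|}\mathrm{vec}(\boldsymbol\Delta\mathbf W_{\mathrm{out}})$ (your $\mathbf u$), then $\lambda_{\max}(\mathbf J_H\mathbf J_H^\top)\le\Lambda$, then $\|\boldsymbol\Delta\mathbf W_{\mathrm{out}}\|_F\le\|\mathbf W_{\mathrm{out}}\|_{\mathrm{op}}\|\boldsymbol\Delta\|_F$, which is exactly your block-by-block estimate. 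Your observation that Assumption~\ref{ass:norm} is never actually used is also accurate for the paper's argument.
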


\begin{proof}
By the chain rule together with the second identity of Lemma~\ref{lem:lmhead_grad}, $\partial \ell_t / \partial \mathbf h_t = \mathbf W_{\mathrm{out}}^\top \boldsymbol\delta_t$, hence
\begin{equation}
    \nabla_{\boldsymbol\theta_{\mathrm{rest}}}\ell_k
    \;=\; \mathbf J_H(\mathbf y_k)^\top\,
    \mathrm{vec}\!\left(\tfrac{1}{|a_k|}\boldsymbol\Delta\, \mathbf W_{\mathrm{out}}\right).
    \label{eq:backbone_seq}
\end{equation}
Then
\begin{align}
    \|\nabla_{\boldsymbol\theta_{\mathrm{rest}}}\ell_k\|^2
    &\;\stackrel{(a)}{\leq}\;
    \lambda_{\max}\bigl(\mathbf J_H(\mathbf y_k)\, \mathbf J_H(\mathbf y_k)^\top\bigr)
    \left\|\mathrm{vec}\!\left(\tfrac{1}{|a_k|}\boldsymbol\Delta\, \mathbf W_{\mathrm{out}}\right)\right\|^2
    \notag\\
    &\;\stackrel{(b)}{\leq}\;
    \frac{\Lambda}{|a_k|^2}\, \|\boldsymbol\Delta\, \mathbf W_{\mathrm{out}}\|_F^2
    \notag\\
    &\;\stackrel{(c)}{\leq}\;
    \frac{\Lambda\, \|\mathbf W_{\mathrm{out}}\|_{\mathrm{op}}^2}{|a_k|^2}\, \|\boldsymbol\Delta\|_F^2
    \notag\\
    &\;\stackrel{(d)}{=}\;
    \frac{\Lambda\, \|\mathbf W_{\mathrm{out}}\|_{\mathrm{op}}^2}{|a_k|}\, \mathrm{Brier}(\mathbf y_k),
\end{align}
where (a) uses $\|\mathbf A^\top \mathbf v\|^2 \leq \lambda_{\max}(\mathbf A \mathbf A^\top) \|\mathbf v\|^2$ with $\mathbf A = \mathbf J_H(\mathbf y_k)$; (b) applies Assumption~\ref{ass:jacobian} together with the identity $\|\mathrm{vec}(\mathbf M)\|^2 = \|\mathbf M\|_F^2$; (c) uses the submultiplicativity $\|\boldsymbol\Delta\, \mathbf W_{\mathrm{out}}\|_F \leq \|\mathbf W_{\mathrm{out}}\|_{\mathrm{op}} \|\boldsymbol\Delta\|_F$; and (d) substitutes $\|\boldsymbol\Delta\|_F^2 = |a_k|\, \mathrm{Brier}(\mathbf y_k)$ from~\eqref{eq:brier-def}.
\end{proof}

\subsubsection{Taylor decomposition of \texorpdfstring{$\rho(\boldsymbol\theta_{\mathrm{ref}};\, \mathbf q)$}{rho(theta\_ref; q)}}

The next lemma packages the algebraic preliminaries for the linearization argument: the closed form of the exact directional derivative $g_k^*$, the definition of the forward-pass surrogate $\hat g_k$, and the resulting decomposition of the change in $\rho$ into a linear term plus two residuals.

\begin{lemma}[Taylor decomposition]
\label{lem:taylor-decomp}
Let $N(\mathbf q) \triangleq \|\sum_k q_k \mathbf g_k\|^2 = \mathbf q^\top G \mathbf q$ and $D(\mathbf q) \triangleq \sum_k q_k \ell_k = \boldsymbol\ell^\top \mathbf q$, so that $\rho(\boldsymbol\theta_{\mathrm{ref}};\, \mathbf q) = N(\mathbf q)/D(\mathbf q)$. Then:
\begin{itemize}[leftmargin=1.4em,topsep=2pt,itemsep=1pt]
\item[(i)] (\emph{Closed form of $g_k^*$.}) The exact directional derivative at $\mathbf p$ satisfies
\begin{equation}
    g_k^*
    \;=\;
    \left.\frac{\partial}{\partial q_k}\frac{N(\mathbf q)}{D(\mathbf q)}\right|_{\mathbf q = \mathbf p}
    \;=\;
    \frac{2\, \mathbf g_k^\top \bar{\mathbf g}}{\bar\ell}
    \;-\;
    \frac{\rho(\boldsymbol\theta_{\mathrm{ref}};\, \mathbf p)\, \ell_k}{\bar\ell}.
    \label{eq:gstar-closed}
\end{equation}

\item[(ii)] (\emph{Forward-pass surrogate.}) Applying the diagonal approximations $\mathbf g_k^\top \bar{\mathbf g} \approx \|\mathbf g_k\|^2 / K$ and $\|\bar{\mathbf g}\|^2 \approx \tfrac{1}{K^2}\sum_i \|\mathbf g_i\|^2$ in~\eqref{eq:gstar-closed}, and substituting the proxy $\hat\rho_i$ for $\rho_i^* = \|\mathbf g_i\|^2/\ell_i$, gives
\begin{equation}
    \hat g_k
    \;=\;
    \frac{\ell_k}{\sum_i \ell_i}
    \!\left(2\hat\rho_k - \frac{\sum_i \hat\rho_i\, \ell_i}{\sum_i \ell_i}\right).
    \label{eq:ghat-closed}
\end{equation}

\item[(iii)] (\emph{Decomposition.}) Setting $\boldsymbol\eta \triangleq \hat{\mathbf g} - \mathbf g^*$, the change in $\rho$ admits the decomposition
\begin{equation}
    \rho(\boldsymbol\theta_{\mathrm{ref}};\, \mathbf q) - \rho(\boldsymbol\theta_{\mathrm{ref}};\, \mathbf p)
    \;=\;
    \langle \mathbf q - \mathbf p,\, \hat{\mathbf g}\rangle
    \;+\; R_1
    \;+\; R_2,
    \label{eq:rho-decomp}
\end{equation}
where
\begin{equation}
    R_1 \;\triangleq\; -\langle \mathbf q - \mathbf p,\, \boldsymbol\eta\rangle,
    \qquad
    R_2 \;\triangleq\;
    \rho(\boldsymbol\theta_{\mathrm{ref}};\, \mathbf q) - \rho(\boldsymbol\theta_{\mathrm{ref}};\, \mathbf p) - \langle \mathbf q - \mathbf p,\, \mathbf g^*\rangle
    \label{eq:R1R2-def}
\end{equation}
are the surrogate substitution error and the second-order Taylor remainder, respectively.
\end{itemize}
\end{lemma}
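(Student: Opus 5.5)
We prove the three parts in order, using only the quotient rule, the definitions of $N$, $D$, $\rho_k^*$, and the proxy $\hat\rho_k$ from Lemma~\ref{lm:1}.

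\textbf{Part (i).} Since $N(\mathbf q) = \mathbf q^\top G\mathbf q$ with $G$ symmetric, we have $\partial_{q_k} N = 2(G\mathbf q)_k = 2\,\mathbf g_k^\top \sum_i q_i \mathbf g_i$. Since $D$ is affine, $\partial_{q_k} D = \ell_k$. The quotient rule then gives
\[
\partial_{q_k}(N/D) = \frac{2\,\mathbf g_k^\top \sum_i q_i\mathbf g_i}{D} - \frac{N}{D}\cdot\frac{\ell_k}{D}.
\]
At $\mathbf q=\mathbf p$ we have $\sum_i p_i\mathbf g_i = \bar{\mathbf g}$, $D(\mathbf p)=\bar\ell$, and $N(\mathbf p)/D(\mathbf p) = \rho(\boldsymbol\theta_{\mathrm{ref}};\mathbf p)$. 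Substituting these yields \eqref{eq:gstar-closed}. The derivative is taken in the ambient coordinates of $\mathbb R^K$; this is legitimate because $D>0$ in a neighborhood of $\mathbf p$ by Assumption~\ref{ass:lmin}.

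\textbf{Part (ii).} Start from \eqref{eq:gstar-closed} and write $\rho(\boldsymbol\theta_{\mathrm{ref}};\mathbf p) = \|\bar{\mathbf g}\|^2/\bar\ell$. Insert the two diagonal approximations $\mathbf g_k^\top\bar{\mathbf g}\approx\|\mathbf g_k\|^2/K$ and $\|\bar{\mathbf g}\|^2 \approx K^{-2}\sum_i\|\mathbf g_i\|^2$, and use $\bar\ell = K^{-1}\sum_i\ell_i$. The first term becomes $2\|\mathbf g_k\|^2/\sum_i\ell_i$. The second becomes $\bigl(\sum_i\|\mathbf g_i\|^2/\sum_i\ell_i\bigr)\cdot \ell_k/\sum_i\ell_i$, because the powers of $K$ cancel exactly. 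Next, substitute $\|\mathbf g_i\|^2 = \rho_i^*\ell_i$ and factor out $\ell_k/\sum_i\ell_i$; this gives $\frac{\ell_k}{\sum_i\ell_i}\bigl(2\rho_k^* - \sum_i\rho_i^*\ell_i/\sum_i\ell_i\bigr)$. Finally, replace $\rho_i^*$ by $\hat\rho_i$ to obtain \eqref{eq:ghat-closed}. This part is a definition justified by a heuristic, so the proof only needs to check that the algebra produces exactly this expression.

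\textbf{Part (iii).} This is a telescoping identity. Take the definition of $R_2$ in \eqref{eq:R1R2-def} and add and subtract $\langle\mathbf q-\mathbf p,\hat{\mathbf g}\rangle$. Since $\langle \mathbf q-\mathbf p,\mathbf g^*\rangle = \langle\mathbf q-\mathbf p,\hat{\mathbf g}\rangle - \langle\mathbf q-\mathbf p,\boldsymbol\eta\rangle$, we get
\[
\rho(\mathbf q)-\rho(\mathbf p) = \langle\mathbf q-\mathbf p,\hat{\mathbf g}\rangle - \langle\mathbf q-\mathbf p,\boldsymbol\eta\rangle + R_2 = \langle\mathbf q-\mathbf p,\hat{\mathbf g}\rangle + R_1 + R_2.
\]
The sign convention $R_1 = -\langle\mathbf q-\mathbf p,\boldsymbol\eta\rangle = \sum_k(g_k^*-\hat g_k)(q_k-p_k)$ agrees with the main text.

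The only point needing care is part (i). There, one must keep the differentiation in ambient coordinates rather than along the simplex, note the factor $2$ that comes from the symmetry of $G$, and confirm that the normalization by $\bar\ell$ versus $\sum_i\ell_i$ is tracked consistently when passing to part (ii). Everything else is bookkeeping.
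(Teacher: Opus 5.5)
Your proposal is correct and follows essentially the same route as the paper: the quotient rule evaluated at $\mathbf p$ for (i); substitution of the two diagonal approximations together with $K\bar\ell=\sum_i\ell_i$ and $\|\mathbf g_i\|^2=\rho_i^*\ell_i\mapsto\hat\rho_i\ell_i$ for (ii); and adding and subtracting $\langle\mathbf q-\mathbf p,\hat{\mathbf g}\rangle$ in the definition of $R_2$ for (iii). Your algebra checks out, including the exact cancellation of the powers of $K$ in the second term of (ii).
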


\begin{proof}
For (i), apply the quotient rule to $N(\mathbf q)/D(\mathbf q)$:
\begin{equation}
    \frac{\partial}{\partial q_k}\frac{N(\mathbf q)}{D(\mathbf q)}
    \;=\;
    \frac{D(\mathbf q) \cdot 2\, \mathbf g_k^\top \!\sum_i q_i \mathbf g_i - N(\mathbf q)\, \ell_k}{D(\mathbf q)^2}.
\end{equation}
Evaluating at $\mathbf q = \mathbf p$ with $p_k = 1/K$ gives $\sum_i p_i \mathbf g_i = \bar{\mathbf g}$, $D(\mathbf p) = \bar\ell$, and $N(\mathbf p)/D(\mathbf p) = \rho(\boldsymbol\theta_{\mathrm{ref}};\, \mathbf p)$, yielding~\eqref{eq:gstar-closed}.

For (ii), substitute $\mathbf g_k^\top \bar{\mathbf g} \approx \|\mathbf g_k\|^2/K$ and $\rho(\boldsymbol\theta_{\mathrm{ref}};\, \mathbf p) = \|\bar{\mathbf g}\|^2 / \bar\ell \approx \tfrac{1}{K^2 \bar\ell}\sum_i \|\mathbf g_i\|^2$ into~\eqref{eq:gstar-closed}, and replace each $\|\mathbf g_i\|^2 = \rho_i^*\, \ell_i$ by $\hat\rho_i\, \ell_i$. Using $K\bar\ell = \sum_i \ell_i$ and simplifying gives~\eqref{eq:ghat-closed}.

For (iii), expand $\rho(\boldsymbol\theta_{\mathrm{ref}};\, \mathbf q) - \rho(\boldsymbol\theta_{\mathrm{ref}};\, \mathbf p) = \langle \mathbf q - \mathbf p, \mathbf g^*\rangle + R_2$ by definition of $R_2$, and add and subtract $\langle \mathbf q - \mathbf p, \hat{\mathbf g}\rangle$ to obtain~\eqref{eq:rho-decomp} with $R_1$ as in~\eqref{eq:R1R2-def}.
\end{proof}

\subsubsection{Forward-computable bound on the gradient Gram matrix}

The next lemma upgrades the per-trajectory bound of Lemma~\ref{lm:1} to a uniform, forward-computable bound on every entry of the gradient Gram matrix $G$. It is the main tool that lets us replace gradient-dependent quantities (such as $\mathbf g_k^\top \bar{\mathbf g}$, $\|\bar{\mathbf g}\|^2$, and $\|G\|_{\mathrm{op}}$) by forward-only quantities throughout the proofs of Lemma~\ref{lm:chi2_bound} and Theorem~\ref{thm:lark-objective}.

\begin{lemma}[Gram-matrix entry bound]
\label{lem:gram-bound}
Assume Assumptions~\ref{ass:norm}--\ref{ass:align}, and let
\begin{equation}
    E_+
    \;\triangleq\;
    C_+^2 \max_{i \in [K]}\bigl(1 + \varepsilon_{\mathrm{head}}(|a_i|)\bigr),
    \qquad
    \varepsilon_{\mathrm{head}}(|a_i|)
    \;=\;
    \frac{\Lambda\, \|\mathbf W_{\mathrm{out}}\|_{\mathrm{op}}^2}{C_+^2\, |a_i|},
    \label{eq:Eplus-def}
\end{equation}
so that Lemma~\ref{lm:1} yields the uniform inequality $\rho_i^* \leq E_+\, \hat\rho_i$ for every $i \in [K]$. Then for every $i, j \in [K]$,
\begin{equation}
    |G_{ij}|
    \;\leq\;
    \sqrt{G_{ii}\, G_{jj}}
    \;=\;
    \sqrt{\rho_i^*\, \rho_j^*\, \ell_i\, \ell_j}
    \;\leq\;
    E_+ \sqrt{\hat\rho_i\, \hat\rho_j\, \ell_i\, \ell_j}.
    \label{eq:Gij-bound}
\end{equation}
\end{lemma}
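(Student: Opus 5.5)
The statement has three links in a chain, and I will establish them in order.

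The first inequality, $|G_{ij}| \le \sqrt{G_{ii}G_{jj}}$, is Cauchy--Schwarz in parameter space, since $G_{ij} = \langle \mathbf g_i, \mathbf g_j\rangle$ and $G_{ii} = \|\mathbf g_i\|^2$.

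The middle equality is a rewriting via the definition of the per-trajectory rate. From $\rho_i^* = \|\mathbf g_i\|^2/\ell_i$ we get $G_{ii} = \rho_i^*\ell_i$, and similarly for $j$. Since $\ell_i > 0$ by Assumption~\ref{ass:lmin}, the square root of the product is well defined.

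The last inequality reduces to the per-trajectory bound $\rho_i^* \le E_+\hat\rho_i$. Substituting it for both $i$ and $j$ under the monotone square root gives $\sqrt{\rho_i^*\rho_j^*\ell_i\ell_j} \le E_+\sqrt{\hat\rho_i\hat\rho_j\ell_i\ell_j}$. The factor $E_+$ exits the root exactly, because $\sqrt{E_+^2} = E_+$.

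The substantive work is justifying $\rho_i^* \le E_+\hat\rho_i$, which the statement attributes to Lemma~\ref{lm:1}. I will derive it directly from the auxiliary gradient bounds so the constant comes out explicitly.

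\begin{itemize}
\item Split the gradient into head and backbone parts, so that $\|\mathbf g_i\|^2 = \|\nabla_{\mathbf W_{\mathrm{out}}}\ell_i\|_F^2 + \|\nabla_{\boldsymbol\theta_{\mathrm{rest}}}\ell_i\|^2$.
\item Bound the first term by $C_+^2\,\mathrm{Brier}(\mathbf y_i)$ using Lemma~\ref{lem:lmhead_bounds}.
\item Bound the second term by $\Lambda\|\mathbf W_{\mathrm{out}}\|_{\mathrm{op}}^2\,\mathrm{Brier}(\mathbf y_i)/|a_i|$ using Lemma~\ref{lem:backbone_bound}.
\item Sum the two bounds to get
\[
\|\mathbf g_i\|^2 \le C_+^2\bigl(1+\varepsilon_{\mathrm{head}}(|a_i|)\bigr)\,\mathrm{Brier}(\mathbf y_i).
\]
\item Divide by $\ell_i$ and use the identity \eqref{eq:rhohat-equiv}, $\hat\rho_i = \mathrm{Brier}(\mathbf y_i)/\ell_i$. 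This gives $\rho_i^* \le C_+^2(1+\varepsilon_{\mathrm{head}}(|a_i|))\,\hat\rho_i$.
\item Take the maximum over $i$ to replace this trajectory-dependent constant by the uniform constant $E_+$.
\end{itemize}

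The main point needing care is the bookkeeping of constants in this last derivation. The upper direction uses only Assumptions~\ref{ass:norm} and~\ref{ass:jacobian}; Assumption~\ref{ass:align} matters only for the lower bound in Lemma~\ref{lm:1}. The two gradient blocks enter additively, since they are orthogonal parameter blocks, and the length-dependence of $\varepsilon_{\mathrm{head}}$ is absorbed through the maximum in $E_+$. Beyond this, I expect no real obstacle.
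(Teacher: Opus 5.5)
Your proposal is correct and follows the paper's proof exactly: Cauchy--Schwarz, the identity $G_{ii}=\rho_i^*\ell_i$, and then $\rho_i^*\le E_+\hat\rho_i$ from Lemma~\ref{lm:1-formal} applied to both $i$ and $j$. Your inlined derivation of $\rho_i^*\le E_+\hat\rho_i$ reproduces steps (e)--(i) of that lemma's proof, and you do not need Assumption~\ref{ass:lmin} (not among this lemma's hypotheses) for the square root, since the product is nonnegative and $\ell_i>0$ is already implicit in the definition of $\rho_i^*$.
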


\begin{proof}
We have
\begin{align}
    |G_{ij}|
    &\;\stackrel{(a)}{\leq}\;
    \sqrt{G_{ii}\, G_{jj}}
    \notag\\
    &\;\stackrel{(b)}{=}\;
    \sqrt{\rho_i^*\, \rho_j^*\, \ell_i\, \ell_j}
    \notag\\
    &\;\stackrel{(c)}{\leq}\;
    E_+ \sqrt{\hat\rho_i\, \hat\rho_j\, \ell_i\, \ell_j},
\end{align}
where (a) is the entry-wise Cauchy--Schwarz bound for inner products, $|\langle \mathbf g_i, \mathbf g_j\rangle| \leq \|\mathbf g_i\| \|\mathbf g_j\|$; (b) uses $G_{ii} = \|\mathbf g_i\|^2 = \rho_i^*\, \ell_i$ from the definition $\rho_i^* = \|\mathbf g_i\|^2 / \ell_i$; and (c) applies the upper part of Lemma~\ref{lm:1} entry-wise to each of $\rho_i^*$ and $\rho_j^*$.
\end{proof}

\begin{remark}
All bounds derived in Appendix~\ref{app:proof-lemma2} (the proof of Lemma~\ref{lm:chi2_bound}) trace back to~\eqref{eq:Gij-bound}. In particular, both the off-diagonal residuals appearing in the surrogate substitution error and the operator-norm bound on $G$ used in the Hessian analysis are obtained by summing~\eqref{eq:Gij-bound} over the appropriate index sets.
\end{remark}

\subsubsection{Hessian of \texorpdfstring{$\rho$}{rho} on the simplex}

\begin{lemma}[Hessian formula]
\label{lem:hessian-form}
On the relative interior of $\Delta^K$, the Hessian of $\rho(\boldsymbol\theta_{\mathrm{ref}};\, \mathbf q) = N(\mathbf q)/D(\mathbf q)$ in $\mathbf q$ is
\begin{equation}
    \nabla^2_{\mathbf q}\, \rho(\boldsymbol\theta_{\mathrm{ref}};\, \mathbf q)
    \;=\;
    \frac{1}{D(\mathbf q)}
    \Bigl[\,
        2\, G
        \;-\; \boldsymbol\ell\, (\nabla \rho)^\top
        \;-\; (\nabla \rho)\, \boldsymbol\ell^\top
    \,\Bigr],
    \label{eq:hessian-form}
\end{equation}
where $\nabla \rho = (2\, G \mathbf q - \rho\, \boldsymbol\ell)/D(\mathbf q)$.
\end{lemma}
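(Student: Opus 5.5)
The plan is a direct second-order differentiation of the quotient $\rho = N/D$, with $N(\mathbf q)=\mathbf q^\top G\mathbf q$ and $D(\mathbf q)=\boldsymbol\ell^\top\mathbf q$ as in Lemma~\ref{lem:taylor-decomp}. On the relative interior of $\Delta^K$ (indeed on the open set $\{D>0\}$, which contains $\Delta^K$ by Assumption~\ref{ass:lmin}), both $N$ and $D$ are smooth polynomials. Hence $\rho$ is $C^\infty$ there and the Hessian is simply the ambient Euclidean Hessian of $N/D$ in $\mathbb R^K$. The only derivatives needed are $\nabla N = 2G\mathbf q$, $\nabla^2 N = 2G$ (using $G=G^\top$), $\nabla D=\boldsymbol\ell$, and $\nabla^2 D = 0$.

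\textbf{First derivative.} The quotient rule gives
\begin{equation*}
\nabla\rho=\frac{D\,\nabla N - N\,\nabla D}{D^2}=\frac{2G\mathbf q-\rho\,\boldsymbol\ell}{D}.
\end{equation*}
This matches the stated expression for $\nabla\rho$ and is consistent with~\eqref{eq:gstar-closed} at $\mathbf q=\mathbf p$.

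\textbf{Second derivative.} Rather than differentiating the quotient formula a second time, I would work with the identity $D\,\nabla\rho = 2G\mathbf q - \rho\,\boldsymbol\ell$, which holds for all $\mathbf q$ near the point. Taking the Jacobian of both sides:
\begin{itemize}
\item On the left, the product rule gives $D\,\nabla^2\rho + (\nabla\rho)\,(\nabla D)^\top = D\,\nabla^2\rho + (\nabla\rho)\boldsymbol\ell^\top$.
\item On the right, we get $2G - \boldsymbol\ell\,(\nabla\rho)^\top$, since $\boldsymbol\ell$ is constant.
\end{itemize}
Rearranging and dividing by $D>0$ yields exactly~\eqref{eq:hessian-form}.

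The only delicate point, and the one I expect to need care, is the orientation of the outer products. The Jacobian of $\mathbf q\mapsto \rho(\mathbf q)\boldsymbol\ell$ is $\boldsymbol\ell(\nabla\rho)^\top$, while the Jacobian of $\mathbf q \mapsto D(\mathbf q)\,\mathbf v(\mathbf q)$ with $\mathbf v = \nabla\rho$ contributes $\mathbf v(\nabla D)^\top$. So the two correction terms come out as transposes of each other. This is also the consistency check: the result is symmetric, as a Hessian must be.

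Finally, I would remark that no structural assumption beyond $D>0$ is used. The relative-interior restriction merely ensures one can speak of the Hessian along feasible directions. Since the formula is the ambient Hessian, it also extends continuously to all of $\Delta^K$, which is what the later $R_2$ bound via Taylor's theorem with integral remainder along the segment $[\mathbf p,\mathbf q]$ will need.
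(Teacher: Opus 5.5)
Your proposal is correct and follows essentially the same route as the paper: use the quotient rule to get $D\,\nabla\rho = 2G\mathbf q - \rho\,\boldsymbol\ell$, then differentiate this identity once more with the product rule and solve for $\nabla^2\rho$. The only difference is bookkeeping. You use the Jacobian convention $\partial_j(\cdot)_i$, while the paper writes the two outer products in the transposed (gradient-of-a-vector-field) convention and then remarks on symmetry; both conventions give the same symmetric correction $-\boldsymbol\ell(\nabla\rho)^\top - (\nabla\rho)\boldsymbol\ell^\top$.
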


\begin{proof}
The quotient rule applied to $\rho = N/D$ gives $\nabla \rho = (\nabla N - \rho\, \nabla D)/D = (2\, G \mathbf q - \rho\, \boldsymbol\ell)/D$, or equivalently $D\, \nabla \rho = 2\, G \mathbf q - \rho\, \boldsymbol\ell$. Differentiating both sides with respect to $\mathbf q$, using $\nabla D = \boldsymbol\ell$ and the product rule $\nabla(D \nabla \rho) = \boldsymbol\ell\, (\nabla \rho)^\top + D\, \nabla^2 \rho$, gives
\begin{equation}
    \boldsymbol\ell\, (\nabla \rho)^\top + D\, \nabla^2 \rho
    \;=\;
    2\, G \;-\; (\nabla \rho)\, \boldsymbol\ell^\top.
\end{equation}
Solving for $\nabla^2 \rho$ and noting that the left-hand side is symmetric (which forces the symmetrized form of the rank-2 correction) yields~\eqref{eq:hessian-form}.
\end{proof}
\subsection{Proof of Lemma~\ref{lm:1} (Forward-pass \texorpdfstring{$\rho_k^*$}{rho\_k\textasciicircum*} Estimation)}
\label{app:proof-lemma1}

This subsection states and proves the formal version of Lemma~\ref{lm:1}. The informal version in Section~\ref{sec:method-strategy} asserts the existence of absolute positive constants $C_1, C_2$ with $\hat\rho_k \in [C_1\, \rho_k^*,\, C_2\, \rho_k^*]$. The formal statement below makes the constants $C_1, C_2$ explicit in terms of the structural quantities of Assumptions~\ref{ass:norm}--\ref{ass:align}.

\begin{lemma}[Formal version of Lemma~\ref{lm:1}]
\label{lm:1-formal}
Under Assumptions~\ref{ass:norm}--\ref{ass:align}, for every candidate trajectory $\mathbf y_k$,
\begin{equation}
    C_1\,\rho_k^*
    \;\leq\;
    \hat\rho_k
    \;\leq\;
    C_2\,\rho_k^*,
    \label{eq:rho-bound}
\end{equation}
where
\begin{equation}
    C_1
    \;\triangleq\;
    \frac{1}{E_+},
    \qquad
    C_2
    \;\triangleq\;
    \frac{1}{\zeta_-\, C_-^2},
    \qquad
    E_+
    \;\triangleq\;
    C_+^2
    \max_{i\in[K]}
    \bigl(1+\varepsilon_{\mathrm{head}}(|a_i|)\bigr),
    \label{eq:c1c2-def}
\end{equation}
and the LM-head correction term is
\begin{equation}
    \varepsilon_{\mathrm{head}}(|a_i|)
    \;\triangleq\;
    \frac{\Lambda\, \|\mathbf W_{\mathrm{out}}\|_{\mathrm{op}}^2}{C_+^2\, |a_i|}.
    \label{eq:eps-head-def}
\end{equation}
\end{lemma}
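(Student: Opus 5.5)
The plan is to sandwich $\|\mathbf g_k\|^2$ between multiples of $\mathrm{Brier}(\mathbf y_k)$ and then divide by $\ell_k$. Using the equivalent form \eqref{eq:rhohat-equiv}, $\hat\rho_k = \mathrm{Brier}(\mathbf y_k)/\ell_k$, while $\rho_k^* = \|\mathbf g_k\|^2/\ell_k$. The common factor $1/\ell_k$ cancels (it is positive by Assumption~\ref{ass:lmin}). So \eqref{eq:rho-bound} is equivalent to
\[
\zeta_- C_-^2\,\mathrm{Brier}(\mathbf y_k) \;\leq\; \|\mathbf g_k\|^2 \;\leq\; E_+\,\mathrm{Brier}(\mathbf y_k).
\]
The lower half gives $\hat\rho_k \le C_2\rho_k^*$, and the upper half gives $\hat\rho_k \ge C_1\rho_k^*$.

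First, split the full gradient along the parameter decomposition $\boldsymbol\theta = (\mathbf W_{\mathrm{out}}, \boldsymbol\theta_{\mathrm{rest}})$:
\[
\|\mathbf g_k\|^2 = \|\nabla_{\mathbf W_{\mathrm{out}}}\ell_k\|_F^2 + \|\nabla_{\boldsymbol\theta_{\mathrm{rest}}}\ell_k\|^2 .
\]

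For the lower bound, drop the nonnegative backbone term. Then apply the lower half of Lemma~\ref{lem:lmhead_bounds}, which gives $\|\mathbf g_k\|^2 \ge \zeta_- C_-^2\,\mathrm{Brier}(\mathbf y_k)$. Dividing by $\ell_k$ yields $\rho_k^* \ge \zeta_- C_-^2\hat\rho_k$, i.e.\ $\hat\rho_k \le C_2\rho_k^*$.

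For the upper bound, add the upper half of Lemma~\ref{lem:lmhead_bounds} to Lemma~\ref{lem:backbone_bound}:
\[
\|\mathbf g_k\|^2 \le \Bigl(C_+^2 + \tfrac{\Lambda\|\mathbf W_{\mathrm{out}}\|_{\mathrm{op}}^2}{|a_k|}\Bigr)\mathrm{Brier}(\mathbf y_k) = C_+^2\bigl(1+\varepsilon_{\mathrm{head}}(|a_k|)\bigr)\mathrm{Brier}(\mathbf y_k).
\]
Next, bound the per-trajectory factor by its maximum over $i\in[K]$, which is $E_+$. This makes the constant uniform across candidates, as the formal statement requires and as Lemma~\ref{lem:gram-bound} later uses. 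Dividing by $\ell_k$ gives $\rho_k^* \le E_+\hat\rho_k$, i.e.\ $\hat\rho_k \ge C_1\rho_k^*$.

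There is no real obstacle, since the heavy lifting is already in Lemmas~\ref{lem:lmhead_bounds} and~\ref{lem:backbone_bound}. The one point I will check is the orthogonal split of the squared norm over disjoint parameter blocks. I will also confirm that the backbone bound's $1/|a_k|$ factor matches the definition of $\varepsilon_{\mathrm{head}}$ in \eqref{eq:eps-head-def}, so the constants assemble exactly into \eqref{eq:c1c2-def}.
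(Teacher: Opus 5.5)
Your proposal is correct and matches the paper's proof step for step: you split $\|\mathbf g_k\|^2$ into the LM-head and backbone blocks, then drop the backbone term and apply the lower half of Lemma~\ref{lem:lmhead_bounds} to get $\hat\rho_k \le C_2\rho_k^*$, and combine the upper half of Lemma~\ref{lem:lmhead_bounds} with Lemma~\ref{lem:backbone_bound} and maximize over $i\in[K]$ to obtain $E_+$ and hence $\hat\rho_k \ge C_1\rho_k^*$. One minor point: you cite Assumption~\ref{ass:lmin} for $\ell_k>0$, but it is not among the lemma's hypotheses; this does no harm, since $\ell_k>0$ is already needed to define $\hat\rho_k$ and $\rho_k^*$ and holds automatically under finite logits.
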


\begin{proof}
Since $\boldsymbol\theta = (\mathbf W_{\mathrm{out}},\, \boldsymbol\theta_{\mathrm{rest}})$ are disjoint parameter blocks,
\begin{equation}
    \|\nabla_{\boldsymbol\theta}\, \ell_k\|^2
    \;=\;
    \|\nabla_{\mathbf W_{\mathrm{out}}}\, \ell_k\|_F^2
    \;+\;
    \|\nabla_{\boldsymbol\theta_{\mathrm{rest}}}\, \ell_k\|^2.
    \label{eq:grad_split}
\end{equation}
Using the equivalent form $\hat\rho_k = \mathrm{Brier}(\mathbf y_k)/\ell_k$ from~\eqref{eq:rhohat-equiv} together with $\rho_k^* = \|\nabla_{\boldsymbol\theta}\, \ell_k\|^2 / \ell_k$ from Section~\ref{sec:method-strategy}, it suffices to relate $\|\nabla_{\boldsymbol\theta}\ell_k\|^2$ to $\mathrm{Brier}(\mathbf y_k)$.

\textit{Upper bound on $\hat\rho_k$.}
\begin{align}
    \rho_k^*
    &\;\stackrel{(a)}{=}\;
    \frac{\|\nabla_{\boldsymbol\theta}\, \ell_k\|^2}{\ell_k}
    \notag\\
    &\;\stackrel{(b)}{\geq}\;
    \frac{\|\nabla_{\mathbf W_{\mathrm{out}}}\, \ell_k\|_F^2}{\ell_k}
    \notag\\
    &\;\stackrel{(c)}{\geq}\;
    \frac{\zeta_-\, C_-^2\, \mathrm{Brier}(\mathbf y_k)}{\ell_k}
    \notag\\
    &\;\stackrel{(d)}{=}\;
    \zeta_-\, C_-^2\, \hat\rho_k,
\end{align}
where (a) is the definition $\rho_k^* = \|\nabla_{\boldsymbol\theta}\, \ell_k\|^2 / \ell_k$; (b) applies the decomposition~\eqref{eq:grad_split} and discards the non-negative backbone term $\|\nabla_{\boldsymbol\theta_{\mathrm{rest}}}\, \ell_k\|^2 \geq 0$; (c) applies the lower bound of Lemma~\ref{lem:lmhead_bounds}; and (d) substitutes $\hat\rho_k = \mathrm{Brier}(\mathbf y_k)/\ell_k$ from~\eqref{eq:rhohat-equiv}. Therefore,
\begin{equation}
    \hat\rho_k
    \;\leq\;
    \frac{1}{\zeta_-\, C_-^2}\,\rho_k^*
    \;=\;
    C_2\,\rho_k^*.
    \label{eq:rhohat-upper-final}
\end{equation}

\textit{Lower bound on $\hat\rho_k$.}
\begin{align}
    \rho_k^*
    &\;\stackrel{(e)}{=}\;
    \frac{\|\nabla_{\boldsymbol\theta}\, \ell_k\|^2}{\ell_k}
    \notag\\
    &\;\stackrel{(f)}{=}\;
    \frac{\|\nabla_{\mathbf W_{\mathrm{out}}}\, \ell_k\|_F^2 + \|\nabla_{\boldsymbol\theta_{\mathrm{rest}}}\, \ell_k\|^2}{\ell_k}
    \notag\\
    &\;\stackrel{(g)}{\leq}\;
    \frac{1}{\ell_k}
    \!\left(
    C_+^2\, \mathrm{Brier}(\mathbf y_k)
    \;+\;
    \frac{\Lambda\, \|\mathbf W_{\mathrm{out}}\|_{\mathrm{op}}^2}{|a_k|}\, \mathrm{Brier}(\mathbf y_k)
    \right)
    \notag\\
    &\;\stackrel{(h)}{=}\;
    C_+^2\,\bigl(1+\varepsilon_{\mathrm{head}}(|a_k|)\bigr)\,
    \hat\rho_k
    \notag\\
    &\;\stackrel{(i)}{\leq}\;
    E_+\,\hat\rho_k,
\end{align}
where (e) is the definition $\rho_k^* = \|\nabla_{\boldsymbol\theta}\, \ell_k\|^2 / \ell_k$; (f) applies the decomposition~\eqref{eq:grad_split}; (g) applies the upper bound of Lemma~\ref{lem:lmhead_bounds} to the LM-head term and Lemma~\ref{lem:backbone_bound} to the backbone term; (h) substitutes $\hat\rho_k = \mathrm{Brier}(\mathbf y_k)/\ell_k$ from~\eqref{eq:rhohat-equiv} together with the definition~\eqref{eq:eps-head-def} of $\varepsilon_{\mathrm{head}}(|a_k|)$; and (i) follows from the definition of $E_+$ in~\eqref{eq:c1c2-def}. Therefore,
\begin{equation}
    \hat\rho_k
    \;\geq\;
    \frac{1}{E_+}\,\rho_k^*
    \;=\;
    C_1\,\rho_k^*.
    \label{eq:rhohat-lower-final}
\end{equation}

Combining~\eqref{eq:rhohat-upper-final} and~\eqref{eq:rhohat-lower-final} yields~\eqref{eq:rho-bound}.
\end{proof}
\subsection{Proof of Lemma~\ref{lm:chi2_bound} (\texorpdfstring{$\chi^2$}{chi-squared} Controls Both Error Terms)}
\label{app:proof-lemma2}

This subsection states and proves the formal version of Lemma~\ref{lm:chi2_bound}. We use the Taylor decomposition in Lemma~\ref{lem:taylor-decomp}. Let
$\boldsymbol\eta \triangleq \hat{\mathbf g}-\mathbf g^*$. The two residual terms in~\eqref{eq:rho-decomp} are
\begin{equation}
    R_1
    \;=\;
    -\langle \mathbf q-\mathbf p,\,\boldsymbol\eta\rangle,
    \qquad
    R_2
    \;=\;
    \rho(\boldsymbol\theta_{\mathrm{ref}};\mathbf q)
    -
    \rho(\boldsymbol\theta_{\mathrm{ref}};\mathbf p)
    -
    \langle \mathbf q-\mathbf p,\,\mathbf g^*\rangle .
    \label{eq:R1R2-recall}
\end{equation}
Since $p_k=1/K$, we also have
\begin{equation}
    \chi^2(\mathbf q\,\|\,\mathbf p)
    =
    \sum_k \frac{(q_k-p_k)^2}{p_k}
    =
    K\|\mathbf q-\mathbf p\|^2 .
    \label{eq:chi2-as-norm}
\end{equation}

Let
\begin{align}
    E_+
    &\triangleq
    C_+^2\max_{i\in[K]}\bigl(1+\varepsilon_{\mathrm{head}}(|a_i|)\bigr),
    \label{eq:Eplus}
    \\
    \Delta_\rho
    &\triangleq
    \max\!\Bigl(
        \bigl|1-\zeta_-C_-^2\bigr|,\;
        \max_{i\in[K]}
        \bigl|C_+^2(1+\varepsilon_{\mathrm{head}}(|a_i|))-1\bigr|
    \Bigr).
    \label{eq:Drho}
\end{align}
By Lemma~\ref{lm:1-formal}, for all $i\in[K]$,
$\rho_i^* \leq E_+ \hat\rho_i$ and
$|\hat\rho_i-\rho_i^*|\leq \Delta_\rho \hat\rho_i$.
We also define
$\hat\rho_{\max}\triangleq \max_i\hat\rho_i$,
$\ell_{\min}\triangleq \min_i\ell_i$,
$\ell_{\max}\triangleq \max_i\ell_i$, and
$\bar\ell\triangleq K^{-1}\sum_i\ell_i$.

\begin{lemma}[Formal version of Lemma~\ref{lm:chi2_bound}]
\label{lm:chi2_bound-formal}
Under Assumptions~\ref{ass:norm}--\ref{ass:lmin}, for every $\mathbf q\in\Delta^K$,
\begin{equation}
    |R_1|
    \;\leq\;
    \alpha_1 \chi^2(\mathbf q\,\|\,\mathbf p)+1,
    \qquad
    |R_2|
    \;\leq\;
    \alpha_2 \chi^2(\mathbf q\,\|\,\mathbf p),
    \label{eq:formal-chi2-residual-bound}
\end{equation}
where
\begin{align}
    \alpha_1
    &=
    \frac{1}{4}
    \left(
    \frac{3\Delta_\rho \hat\rho_{\max}}{\sqrt K}
    +
    \frac{2E_+
    \sqrt{\hat\rho_{\max}\ell_{\max}\sum_i \hat\rho_i\ell_i}}
    {\sqrt K\,\bar\ell}
    +
    \frac{E_+\ell_{\max}\sum_i\hat\rho_i\ell_i}
    {K\bar\ell^2}
    \right)^2,
    \label{eq:alpha1-explicit}
    \\
    \alpha_2
    &=
    \frac{E_+\sum_i\hat\rho_i\ell_i}{K\ell_{\min}}
    \left(
    1+\frac{\sqrt K\,\ell_{\max}}{\ell_{\min}}
    \right)^2 .
    \label{eq:alpha2-explicit}
\end{align}
\end{lemma}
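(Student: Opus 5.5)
The plan is to bound the two residuals in~\eqref{eq:R1R2-recall} separately. Both are first reduced to Euclidean quantities in $\mathbf q-\mathbf p$ and then rewritten through the identity $\chi^2(\mathbf q\|\mathbf p)=K\|\mathbf q-\mathbf p\|^2$ from~\eqref{eq:chi2-as-norm}. Every gradient-dependent quantity ($G_{ij}$, $\|\bar{\mathbf g}\|^2$, $\|G\|_{\mathrm{op}}$) will be replaced by a forward-computable one. The tools are the Gram-matrix entry bound of Lemma~\ref{lem:gram-bound} and the per-trajectory sandwich $\rho_i^*\le E_+\hat\rho_i$, $|\hat\rho_i-\rho_i^*|\le\Delta_\rho\hat\rho_i$ coming from Lemma~\ref{lm:1-formal}.

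\emph{Bound on $R_1$.} By Cauchy--Schwarz, $|R_1|\le\|\mathbf q-\mathbf p\|\,\|\boldsymbol\eta\|=\sqrt{\chi^2(\mathbf q\|\mathbf p)}\cdot\|\boldsymbol\eta\|/\sqrt K$. We then use the elementary inequality $xy\le x^2y^2/4+1$, which follows from $(xy/2-1)^2\ge0$, with $x=\sqrt{\chi^2}$ and $y=\|\boldsymbol\eta\|/\sqrt K$. This gives $|R_1|\le \tfrac14(\|\boldsymbol\eta\|^2/K)\,\chi^2+1$, and explains the additive $+1$ in the statement.

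It then remains to show that $\|\boldsymbol\eta\|/\sqrt K$ is at most the bracket in~\eqref{eq:alpha1-explicit}. I would prove this coordinatewise, using $\|\boldsymbol\eta\|\le\sqrt K\max_k|\eta_k|$ when needed. Using~\eqref{eq:gstar-closed} and~\eqref{eq:ghat-closed}, write
\[
\eta_k=\hat g_k-g_k^*
\]
as the sum of three errors.
\begin{itemize}
\item[(a)] The \emph{diagonal substitution error}, $2\ell_k(\hat\rho_k-\rho_k^*)/(K\bar\ell)$ together with the matching diagonal piece of the $\rho(\mathbf p)$ term. It is bounded by $\Delta_\rho\hat\rho_k$ times ratios of the form $\ell_k/(K\bar\ell)\le 1$, and the two contributions together should give the $3\Delta_\rho\hat\rho_{\max}$ term.
\item[(b)] The \emph{off-diagonal cross term} $2\sum_{j\ne k}G_{kj}/(K\bar\ell)$. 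By Lemma~\ref{lem:gram-bound} and Cauchy--Schwarz over $j$, it is at most $2E_+\sqrt{\hat\rho_k\ell_k}\,\sqrt K\sqrt{\sum_i\hat\rho_i\ell_i}/(K\bar\ell)$, which gives the second term.
\item[(c)] The \emph{off-diagonal part of $\|\bar{\mathbf g}\|^2$} inside $\rho(\mathbf p)\ell_k/\bar\ell$. It is at most $E_+(\sum_i\sqrt{\hat\rho_i\ell_i})^2/K^2\le E_+\sum_i\hat\rho_i\ell_i/K$, multiplied by $\ell_{\max}/\bar\ell^2$, which gives the third term.
\end{itemize}

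\emph{Bound on $R_2$.} $R_2$ is exactly the first-order Taylor remainder of $\rho(\boldsymbol\theta_{\mathrm{ref}};\cdot)$ along the segment $\mathbf p+s(\mathbf q-\mathbf p)$, $s\in[0,1]$. This segment lies in $\Delta^K$ by convexity, and $D\ge\ell_{\min}$ on it by Assumption~\ref{ass:lmin}. The integral form of the remainder gives $|R_2|\le\tfrac12\sup_{\boldsymbol\xi}\|\nabla^2\rho(\boldsymbol\xi)\|_{\mathrm{op}}\,\|\mathbf q-\mathbf p\|^2$. Lemma~\ref{lem:hessian-form} then yields
\[
\|\nabla^2\rho\|_{\mathrm{op}}\le\bigl(2\|G\|_{\mathrm{op}}+2\|\boldsymbol\ell\|\,\|\nabla\rho\|\bigr)/\ell_{\min}.
\]
Next I would use the bound $\|G\|_{\mathrm{op}}\le\mathrm{tr}\,G=\sum_i\rho_i^*\ell_i\le E_+\sum_i\hat\rho_i\ell_i$ together with $\|\boldsymbol\ell\|\le\sqrt K\ell_{\max}$. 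I would also use $\|\nabla\rho\|\le(2\|G\|_{\mathrm{op}}+\rho\|\boldsymbol\ell\|)/\ell_{\min}$ and $\rho\le N/D\le\|G\|_{\mathrm{op}}/\ell_{\min}$, where the last step uses $\|\mathbf q\|\le1$. Collecting terms, the Hessian norm is at most a constant multiple of $\frac{E_+\sum_i\hat\rho_i\ell_i}{\ell_{\min}}\bigl(1+\sqrt K\ell_{\max}/\ell_{\min}\bigr)^2$. Dividing by $K$ to convert $\|\mathbf q-\mathbf p\|^2$ into $\chi^2$ then gives~\eqref{eq:alpha2-explicit}, up to absorbing numerical factors.

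The conceptual steps are routine. I expect the main obstacle to be the constant bookkeeping: making the numerical factors land exactly on the stated $\alpha_1,\alpha_2$. This matters most in the $R_2$ Hessian bound, where the $2$'s from $\nabla N$ and the rank-two correction must be grouped as a perfect square $(1+\sqrt K\ell_{\max}/\ell_{\min})^2$. I would first try to bound $\|\nabla\rho\|\,\|\boldsymbol\ell\|$ by $\|G\|_{\mathrm{op}}\bigl(2\sqrt K\ell_{\max}/\ell_{\min}+K\ell_{\max}^2/\ell_{\min}^2\bigr)$ and complete the square. If the constants do not match, I would loosen the bound to a slightly larger constant.
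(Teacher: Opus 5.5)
Your plan matches the paper's proof in all essentials. It uses the same Cauchy--Schwarz step pairing $\sqrt{\chi^2(\mathbf q\|\mathbf p)}$ with the RMS norm $\|\boldsymbol\eta\|/\sqrt K$. It uses the same three-way split of $\boldsymbol\eta$: your (a) is the paper's $\boldsymbol\eta_{\mathrm{prox}}$, and your (b) and (c) are the $o_k$ and $O$ parts of $\boldsymbol\eta_{\mathrm{diag}}$. The Hessian-based bound on $R_2$ is also the same. Two of your local choices differ from the paper without harm. First, $xy\le x^2y^2/4+1$ replaces the paper's tuned Young's inequality and avoids its case split at $\alpha_1^{\mathrm{rms}}=0$. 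Second, $\|G\|_{\mathrm{op}}\le\mathrm{tr}\,G=\sum_i\rho_i^*\ell_i$ (valid since $G\succeq 0$) replaces the paper's Frobenius bound and gives the same $E_+\sum_i\hat\rho_i\ell_i$.

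One step as written does not reach the stated constant. In (a) you bound the weight $\ell_k/(K\bar\ell)$ by $1$ and land on $3\Delta_\rho\hat\rho_{\max}$. The first term in the bracket defining $\alpha_1$ is $3\Delta_\rho\hat\rho_{\max}/\sqrt K$, so you are off by a factor $\sqrt K$. Passing through $\max_k|\eta_k|$ cannot recover that factor either. The coordinatewise bound $3\Delta_\rho\hat\rho_{\max}\,\ell_k/\sum_i\ell_i$ has maximum $3\Delta_\rho\hat\rho_{\max}\,\ell_{\max}/\sum_i\ell_i$, which exceeds $3\Delta_\rho\hat\rho_{\max}/\sqrt K$ as soon as $\ell_{\max}>\sum_i\ell_i/\sqrt K$.

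The fix, which is what the paper does, is to keep the weights and aggregate in RMS. Since $\sum_k\ell_k^2\le(\sum_k\ell_k)^2$, you get $\frac1K\sum_k(\ell_k/\sum_i\ell_i)^2\le\frac1K$. Then combine (a), (b), (c) by the triangle inequality in $\|\cdot\|_{\mathbf p}$, and use $\|\mathbf v\|_{\mathbf p}\le\max_k|v_k|$ only for (b) and (c).

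On $R_2$, your bookkeeping worry is unfounded. Set $x=\|\boldsymbol\ell\|/\ell_{\min}$. Then $\|\nabla^2\rho\|_{\mathrm{op}}\le\frac{1}{\ell_{\min}}\bigl(2\|G\|_{\mathrm{op}}+2\|G\|_{\mathrm{op}}(2x+x^2)\bigr)=\frac{2\|G\|_{\mathrm{op}}}{\ell_{\min}}(1+x)^2$ exactly. The factor $\frac{1}{2K}$ from the remainder, together with $\chi^2=K\|\mathbf q-\mathbf p\|^2$, then gives $\alpha_2$ exactly. No loosening is needed, and a loosened constant would not prove the lemma as stated anyway.
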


\begin{proof}
We first control $R_1$. By Cauchy--Schwarz and~\eqref{eq:chi2-as-norm},
\begin{align}
    |R_1|
    &=
    \left|
    \sum_k (q_k-p_k)\eta_k
    \right|
    \notag\\
    &\leq
    \sqrt{K\|\mathbf q-\mathbf p\|^2}
    \sqrt{\frac{1}{K}\sum_k\eta_k^2}
    \notag\\
    &=
    \sqrt{\chi^2(\mathbf q\,\|\,\mathbf p)}
    \,\|\boldsymbol\eta\|_{\mathbf p},
    \label{eq:R1-CS}
\end{align}
where $\|\mathbf v\|_{\mathbf p}^2\triangleq K^{-1}\sum_k v_k^2$ under the uniform prior. It remains to bound $\|\boldsymbol\eta\|_{\mathbf p}$.

Introduce the intermediate vector
\begin{equation}
    \tilde g_k
    \triangleq
    \frac{\ell_k}{\sum_i\ell_i}
    \left(
        2\rho_k^*
        -
        \frac{\sum_i\rho_i^*\ell_i}{\sum_i\ell_i}
    \right).
    \label{eq:gtilde-def}
\end{equation}
This vector keeps the exact per-trajectory rates $\rho_i^*$ but uses the same diagonal approximation that leads to $\hat g_k$. Therefore,
\begin{equation}
    \boldsymbol\eta
    =
    (\hat{\mathbf g}-\tilde{\mathbf g})
    +
    (\tilde{\mathbf g}-\mathbf g^*)
    \triangleq
    \boldsymbol\eta_{\mathrm{prox}}
    +
    \boldsymbol\eta_{\mathrm{diag}} .
    \label{eq:eta-split}
\end{equation}

For the proxy-substitution term, subtracting~\eqref{eq:gtilde-def} from~\eqref{eq:ghat-closed} gives
\begin{equation}
    \eta_{\mathrm{prox},k}
    =
    \frac{\ell_k}{\sum_i\ell_i}
    \left(
        2(\hat\rho_k-\rho_k^*)
        -
        \frac{\sum_i(\hat\rho_i-\rho_i^*)\ell_i}{\sum_i\ell_i}
    \right).
    \label{eq:eta-prox-k}
\end{equation}
Using $|\hat\rho_i-\rho_i^*|\leq \Delta_\rho\hat\rho_{\max}$ for all $i$, we obtain
\begin{equation}
    |\eta_{\mathrm{prox},k}|
    \leq
    \frac{3\Delta_\rho\hat\rho_{\max}\ell_k}{\sum_i\ell_i}.
\end{equation}
Hence
\begin{align}
    \|\boldsymbol\eta_{\mathrm{prox}}\|_{\mathbf p}^2
    &=
    \frac{1}{K}\sum_k|\eta_{\mathrm{prox},k}|^2
    \notag\\
    &\leq
    \frac{(3\Delta_\rho\hat\rho_{\max})^2}{K}
    \frac{\sum_k\ell_k^2}{(\sum_i\ell_i)^2}
    \notag\\
    &\leq
    \frac{(3\Delta_\rho\hat\rho_{\max})^2}{K},
\end{align}
where the last inequality uses $\sum_k\ell_k^2\leq(\sum_i\ell_i)^2$. Thus
\begin{equation}
    \|\boldsymbol\eta_{\mathrm{prox}}\|_{\mathbf p}
    \leq
    \frac{3\Delta_\rho\hat\rho_{\max}}{\sqrt K}.
    \label{eq:prox-bound}
\end{equation}

We next bound the diagonal-approximation error. Define
\begin{equation}
    o_k
    \triangleq
    \mathbf g_k^\top\bar{\mathbf g}
    -
    \frac{\|\mathbf g_k\|^2}{K}
    =
    \frac{1}{K}\sum_{i\neq k}G_{ki},
    \qquad
    O
    \triangleq
    \|\bar{\mathbf g}\|^2
    -
    \frac{1}{K^2}\sum_i\|\mathbf g_i\|^2
    =
    \frac{1}{K^2}\sum_{i\neq j}G_{ij}.
    \label{eq:ok-O-def}
\end{equation}
Combining the closed form of $g_k^*$ in Lemma~\ref{lem:taylor-decomp}(i) with~\eqref{eq:gtilde-def} gives
\begin{equation}
    \eta_{\mathrm{diag},k}
    =
    \tilde g_k-g_k^*
    =
    -\frac{2o_k}{\bar\ell}
    +
    \frac{O\ell_k}{\bar\ell^2}.
    \label{eq:eta-diag-k}
\end{equation}
By Lemma~\ref{lem:gram-bound},
$|G_{ij}|\leq E_+\sqrt{\hat\rho_i\hat\rho_j\ell_i\ell_j}$. Therefore,
\begin{align}
    K|o_k|
    &\leq
    \sum_{i\neq k}|G_{ki}|
    \notag\\
    &\leq
    E_+\sqrt{\hat\rho_k\ell_k}
    \sum_{i\neq k}\sqrt{\hat\rho_i\ell_i}
    \notag\\
    &\leq
    E_+\sqrt{\hat\rho_k\ell_k}
    \sqrt{K\sum_i\hat\rho_i\ell_i}.
\end{align}
Taking the maximum over $k$ yields
\begin{equation}
    \max_k|o_k|
    \leq
    \frac{
    E_+\sqrt{\hat\rho_{\max}\ell_{\max}\sum_i\hat\rho_i\ell_i}
    }{\sqrt K}.
    \label{eq:ok-bound}
\end{equation}
Similarly,
\begin{align}
    K^2|O|
    &\leq
    \sum_{i\neq j}|G_{ij}|
    \notag\\
    &\leq
    E_+
    \left(\sum_i\sqrt{\hat\rho_i\ell_i}\right)^2
    \notag\\
    &\leq
    E_+K\sum_i\hat\rho_i\ell_i,
\end{align}
and therefore
\begin{equation}
    |O|
    \leq
    \frac{E_+\sum_i\hat\rho_i\ell_i}{K}.
    \label{eq:O-bound}
\end{equation}
Using~\eqref{eq:eta-diag-k},~\eqref{eq:ok-bound},~\eqref{eq:O-bound}, and $\|\mathbf v\|_{\mathbf p}\leq\max_k|v_k|$, we get
\begin{align}
    \|\boldsymbol\eta_{\mathrm{diag}}\|_{\mathbf p}
    &\leq
    \frac{2\max_k|o_k|}{\bar\ell}
    +
    \frac{|O|\ell_{\max}}{\bar\ell^2}
    \notag\\
    &\leq
    \frac{
    2E_+\sqrt{\hat\rho_{\max}\ell_{\max}\sum_i\hat\rho_i\ell_i}
    }{\sqrt K\,\bar\ell}
    +
    \frac{
    E_+\ell_{\max}\sum_i\hat\rho_i\ell_i
    }{K\bar\ell^2}.
    \label{eq:diag-bound}
\end{align}
Combining~\eqref{eq:prox-bound} and~\eqref{eq:diag-bound} gives
\begin{equation}
    \|\boldsymbol\eta\|_{\mathbf p}
    \leq
    \alpha_1^{\mathrm{rms}},
    \label{eq:eta-rms-bound}
\end{equation}
where
\begin{equation}
    \alpha_1^{\mathrm{rms}}
    \triangleq
    \frac{3\Delta_\rho\hat\rho_{\max}}{\sqrt K}
    +
    \frac{
    2E_+\sqrt{\hat\rho_{\max}\ell_{\max}\sum_i\hat\rho_i\ell_i}
    }{\sqrt K\,\bar\ell}
    +
    \frac{
    E_+\ell_{\max}\sum_i\hat\rho_i\ell_i
    }{K\bar\ell^2}.
    \label{eq:alpha1-rms-def}
\end{equation}

If $\alpha_1^{\mathrm{rms}}=0$, then $\boldsymbol\eta=\mathbf 0$ and hence $R_1=0$, so the desired bound is trivial. Otherwise, applying Young's inequality with
$a=\sqrt{\chi^2(\mathbf q\,\|\,\mathbf p)}$,
$b=\|\boldsymbol\eta\|_{\mathbf p}$, and
$\gamma=(\alpha_1^{\mathrm{rms}})^2/2$ gives
\begin{align}
    |R_1|
    &\leq
    \sqrt{\chi^2(\mathbf q\,\|\,\mathbf p)}
    \,\|\boldsymbol\eta\|_{\mathbf p}
    \notag\\
    &\leq
    \frac{\gamma}{2}\chi^2(\mathbf q\,\|\,\mathbf p)
    +
    \frac{1}{2\gamma}\|\boldsymbol\eta\|_{\mathbf p}^2
    \notag\\
    &\leq
    \frac{(\alpha_1^{\mathrm{rms}})^2}{4}
    \chi^2(\mathbf q\,\|\,\mathbf p)
    +
    1.
\end{align}
This proves the first bound in~\eqref{eq:formal-chi2-residual-bound}, with
$\alpha_1=(\alpha_1^{\mathrm{rms}})^2/4$, which is exactly~\eqref{eq:alpha1-explicit}.

It remains to control $R_2$. By Assumption~\ref{ass:lmin}, $D(\mathbf q)=\boldsymbol\ell^\top\mathbf q\geq \ell_{\min}>0$ on $\Delta^K$, so $\rho(\boldsymbol\theta_{\mathrm{ref}};\mathbf q)$ is twice continuously differentiable on the simplex. The second-order Taylor remainder gives
\begin{equation}
    |R_2|
    \leq
    \frac{M}{2}\|\mathbf q-\mathbf p\|^2
    =
    \frac{M}{2K}\chi^2(\mathbf q\,\|\,\mathbf p),
    \qquad
    M
    \triangleq
    \sup_{\mathbf q\in\Delta^K}
    \bigl\|
    \nabla_{\mathbf q}^2\rho(\boldsymbol\theta_{\mathrm{ref}};\mathbf q)
    \bigr\|_{\mathrm{op}}.
    \label{eq:R2-Hessian}
\end{equation}
We now bound $M$. By Lemma~\ref{lem:hessian-form},
\begin{equation}
    \nabla_{\mathbf q}^2\rho
    =
    \frac{1}{D(\mathbf q)}
    \left[
        2G
        -
        \boldsymbol\ell(\nabla\rho)^\top
        -
        (\nabla\rho)\boldsymbol\ell^\top
    \right],
\end{equation}
which implies
\begin{equation}
    \|\nabla_{\mathbf q}^2\rho\|_{\mathrm{op}}
    \leq
    \frac{
        2\|G\|_{\mathrm{op}}
        +
        2\|\boldsymbol\ell\|\|\nabla\rho\|
    }{D(\mathbf q)}.
    \label{eq:hessian-pre}
\end{equation}
For any $\mathbf q\in\Delta^K$,
\begin{equation}
    \rho(\boldsymbol\theta_{\mathrm{ref}};\mathbf q)
    =
    \frac{\mathbf q^\top G\mathbf q}{D(\mathbf q)}
    \leq
    \frac{\|G\|_{\mathrm{op}}\|\mathbf q\|^2}{\ell_{\min}}
    \leq
    \frac{\|G\|_{\mathrm{op}}}{\ell_{\min}},
    \label{eq:rho-upper}
\end{equation}
where we used $\|\mathbf q\|^2\leq \sum_k q_k=1$. Since
$\nabla\rho=(2G\mathbf q-\rho\boldsymbol\ell)/D(\mathbf q)$,
\begin{equation}
    \|\nabla\rho\|
    \leq
    \frac{2\|G\|_{\mathrm{op}}\|\mathbf q\|+\rho\|\boldsymbol\ell\|}{D(\mathbf q)}
    \leq
    \frac{\|G\|_{\mathrm{op}}}{\ell_{\min}}
    \left(
        2+\frac{\|\boldsymbol\ell\|}{\ell_{\min}}
    \right).
    \label{eq:grad-rho-bound}
\end{equation}
Substituting~\eqref{eq:grad-rho-bound} into~\eqref{eq:hessian-pre} and using $D(\mathbf q)\geq \ell_{\min}$ gives
\begin{equation}
    \|\nabla_{\mathbf q}^2\rho\|_{\mathrm{op}}
    \leq
    \frac{2\|G\|_{\mathrm{op}}}{\ell_{\min}}
    \left(
        1+\frac{\|\boldsymbol\ell\|}{\ell_{\min}}
    \right)^2 .
    \label{eq:hessian-bound-G}
\end{equation}
By Lemma~\ref{lem:gram-bound},
\begin{equation}
    \|G\|_{\mathrm{op}}
    \leq
    \|G\|_F
    \leq
    E_+\sqrt{\sum_{i,j}\hat\rho_i\hat\rho_j\ell_i\ell_j}
    =
    E_+\sum_i\hat\rho_i\ell_i .
    \label{eq:Gop-forward-bound}
\end{equation}
Together with $\|\boldsymbol\ell\|\leq \sqrt K\,\ell_{\max}$, this yields
\begin{equation}
    M
    \leq
    \frac{2E_+\sum_i\hat\rho_i\ell_i}{\ell_{\min}}
    \left(
        1+\frac{\sqrt K\,\ell_{\max}}{\ell_{\min}}
    \right)^2.
    \label{eq:M-final}
\end{equation}
Substituting~\eqref{eq:M-final} into~\eqref{eq:R2-Hessian} gives
\[
    |R_2|
    \leq
    \alpha_2\chi^2(\mathbf q\,\|\,\mathbf p),
\]
with $\alpha_2$ as in~\eqref{eq:alpha2-explicit}. This proves the second bound in~\eqref{eq:formal-chi2-residual-bound} and completes the proof.
\end{proof}

\begin{remark}[Operational meaning of $\alpha_1^{\mathrm{rms}}$]
\label{rem:alpha1-rms}
The quantity $\alpha_1^{\mathrm{rms}}$ defined in~\eqref{eq:alpha1-rms-def} is a $\mathbf p$-weighted RMS bound on the surrogate error:
\begin{equation}
    \sqrt{\frac{1}{K}\sum_{k=1}^K|\hat g_k-g_k^*|^2}
    =
    \|\hat{\mathbf g}-\mathbf g^*\|_{\mathbf p}
    \leq
    \alpha_1^{\mathrm{rms}} .
    \label{eq:alpha1-rms-interp}
\end{equation}
The constant $\alpha_1=(\alpha_1^{\mathrm{rms}})^2/4$ in~\eqref{eq:alpha1-explicit} is the coefficient obtained after absorbing the square-root residual into the $\chi^2$ term through Young's inequality. This is an aggregate bound rather than a per-coordinate one: in the worst case, an individual error $|\hat g_k-g_k^*|$ can be as large as $\sqrt K\,\alpha_1^{\mathrm{rms}}$. The use of the RMS norm is natural here because the Cauchy--Schwarz step in~\eqref{eq:R1-CS} pairs $\sqrt{\chi^2(\mathbf q\,\|\,\mathbf p)}$ with $\|\boldsymbol\eta\|_{\mathbf p}$.
\end{remark}
\subsection{Proof of Theorem~\ref{thm:lark-objective} (LARK Objective)}
\label{app:proof-thm1}

This subsection states and proves the formal version of Theorem~\ref{thm:lark-objective}. The informal statement in the main text asserts the existence of finite constants $\alpha > 0$ and $C \in \mathbb R$, independent of the optimization variable $\mathbf q$, realizing a $\chi^2$-regularized lower bound on the learnability improvement; the formal version below makes both constants explicit in terms of the constants $\alpha_1, \alpha_2$ from Lemma~\ref{lm:chi2_bound-formal}.

\begin{theorem}[Formal version of Theorem~\ref{thm:lark-objective}]
\label{thm:lark-formal}
Under Assumptions~\ref{ass:norm}--\ref{ass:lmin}, for every $\mathbf q \in \Delta^K$,
\begin{equation}
    \rho(\boldsymbol\theta_{\mathrm{ref}};\, \mathbf q) - \rho(\boldsymbol\theta_{\mathrm{ref}};\, \mathbf p)
    \;\geq\;
    \langle \mathbf q - \mathbf p,\, \hat{\mathbf g}\rangle
    \;-\; \alpha\, \chi^2(\mathbf q\,\|\,\mathbf p)
    \;+\; C,
    \label{eq:lark-formal}
\end{equation}
where
\begin{equation}
    \alpha \;\triangleq\; \alpha_1 + \alpha_2 \;>\; 0,
    \qquad
    C \;\triangleq\; -1,
    \label{eq:alpha-C-explicit}
\end{equation}
and $\alpha_1, \alpha_2 > 0$ are the explicit constants given by Lemma~\ref{lm:chi2_bound-formal}. In particular, $\alpha$ and $C$ are independent of $\mathbf q$.
\end{theorem}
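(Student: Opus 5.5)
The plan is to combine the exact decomposition of Lemma~\ref{lem:taylor-decomp}(iii) with the two residual bounds of Lemma~\ref{lm:chi2_bound-formal}. For every $\mathbf q\in\Delta^K$, part (iii) gives the identity
\begin{equation*}
\rho(\boldsymbol\theta_{\mathrm{ref}};\mathbf q)-\rho(\boldsymbol\theta_{\mathrm{ref}};\mathbf p)=\langle \mathbf q-\mathbf p,\hat{\mathbf g}\rangle+R_1+R_2 .
\end{equation*}
This is an equality by definition of $R_1$ and $R_2$, not an approximation. The task is therefore to lower bound $R_1+R_2$ by $-\alpha\chi^2(\mathbf q\|\mathbf p)-1$.

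I would use the trivial inequalities $R_1\ge -|R_1|$ and $R_2\ge -|R_2|$. Then I would invoke Lemma~\ref{lm:chi2_bound-formal}, which holds under Assumptions~\ref{ass:norm}--\ref{ass:lmin} for every $\mathbf q\in\Delta^K$. It gives $|R_1|\le \alpha_1\chi^2(\mathbf q\|\mathbf p)+1$ and $|R_2|\le\alpha_2\chi^2(\mathbf q\|\mathbf p)$. Summing these yields $R_1+R_2\ge-(\alpha_1+\alpha_2)\chi^2(\mathbf q\|\mathbf p)-1$. Substituting into the identity gives the claim with $\alpha=\alpha_1+\alpha_2$ and $C=-1$.

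It remains to check that $\alpha>0$ and that $\alpha$ and $C$ do not depend on $\mathbf q$. The explicit expressions \eqref{eq:alpha1-explicit}--\eqref{eq:alpha2-explicit} involve only $K$, the $\ell_i$, the $\hat\rho_i$, $E_+$ and $\Delta_\rho$. All of these are fixed by $\boldsymbol\theta_{\mathrm{ref}}$ and the candidate pool, so they are independent of $\mathbf q$.

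For positivity, I would note that $\alpha_2>0$ whenever $E_+>0$, $\ell_{\min}>0$ and $\sum_i\hat\rho_i\ell_i>0$. Assumption~\ref{ass:lmin} gives $\ell_{\min}>0$, and $E_+\ge C_+^2>0$. The condition $\sum_i\hat\rho_i\ell_i>0$ holds because, under finite logits, some token residual is nonzero. In the degenerate case where all residuals vanish, the bound still holds for any $\alpha>0$, since the $\chi^2$ term is nonnegative.

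The main obstacle is this bookkeeping on positivity and $\mathbf q$-independence, not any analytic difficulty, since all the analytic work already sits in Lemma~\ref{lm:chi2_bound-formal}. The only subtle point is that $R_2$ is defined as the exact remainder relative to $\mathbf g^*$, which is what makes the decomposition an identity rather than merely a first-order estimate. I would double-check that convention against Lemma~\ref{lem:taylor-decomp} before concluding.
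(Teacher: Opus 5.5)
Your proposal is correct and matches the paper's proof step for step: the exact identity from Lemma~\ref{lem:taylor-decomp}(iii), the bounds $R_i \ge -|R_i|$, the residual bounds from Lemma~\ref{lm:chi2_bound-formal}, and collecting terms into $\alpha=\alpha_1+\alpha_2$ and $C=-1$. Your extra checks that $\alpha>0$ and that $\alpha$ does not depend on $\mathbf q$ go a little beyond what the paper writes out, but they are consistent with it.
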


\begin{proof}
By Lemma~\ref{lem:taylor-decomp}(iii), the change in $\rho$ admits the exact decomposition
\begin{equation}
    \rho(\boldsymbol\theta_{\mathrm{ref}};\, \mathbf q) - \rho(\boldsymbol\theta_{\mathrm{ref}};\, \mathbf p)
    \;=\;
    \langle \mathbf q - \mathbf p,\, \hat{\mathbf g}\rangle
    \;+\; R_1
    \;+\; R_2.
    \label{eq:thm1-decomp}
\end{equation}
By Lemma~\ref{lm:chi2_bound-formal}, the two residuals satisfy
\begin{equation}
    |R_1| \;\leq\; \alpha_1\, \chi^2(\mathbf q\,\|\,\mathbf p) + 1,
    \qquad
    |R_2| \;\leq\; \alpha_2\, \chi^2(\mathbf q\,\|\,\mathbf p),
    \label{eq:R1R2-bounds}
\end{equation}
where $\alpha_1, \alpha_2 > 0$ are given by~\eqref{eq:alpha1-explicit} and~\eqref{eq:alpha2-explicit}, respectively. Combining~\eqref{eq:thm1-decomp} and~\eqref{eq:R1R2-bounds} gives
\begin{align}
    \rho(\boldsymbol\theta_{\mathrm{ref}};\, \mathbf q) - \rho(\boldsymbol\theta_{\mathrm{ref}};\, \mathbf p)
    &\;\stackrel{(a)}{=}\;
    \langle \mathbf q - \mathbf p,\, \hat{\mathbf g}\rangle
    \;+\; R_1 \;+\; R_2
    \notag\\
    &\;\stackrel{(b)}{\geq}\;
    \langle \mathbf q - \mathbf p,\, \hat{\mathbf g}\rangle
    \;-\; |R_1| \;-\; |R_2|
    \notag\\
    &\;\stackrel{(c)}{\geq}\;
    \langle \mathbf q - \mathbf p,\, \hat{\mathbf g}\rangle
    \;-\; \bigl(\alpha_1\, \chi^2(\mathbf q\,\|\,\mathbf p) + 1\bigr)
    \;-\; \alpha_2\, \chi^2(\mathbf q\,\|\,\mathbf p)
    \notag\\
    &\;\stackrel{(d)}{=}\;
    \langle \mathbf q - \mathbf p,\, \hat{\mathbf g}\rangle
    \;-\; (\alpha_1 + \alpha_2)\, \chi^2(\mathbf q\,\|\,\mathbf p)
    \;-\; 1,
\end{align}
where (a) is~\eqref{eq:thm1-decomp}; (b) uses $R_1, R_2 \geq -|R_1|, -|R_2|$; (c) substitutes the bounds~\eqref{eq:R1R2-bounds} from Lemma~\ref{lm:chi2_bound-formal}; and (d) collects the two $\chi^2$ terms. Identifying $\alpha = \alpha_1 + \alpha_2$ and $C = -1$ as in~\eqref{eq:alpha-C-explicit} yields~\eqref{eq:lark-formal}.
\end{proof}

\begin{remark}[Correspondence with the informal constants $\alpha$ and $C$]
\label{rem:alpha-C-correspondence}
Theorem~\ref{thm:lark-formal} instantiates the informal constants $\alpha, C$ of Theorem~\ref{thm:lark-objective} as
$\alpha = \alpha_1 + \alpha_2$ and $C = -1$, where $\alpha_1, \alpha_2$ are the forward-computable constants from Lemma~\ref{lm:chi2_bound-formal}. The resulting bound depends on $\mathbf q$ only through the linear term $\langle \mathbf q - \mathbf p, \hat{\mathbf g}\rangle$ and the $\chi^2$ regularization. Since $C = -1$ is independent of $\mathbf q$, it does not affect the maximizer of the right-hand side over $\Delta^K$. The coefficient $\alpha$ motivates the use of a $\chi^2$ penalty; in the fixed-budget selection rule of Section~\ref{sec:algorithm}, the penalty temperature is calibrated as $\tau(B)$ so that the optimizer has exactly $B$ active trajectories.
\end{remark}
\subsection{Proof of Lemma~\ref{prop:b_param} (\texorpdfstring{$B$}{B}-Parameterized Closed Form)}
\label{app:proof-prop2}

Theorem~\ref{thm:lark-formal} motivates a $\chi^2$-regularized linear surrogate. In the fixed-budget setting, we calibrate the regularization temperature so that the optimizer has exactly $B$ active coordinates. By the equivalence~\eqref{eq:chi2-as-norm} between the $\chi^2$-divergence and the squared Euclidean distance under the uniform prior, the corresponding objective can be written as
\begin{equation}
    \hat{\mathbf q}
    \;=\;
    \arg\max_{\mathbf q \in \Delta^K}
    \bigg\{
        \langle \mathbf q - \mathbf p,\, \hat{\mathbf g}\rangle
        \;-\; \frac{\tau\, K}{2}\, \|\mathbf q - \mathbf p\|^2
    \bigg\},
    \label{eq:lark-quadratic}
\end{equation}
where we write $\tau$ for the temperature to be chosen as a function of the budget $B$.

\begin{proof}[Proof of Lemma~\ref{prop:b_param}]
The objective~\eqref{eq:lark-quadratic} is strictly concave in $\mathbf q$ whenever $\tau K>0$, so the KKT conditions characterize its unique global maximizer over $\Delta^K$. Introduce a Lagrange multiplier $\lambda\in\mathbb R$ for the equality constraint $\sum_k q_k=1$ and KKT multipliers $\mu_k\geq 0$ for the non-negativity constraints $q_k\geq 0$. The Lagrangian is
\begin{equation}
    \mathcal L(\mathbf q,\lambda,\boldsymbol\mu)
    =
    \langle \mathbf q-\mathbf p,\,\hat{\mathbf g}\rangle
    -
    \frac{\tau K}{2}\|\mathbf q-\mathbf p\|^2
    -
    \lambda\Bigl(\sum_k q_k-1\Bigr)
    +
    \sum_k \mu_k q_k .
\end{equation}
Stationarity and complementary slackness give
\begin{equation}
    \hat g_k-\tau K(q_k-p_k)-\lambda+\mu_k=0,
    \qquad
    \mu_k q_k=0,
    \qquad
    \mu_k\geq 0.
    \label{eq:kkt-stationarity}
\end{equation}
Since $p_k=1/K$, these conditions imply the projection form
\begin{equation}
    \hat q_k
    =
    \left[
        \frac{1}{K}
        +
        \frac{\hat g_k-\lambda}{\tau K}
    \right]_+ ,
    \label{eq:kkt-projection}
\end{equation}
where $[x]_+=\max\{x,0\}$. The threshold $\lambda$ is determined by the normalization condition $\sum_k \hat q_k=1$.

Sort the scores as $\hat g_{(1)}\geq \hat g_{(2)}\geq\cdots\geq \hat g_{(K)}$, where $(i)$ denotes the sorted index. From~\eqref{eq:kkt-projection}, $\hat q_{(i)}>0$ if and only if $\hat g_{(i)}>\lambda-\tau$. Thus an active set of size $B$ is obtained whenever
\begin{equation}
    \hat g_{(B)}
    >
    \lambda-\tau
    \geq
    \hat g_{(B+1)}.
    \label{eq:active-set-condition}
\end{equation}
Under the assumption $\hat g_{(B)}>\hat g_{(B+1)}$, the top-$B$ active set is well-defined. We choose the boundary value
\begin{equation}
    \lambda
    =
    \hat g_{(B+1)}+\tau,
    \label{eq:lambda-threshold}
\end{equation}
which makes the $(B+1)$-st coordinate exactly inactive while keeping the first $B$ coordinates positive.

Substituting~\eqref{eq:lambda-threshold} into~\eqref{eq:kkt-projection}, for every $i\leq B$ we obtain
\begin{align}
    \hat q_{(i)}
    &=
    \frac{1}{K}
    +
    \frac{\hat g_{(i)}-\hat g_{(B+1)}-\tau}{\tau K}
    \notag\\
    &=
    \frac{\hat g_{(i)}-\hat g_{(B+1)}}{\tau K}.
    \label{eq:active-weights-tau}
\end{align}
For $i>B$, the projection in~\eqref{eq:kkt-projection} gives $\hat q_{(i)}=0$. Imposing the simplex normalization on the active coordinates gives
\begin{equation}
    1
    =
    \sum_{i=1}^B \hat q_{(i)}
    =
    \frac{1}{\tau K}
    \sum_{i=1}^B
    \bigl(\hat g_{(i)}-\hat g_{(B+1)}\bigr).
\end{equation}
Therefore the budget-calibrated temperature is
\begin{equation}
    \tau^*(B)
    =
    \frac{1}{K}
    \sum_{i=1}^{B}
    \bigl(\hat g_{(i)}-\hat g_{(B+1)}\bigr),
    \label{eq:tau-star}
\end{equation}
which is positive because $\hat g_{(B)}>\hat g_{(B+1)}$. Substituting~\eqref{eq:tau-star} into~\eqref{eq:active-weights-tau} yields
\begin{equation}
    \hat q_{(i)}
    =
    \frac{\hat g_{(i)}-\hat g_{(B+1)}}
    {\sum_{j=1}^{B}\bigl(\hat g_{(j)}-\hat g_{(B+1)}\bigr)}
    \quad \text{for } i\leq B,
    \qquad
    \hat q_{(i)}=0
    \quad \text{for } i>B.
\end{equation}
This is the closed-form $B$-parameterized weighting rule stated in Lemma~\ref{prop:b_param}.
\end{proof}
\subsection{Theoretical and Empirical Validation of the Surrogate \texorpdfstring{$\hat{\mathbf g}$}{g\textasciicircum}}
\label{app:ghat-empirical}

The forward-pass surrogate $\hat{\mathbf g}$ is the central computational shortcut underlying the LARK algorithm: it replaces the exact directional derivative
$\mathbf g^* = \nabla_{\mathbf q}\rho(\boldsymbol\theta_{\mathrm{ref}};\mathbf q)|_{\mathbf q=\mathbf p}$,
whose evaluation requires one backward pass per candidate trajectory, with a quantity computable from forward passes alone. This subsection studies the relationship between $\hat{\mathbf g}$ and $\mathbf g^*$ from two complementary perspectives: a theoretical RMS error bound and an empirical comparison on real student models.

\subsubsection{Theoretical Error Bound}
\label{app:ghat-theoretical}

The bounds derived in Appendix~\ref{app:proof-lemma2} immediately yield a quantitative bound on $\|\hat{\mathbf g}-\mathbf g^*\|$ in the $\mathbf p$-weighted norm
$\|\mathbf v\|_{\mathbf p}^2 \triangleq K^{-1}\sum_k v_k^2$.

\begin{corollary}[$\mathbf p$-weighted RMS error of $\hat{\mathbf g}$]
\label{cor:ghat-rms}
Under Assumptions~\ref{ass:norm}--\ref{ass:lmin},
\begin{equation}
    \|\hat{\mathbf g} - \mathbf g^*\|_{\mathbf p}
    \;=\;
    \sqrt{\frac{1}{K}\sum_{k=1}^K \bigl|\hat g_k - g_k^*\bigr|^2}
    \;\leq\;
    \alpha_1^{\mathrm{rms}},
    \label{eq:ghat-rms-bound}
\end{equation}
where $\alpha_1^{\mathrm{rms}}$ is the RMS upper bound from~\eqref{eq:alpha1-rms-def}, given by
\begin{equation}
    \alpha_1^{\mathrm{rms}}
    \;=\;
    \underbrace{\frac{3\,\Delta_\rho\, \hat\rho_{\max}}{\sqrt{K}}}_{\text{proxy term}}
    \;+\;
    \underbrace{\frac{2\, E_+\, \sqrt{\hat\rho_{\max}\, \ell_{\max}\, \sum_i \hat\rho_i\, \ell_i}}{\sqrt{K}\, \bar\ell}}_{\text{off-diagonal Gram term}}
    \;+\;
    \underbrace{\frac{E_+\, \ell_{\max}\, \sum_i \hat\rho_i\, \ell_i}{K\, \bar\ell^{\,2}}}_{\text{aggregate off-diagonal term}}.
    \label{eq:alpha1-rms-three-terms}
\end{equation}
The constant $\alpha_1$ in Lemma~\ref{lm:chi2_bound-formal} is recovered as
$\alpha_1=(\alpha_1^{\mathrm{rms}})^2/4$.
\end{corollary}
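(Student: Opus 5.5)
The corollary restates an intermediate bound already obtained inside the proof of Lemma~\ref{lm:chi2_bound-formal}, so the plan is to re-assemble those estimates in the $\mathbf p$-weighted norm. I would start from the decomposition $\boldsymbol\eta=\hat{\mathbf g}-\mathbf g^*=\boldsymbol\eta_{\mathrm{prox}}+\boldsymbol\eta_{\mathrm{diag}}$ in~\eqref{eq:eta-split}. Here $\boldsymbol\eta_{\mathrm{prox}}=\hat{\mathbf g}-\tilde{\mathbf g}$ isolates the substitution of $\hat\rho_i$ for $\rho_i^*$. The second piece, $\boldsymbol\eta_{\mathrm{diag}}=\tilde{\mathbf g}-\mathbf g^*$, isolates the diagonal (near-orthogonality) approximation of the Gram matrix. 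Since $\|\cdot\|_{\mathbf p}$ is a norm (a scaled Euclidean norm), the triangle inequality gives $\|\boldsymbol\eta\|_{\mathbf p}\le\|\boldsymbol\eta_{\mathrm{prox}}\|_{\mathbf p}+\|\boldsymbol\eta_{\mathrm{diag}}\|_{\mathbf p}$. It then suffices to match the first summand with the proxy term and the second with the two off-diagonal terms in~\eqref{eq:alpha1-rms-three-terms}.

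For the proxy part, I would write $\eta_{\mathrm{prox},k}$ explicitly as in~\eqref{eq:eta-prox-k}. Lemma~\ref{lm:1-formal} gives $|\hat\rho_i-\rho_i^*|\le\Delta_\rho\hat\rho_i\le\Delta_\rho\hat\rho_{\max}$. The bracket is therefore at most $3\Delta_\rho\hat\rho_{\max}$: a factor $2$ from the first term and $1$ from the $\ell$-weighted average. Squaring, averaging over $k$, and using $\sum_k\ell_k^2\le(\sum_i\ell_i)^2$ yields~\eqref{eq:prox-bound}.

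For the diagonal part, subtracting Lemma~\ref{lem:taylor-decomp}(i) from~\eqref{eq:gtilde-def} expresses the error through the off-diagonal Gram sums $o_k$ and $O$ of~\eqref{eq:ok-O-def}, giving~\eqref{eq:eta-diag-k}. I would then bound each off-diagonal Gram entry with the forward-computable estimate of Lemma~\ref{lem:gram-bound}. Cauchy--Schwarz on $\sum_i\sqrt{\hat\rho_i\ell_i}$ gives~\eqref{eq:ok-bound} and~\eqref{eq:O-bound}. Finally, $\|\mathbf v\|_{\mathbf p}\le\max_k|v_k|$ produces the two Gram terms.

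The main point requiring care is the bookkeeping in~\eqref{eq:eta-diag-k}. One has to check that the diagonal approximation applied to $\rho(\boldsymbol\theta_{\mathrm{ref}};\mathbf p)=\|\bar{\mathbf g}\|^2/\bar\ell$ produces exactly the correction $O\ell_k/\bar\ell^2$, with $\sum_i\rho_i^*\ell_i/(K^2\bar\ell)$ matching $K^{-2}\sum_i\|\mathbf g_i\|^2$ after the $K\bar\ell=\sum_i\ell_i$ normalization. The sign conventions of $o_k$ and $O$ also need to be consistent. Once that identity is verified, summing the two bounds gives $\|\hat{\mathbf g}-\mathbf g^*\|_{\mathbf p}\le\alpha_1^{\mathrm{rms}}$, which is~\eqref{eq:eta-rms-bound}. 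The relation $\alpha_1=(\alpha_1^{\mathrm{rms}})^2/4$ is then just the definition~\eqref{eq:alpha1-explicit} compared with~\eqref{eq:alpha1-rms-def}.
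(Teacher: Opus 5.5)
Your proposal is correct and is essentially the paper's proof. You split $\hat{\mathbf g}-\mathbf g^*=\boldsymbol\eta_{\mathrm{prox}}+\boldsymbol\eta_{\mathrm{diag}}$, apply the triangle inequality in $\|\cdot\|_{\mathbf p}$, and use \eqref{eq:prox-bound} and \eqref{eq:diag-bound}; the paper cites those two bounds from the proof of Lemma~\ref{lm:chi2_bound-formal}, while you re-derive them. Your flagged check of \eqref{eq:eta-diag-k} also goes through with the stated signs, since substituting $\mathbf g_k^\top\bar{\mathbf g}=\|\mathbf g_k\|^2/K+o_k$ and $\|\bar{\mathbf g}\|^2=K^{-2}\sum_i\|\mathbf g_i\|^2+O$ into \eqref{eq:gstar-closed} gives exactly $g_k^*=\tilde g_k+2o_k/\bar\ell-O\ell_k/\bar\ell^2$.
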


\begin{proof}
By the triangle inequality,
$\|\hat{\mathbf g}-\mathbf g^*\|_{\mathbf p}
\leq
\|\boldsymbol\eta_{\mathrm{prox}}\|_{\mathbf p}
+
\|\boldsymbol\eta_{\mathrm{diag}}\|_{\mathbf p}$,
where $\boldsymbol\eta_{\mathrm{prox}}$ and $\boldsymbol\eta_{\mathrm{diag}}$ are defined in the proof of Lemma~\ref{lm:chi2_bound-formal}. Combining the bounds in~\eqref{eq:prox-bound} and~\eqref{eq:diag-bound} gives~\eqref{eq:ghat-rms-bound}, with $\alpha_1^{\mathrm{rms}}$ given by~\eqref{eq:alpha1-rms-three-terms}.
\end{proof}

The three terms in~\eqref{eq:alpha1-rms-three-terms} correspond to the three sources of surrogate error. The first term,
$3\Delta_\rho \hat\rho_{\max}/\sqrt K$, is the proxy-substitution error: it comes from replacing the exact per-trajectory rates $\rho_i^*$ in the closed form of $g_k^*$ by the forward-pass proxies $\hat\rho_i$. Its size is controlled by $\Delta_\rho$, which measures the multiplicative gap between $\rho_i^*$ and $\hat\rho_i$ in Lemma~\ref{lm:1-formal}. The second term,
$2E_+\sqrt{\hat\rho_{\max}\ell_{\max}\sum_i\hat\rho_i\ell_i}/(\sqrt K\,\bar\ell)$, controls the per-coordinate off-diagonal Gram residual
$o_k=\mathbf g_k^\top\bar{\mathbf g}-\|\mathbf g_k\|^2/K$, which is ignored by the diagonal approximation
$\mathbf g_k^\top\bar{\mathbf g}\approx \|\mathbf g_k\|^2/K$. The third term,
$E_+\ell_{\max}\sum_i\hat\rho_i\ell_i/(K\bar\ell^2)$, controls the aggregate off-diagonal mass
$O=\|\bar{\mathbf g}\|^2-K^{-2}\sum_i\|\mathbf g_i\|^2$ left over by the approximation
$\|\bar{\mathbf g}\|^2\approx K^{-2}\sum_i\|\mathbf g_i\|^2$. The latter two terms are small when the per-trajectory gradients are close to orthogonal in parameter space.

The bound in~\eqref{eq:ghat-rms-bound} is an aggregate RMS bound across the $K$ candidates, not a per-coordinate guarantee. By the equivalence of finite-dimensional norms,
\begin{equation}
    \max_{k\in[K]} |\hat g_k-g_k^*|
    \;\leq\;
    \sqrt{\sum_{k=1}^K |\hat g_k-g_k^*|^2}
    \;=\;
    \sqrt K\,\|\hat{\mathbf g}-\mathbf g^*\|_{\mathbf p}
    \;\leq\;
    \sqrt K\,\alpha_1^{\mathrm{rms}}.
    \label{eq:ghat-linfty-bound}
\end{equation}
Thus a single coordinate can have a larger error in the worst case, while the RMS error remains controlled. This RMS norm is the relevant quantity for the residual analysis because the Cauchy--Schwarz step in~\eqref{eq:R1-CS} naturally pairs $\sqrt{\chi^2(\mathbf q\|\mathbf p)}$ with $\|\boldsymbol\eta\|_{\mathbf p}$.

\subsubsection{Empirical Correlation between $\hat{\mathbf g}$ and $\mathbf g^*$}
\label{app:ghat-empirical-corr}

The bound in Corollary~\ref{cor:ghat-rms} is a worst-case guarantee whose constants depend on the structural quantities of Lemma~\ref{lm:1-formal}. We complement this analysis by directly measuring how well $\hat{\mathbf g}$ tracks $\mathbf g^*$ on a real student model.

We sample $50$ questions uniformly from the candidate pool described in Appendix~\ref{app:exp_pool}. For each question, we compute both the LARK surrogate $\hat g_k$ and the exact directional derivative
$g_k^*=\partial\rho(\boldsymbol\theta_{\mathrm{ref}};\mathbf q)/\partial q_k|_{\mathbf q=\mathbf p}$.
The exact derivative is obtained by one full-parameter backward pass per candidate trajectory at the anchor model $\boldsymbol\theta_{\mathrm{ref}}$ (Qwen-2.5-7B), rather than by a low-rank or kernel approximation. Across $50$ questions and $K=33$ candidates per question, this produces $1{,}650$ paired values $(\hat g_k,g_k^*)$. To compare values across questions, we apply a per-question affine rescaling that aligns the mean and scale of $\hat g_k$ with those of $g_k^*$. This rescaling preserves within-question rankings and therefore does not affect $\mathrm{Recall@}B$.

We report three metrics. First, we compute pooled Pearson $r$ and Spearman $\rho$ between the rescaled $\hat g_k$ and $g_k^*$ to measure scalar agreement. Second, because LARK uses $\hat{\mathbf g}$ primarily through its top-$B$ ranking, we report
$\mathrm{Recall@}B \triangleq |\mathrm{top}_B(\hat{\mathbf g})\cap \mathrm{top}_B(\mathbf g^*)|/B$
averaged over questions for $B\in\{1,\ldots,10\}$, with random baseline $B/K$. Third, we report the per-question relative $\mathbf p$-weighted RMS error
$\|\hat{\mathbf g}-\mathbf g^*\|_{\mathbf p}/\|\mathbf g^*\|_{\mathbf p}$, matching the norm used in Corollary~\ref{cor:ghat-rms}.

\begin{figure}[t]
\centering
\includegraphics[width=\linewidth]{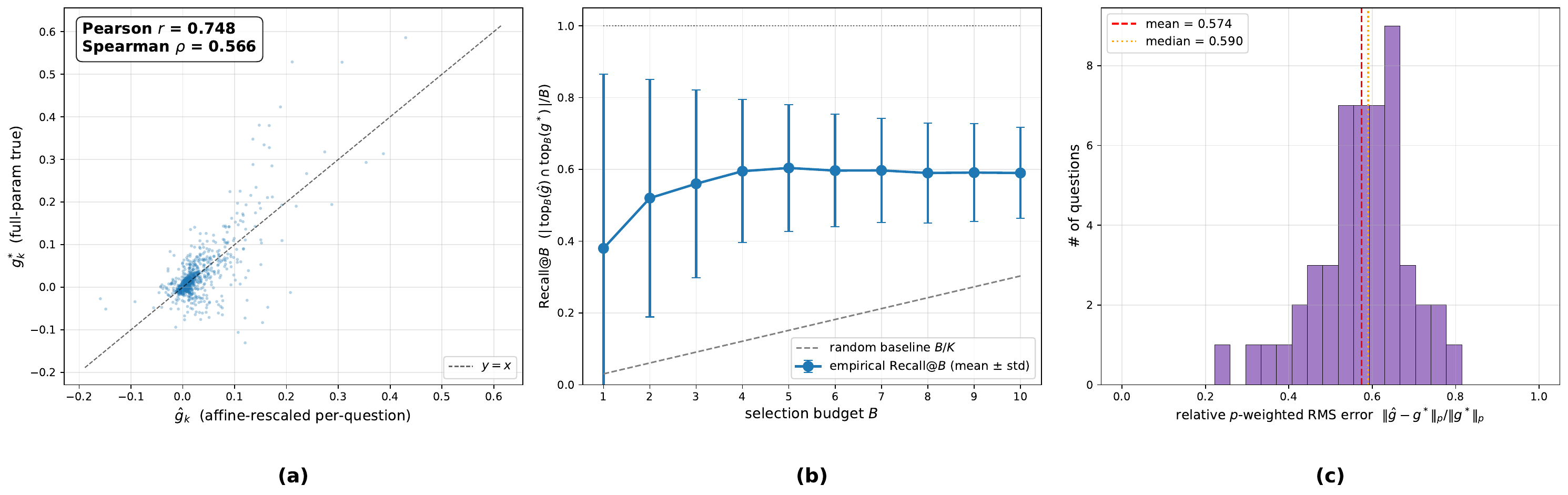}
\caption{\textbf{Empirical correlation between $\hat{\mathbf g}$ and $\mathbf g^*$ on Qwen-2.5-7B.}
\textbf{(a)} Per-question affine-rescaled $\hat g_k$ versus exact $g_k^*$ across $50 \times 33$ candidates; pooled Pearson $r = 0.748$ and Spearman $\rho = 0.566$.
\textbf{(b)} $\mathrm{Recall@}B$ between the $\mathrm{top}_B$ rankings of $\hat{\mathbf g}$ and $\mathbf g^*$ as a function of selection budget $B$; the dashed gray line is the random baseline $B/K$.
\textbf{(c)} Histogram of per-question relative $\mathbf p$-weighted RMS error $\|\hat{\mathbf g}-\mathbf g^*\|_{\mathbf p}/\|\mathbf g^*\|_{\mathbf p}$ across the $50$ verification questions; mean $0.574$, median $0.590$.}
\label{fig:ghat-empirical}
\end{figure}

Figure~\ref{fig:ghat-empirical}(a) shows a clear positive relationship between $\hat g_k$ and $g_k^*$, with pooled Pearson $r=0.748$ and Spearman $\rho=0.566$. This indicates that the surrogate captures a substantial portion of the scalar variation in $g_k^*$ after per-question affine rescaling, while the remaining gap is consistent with the off-diagonal Gram interactions omitted by the diagonal approximation in~\eqref{eq:alpha1-rms-three-terms}.

The ranking agreement is also substantially above chance. As shown in Figure~\ref{fig:ghat-empirical}(b), $\mathrm{Recall@}1\approx 0.38$, far above the random baseline $1/K\approx 0.03$. At the main experimental budget $B=3$, $\mathrm{Recall@}3\approx 0.56$, compared with the random baseline $3/K\approx 0.09$. The recall remains around $0.6$ for larger budgets, indicating that $\hat{\mathbf g}$ preserves meaningful top-$B$ ranking information for the selection rule used by LARK. The error bars also show nontrivial question-level variation, so this result should be interpreted as strong aggregate alignment rather than a perfect per-question match.

Figure~\ref{fig:ghat-empirical}(c) shows that the relative $\mathbf p$-weighted RMS error concentrates around $0.6$ (mean $0.574$, median $0.590$), which is far below the loose worst-case per-coordinate factor $\sqrt K\approx 5.7$ allowed by~\eqref{eq:ghat-linfty-bound}. The empirical error is therefore much smaller than the theoretical worst case, as expected in non-adversarial candidate pools.

Overall, these diagnostics support the use of $\hat{\mathbf g}$ as a forward-pass surrogate for top-$B$ trajectory selection. The surrogate is not a high-precision coordinate-wise estimator of $\mathbf g^*$, but it is positively aligned with $\mathbf g^*$ at the scalar level and preserves useful top-$B$ ranking information. This is the operative notion of agreement for LARK, since the closed-form selection rule in Lemma~\ref{prop:b_param} depends on the top-$B$ ordering and the score margins above the threshold, rather than on the exact magnitude of every coordinate. The empirical results therefore support the computational tradeoff made by LARK: avoiding $K$ full-parameter backward passes per question while retaining the ranking signal needed for downstream selection.
\section{Experimental Details}
\label{app:exp_details}

\subsection{Candidate Pool}
\label{app:exp_pool}

We adopt the trajectory candidate pool released by \citet{yang2026reasoning}
without modification. The pool consists of 5{,}000 mathematical
reasoning problems sampled from NuminaMath~\citep{li2024numinamath},
each paired with 33 reasoning trajectories produced by 11 teacher
models with 3 independent rollouts per teacher. All selection methods
in our experiments---including LARK and every baseline in
Appendix~\ref{app:exp_baselines}---operate on this identical pool, so
any difference in downstream performance is attributable to the
selection rule rather than to differences in the underlying candidate
set.

\paragraph{Teacher pool.}
The 11 teacher models span a range of scales and model families,
including both proprietary and open-weight reasoning models:
GPT-OSS-20B, GPT-OSS-120B, Phi-4-R+, QwQ-32B, DeepSeek-R1,
Qwen3-235B, Qwen3-30B, Qwen3-8B, Qwen3-4B, Nemotron, and Magistral-S.
For each (problem, teacher) pair, three independent rollouts are
generated, yielding $11 \times 3 = 33$ trajectories per problem.
We refer the reader to \citet{yang2026reasoning} for the full inference
configuration of each teacher (decoding temperature, top-$p$, maximum
generation length).

\paragraph{Correctness filtering.}
Every trajectory retained in the pool has been verified to reach the
correct final answer under the extraction rule of \citet{yang2026reasoning},
which matches the rule described in Appendix~\ref{app:exp_eval}.
Trajectories whose final answer disagrees with the ground truth are
not included. As a result, all 33 candidates for each problem are
\emph{correct} reasoning trajectories that nonetheless differ in
reasoning style, length, and student-side learnability---a setting
that isolates the selection problem from the orthogonal question of
correctness verification.

\subsection{Trajectory Selection Baselines}
\label{app:exp_baselines}

We compare LARK against the seven baselines reported in
Table~\ref{tab:main_weighted_math_merged}. Throughout this subsection, let
$\mathbf{x}$ denote the prompt (system and user turns), and let
$\{\mathbf{y}_k\}_{k=1}^{K}$ with $K=33$ denote the candidate
trajectories for $\mathbf{x}$ in the pool $\mathcal{C}(\mathbf{x})$
described in Appendix~\ref{app:exp_pool}. Each trajectory has length
$|a_k|$, with $t$-th token $y_t^k$, predictive distribution
$\boldsymbol{\pi}_t^k = \pi_{\boldsymbol{\theta}}(\cdot \mid \mathbf{x},
\mathbf{y}_{<t}^k) \in \Delta^{|\mathcal{V}|}$ under the student
$\pi_{\boldsymbol{\theta}}$, ground-truth one-hot
$\boldsymbol{\delta}(y_t^k) \in \Delta^{|\mathcal{V}|}$, and
length-normalized cross-entropy
$\ell_k = \tfrac{1}{|a_k|}\sum_{t=1}^{|a_k|} -\log
\boldsymbol{\pi}_t^k(y_t^k)$. Each baseline assigns a scalar score
to every candidate $\mathbf{y}_k$ and selects the top-$B$ trajectories
under that score with uniform weights $1/B$; LARK instead applies its
closed-form $\chi^2$-$B$ rule to obtain non-uniform weights.

\paragraph{Random.}
We sample one trajectory uniformly at random from
$\{\mathbf{y}_k\}_{k=1}^{K}$:
\[
\mathbf{y}^{\star} \sim \mathrm{Uniform}\bigl(\{\mathbf{y}_k\}_{k=1}^{K}\bigr).
\]
For $B>1$, we sample $B$ trajectories without replacement.

\paragraph{Token Length\textsubscript{max}.}
We score each candidate by its token length and select the longest:
\[
\mathrm{score}_{\text{len}}(\mathbf{y}_k) \triangleq |a_k|,
\qquad
\mathbf{y}^{\star} = \arg\max_{k\in[K]} \mathrm{score}_{\text{len}}(\mathbf{y}_k).
\]

\paragraph{Rule-based Quality\textsubscript{max} (LIMO-style).}
We implement a lightweight heuristic inspired by LIMO-style
filtering~\citep{ye2025limo}, combining four indicators computed from
the assistant text of $\mathbf{y}_k$:
(i) elaborated reasoning (total word length),
(ii) self-verification (frequency of ``check'' and ``verify''),
(iii) exploratory approach (frequency of ``perhaps'' and ``might''),
(iv) adaptive granularity (frequency of ``therefore'' and ``since'').
For each candidate, the frequency of each keyword group is normalized
by the total word count. We $z$-score each indicator across the $K$
candidates within the same pool and take a weighted sum:
\[
\mathrm{score}_{\text{rule}}(\mathbf{y}_k)
= 0.30 \cdot z(\mathrm{len})
+ 0.20 \cdot z(\mathrm{selfver})
+ 0.25 \cdot z(\mathrm{explore})
+ 0.25 \cdot z(\mathrm{granularity}),
\]
where $z(\cdot)$ denotes within-pool $z$-scoring. We select
$\mathbf{y}^{\star} = \arg\max_{k\in[K]}
\mathrm{score}_{\text{rule}}(\mathbf{y}_k)$.

\paragraph{LLM-judged Quality\textsubscript{max}.}
We use \texttt{Qwen3-32B-Instruct}~\citep{yang2025qwen3} as a judge
model in non-thinking mode. Given $(\mathbf{x}, \mathbf{y}_k)$, the
judge outputs a JSON object with an \texttt{overall\_score}
$s_{\text{judge}}(\mathbf{y}_k) \in [0,1]$ and a textual
\texttt{overall\_reason}. We select
$\mathbf{y}^{\star} = \arg\max_{k\in[K]}
s_{\text{judge}}(\mathbf{y}_k)$. The verbatim judge prompt is given
below.

\begin{prompt}[title={Judge Prompt: LLM-as-a-Judge for Trajectory Quality}]
You are a meticulous and highly critical evaluator of AI reasoning. Your primary goal is to identify and quantify
subtle flaws, logical gaps, inefficiencies, and hidden assumptions. Do not default to a high score. Your
starting assumption should be critical, and you must rigorously justify every point awarded.

First, please carefully read the following problem statement:
<Problem>
\texttt{\{question\}}
</Problem>

Now, please carefully read the following candidate's chain-of-thought reasoning:
<Reasoning>
\texttt{\{reasoning\_to\_evaluate\}}
</Reasoning>

When evaluating this reasoning, you must adhere to the following five key evaluation criteria and the scoring
rubric below.

Scoring Guidelines and Calibration:
You must use the full 0.0 to 1.0 scale. Scores should not be clustered at the top. Use this rubric to anchor your
scores:

1.0 (Exceptional/Flawless): Reserved for reasoning that is not only correct but also elegant, insightful, and
comprehensive. It is perfectly structured and leaves no room for doubt. This score should be exceedingly rare.

0.8 - 0.9 (Excellent but Imperfect): The core reasoning is valid and well-supported, but there may be very minor,
superficial issues (e.g., a trivial typo in a formula that doesn't affect the outcome, a slightly awkward
phrasing). The conclusion is unaffected.

0.5 - 0.7 (Competent but Flawed): The reasoning is generally on the right track but contains noticeable and non-trivial flaws.

0.2 - 0.4 (Poor): The reasoning contains fundamental flaws that largely invalidate the process or conclusion.

0.0 - 0.1 (Unacceptable): The reasoning is completely incorrect, irrelevant, nonsensical, or makes no meaningful
attempt to solve the problem.

Evaluation Criteria:
Factual Accuracy, Logical Rigor, Solution Completeness, Reasoning Efficiency, Presentation Quality.

For each criterion, give a score from 0.0 to 1.0 (in 0.1 increments) and a brief justification in JSON.

Your output must be a single, valid JSON object with:
\begin{verbatim}
{
  "dimensional_evaluation": {...},
  "overall_score": <float between 0.0 and 1.0>,
  "overall_reason": "<concise summary>"
}
\end{verbatim}
\end{prompt}

\paragraph{Global Naturalness (GRAPE)\textsubscript{max}~\citep{zhang2025grape}.}
GRAPE scores each candidate by the average log-likelihood assigned to
its tokens by the student under teacher forcing, equivalently the
negative length-normalized cross-entropy:
\[
\mathrm{score}_{\text{GRAPE}}(\mathbf{y}_k)
\;\triangleq\;
-\,\ell_k
\;=\;
\frac{1}{|a_k|}\sum_{t=1}^{|a_k|}
\log \boldsymbol{\pi}_t^{k}(y_t^k).
\]
We select
$\mathbf{y}^{\star} = \arg\max_{k\in[K]}\mathrm{score}_{\text{GRAPE}}(\mathbf{y}_k)$,
which is equivalent to selecting the trajectory of \emph{minimum}
length-normalized cross-entropy under the student.

\paragraph{Local Naturalness\textsubscript{max}~\citep{just2025local}.}
We follow the Local Naturalness metric. Each trajectory is split into
$J_k$ sentence-level steps $\mathbf{y}_k = (\mathbf{s}_1, \ldots,
\mathbf{s}_{J_k})$, where each $\mathbf{s}_j$ is a contiguous
sub-sequence of tokens. With local context size $m$ (we use $m=4$),
we compute
\[
\mathrm{score}_{\text{local}}(\mathbf{y}_k)
\;\triangleq\;
\frac{1}{J_k}\sum_{j=1}^{J_k}
\!\left(
\frac{1}{|\mathbf{s}_j|}
\sum_{u=1}^{|\mathbf{s}_j|}
\log \pi_{\boldsymbol{\theta}}\!\left(
s_{j,u}\,\big|\,\mathbf{x},\,
\mathbf{s}_{j-m:j-1},\,
\mathbf{s}_{j,<u}
\right)
\right),
\]
where $\mathbf{s}_{j-m:j-1}$ denotes up to $m$ immediately preceding
steps and $s_{j,u}$ is the $u$-th token of step $\mathbf{s}_j$. We
select $\mathbf{y}^{\star} = \arg\max_{k\in[K]}
\mathrm{score}_{\text{local}}(\mathbf{y}_k)$. Unlike GRAPE, this score
requires $m+1$ forward passes per trajectory (one per masked-context
rollout); see Appendix~\ref{app:complexity}.

\paragraph{Rank-Surprisal Ratio (RSR)\textsubscript{min}~\citep{yang2026reasoning}.}
RSR computes token-level surprisal and rank under the student. For
each token $y_t^k$ with context
$\mathbf{c}_t^k = (\mathbf{x}, \mathbf{y}_{<t}^k)$, define the
surprisal and rank as
\[
s_t^k \;\triangleq\;
-\log \boldsymbol{\pi}_t^{k}(y_t^k),
\qquad
r_t^k \;\triangleq\;
1 + \!\!\sum_{v\in\mathcal{V}}\!\!
\mathbb{I}\!\left[
\pi_{\boldsymbol{\theta}}(v \mid \mathbf{c}_t^k)
> \boldsymbol{\pi}_t^{k}(y_t^k)
\right].
\]
Ranks are clipped at $r_{\max}$ (we use $r_{\max}=100$):
$\bar r_t^k \triangleq \min(r_t^k, r_{\max})$. The trajectory-level
score is
\[
\mathrm{score}_{\text{RSR}}(\mathbf{y}_k)
\;\triangleq\;
\frac{\sum_{t=1}^{|a_k|}\bar r_t^k}{\sum_{t=1}^{|a_k|} s_t^k},
\qquad
\mathbf{y}^{\star} = \arg\min_{k\in[K]}
\mathrm{score}_{\text{RSR}}(\mathbf{y}_k).
\]
Surprisal and clipped rank are computed from a single forward pass;
clipping at $r_{\max}$ avoids the $O(|\mathcal{V}|)$ full sort while
remaining exact (Appendix~\ref{app:exp_impl}).

\subsection{Implementation Details}
\label{app:exp_impl}

\paragraph{SFT training.}
We use a single hyperparameter recipe across all three students
(Qwen-2.5-7B, Qwen-2.5-1.5B, Llama-3.2-3B) and across all
selection methods, so that any difference in downstream performance
can be attributed to the selection rule rather than to per-method
training tuning. Each student is fine-tuned with full-parameter SFT
under the following configuration:
\begin{itemize}
\item \textbf{Maximum sequence length:} $32{,}768$ tokens, kept
identical across selection scoring, SFT, and evaluation; trajectories
exceeding this length (less than $1\%$ of the pool) are truncated
from the right.
\item \textbf{Learning rate:} $1.0\times 10^{-5}$ for the main
$5{,}000$-trajectory experiments and $5.0\times 10^{-6}$ for the
$500$-trajectory verification of
Condition~\ref{ass:anchor-bound} in
Appendix~\ref{app:empirical-verification}, linearly scaled to the smaller
dataset to avoid early-step instability.
\item \textbf{Effective global batch size:} $64$ trajectories,
realized as $8$ GPUs $\times$ per-device batch $1$ $\times$ gradient
accumulation $8$.
\item \textbf{Epochs:} $5$.
\item \textbf{Optimizer:} AdamW (PyTorch implementation) with default
$(\beta_1,\beta_2,\varepsilon)=(0.9, 0.999, 10^{-8})$ and weight
decay $0$.
\item \textbf{Learning-rate schedule:} cosine decay with linear
warm-up over the first $5\%$ of total steps.
\item \textbf{Gradient clipping:} max-norm $1.0$, delegated to
DeepSpeed via \texttt{gradient\_clipping: auto}.
\item \textbf{Precision:} bf16 mixed precision throughout, with
fp32 communication for ZeRO collectives. We disable bf16
reduced-precision matmul accumulation
(\texttt{torch.backends.cuda.matmul.allow\_bf16\_reduced\_precision\_reduction}
$=$ \texttt{False}) to suppress occasional NaN gradients in the
LM-head backward on A100; fp16 is never used.
\item \textbf{Memory and throughput optimizations:} non-reentrant
gradient checkpointing, FlashAttention-2, sequence packing with
greedy bin-packing and reset \texttt{position\_ids}, and Liger fused
kernels (RoPE, RMSNorm, SwiGLU). The fused linear-CE kernel is
disabled because it is incompatible with sequence packing.
\end{itemize}

\paragraph{Software stack.}
Training and inference run inside a single conda environment with
PyTorch~$2.9.1$ (CUDA~$12.8$), Transformers~$4.57.1$,
PEFT~$0.18.0$, FlashAttention~$2.8.3$, Liger-Kernel, and DeepSpeed.
Distributed training is launched via
\texttt{torchrun --nproc\_per\_node=8} on top of the HuggingFace
\texttt{Trainer}, using DeepSpeed ZeRO Stage~$2$
(\texttt{zero2\_bf16.json}: \texttt{stage 2},
\texttt{communication\_data\_type fp32},
\texttt{reduce/allgather\_bucket\_size} $=5\times 10^{8}$). Under
ZeRO-$2$, optimizer states and gradients are sharded across the $8$
GPUs while parameters are replicated, which is sufficient for the
$\leq 8$B-parameter students considered. Evaluation uses
vLLM~$0.15.0$ for batched multi-sample decoding, with
SGLang~$0.5.9$ available as a fallback engine.

\paragraph{Hardware and wall-clock.}
All experiments run on a single node with $8\times$ NVIDIA A100~80GB
(SXM4) GPUs. Each $5{,}000$-trajectory SFT run completes in
roughly $3$--$5$ hours depending on the student size, and each
$500$-trajectory verification run for
Appendix~\ref{app:budget_scaling} completes in about $25$--$45$
minutes. 
Aggregating across $3$ students, $7$ selection baselines
(Appendix~\ref{app:exp_baselines}), and the LARK ablation grid (top-$B
\in \{1,3,5,10\}$ together with $\chi^2$- and KL-tempered weighting
variants), the total training budget is approximately
$1{,}600$ A100-hours. Evaluation is sharded across the same $8$
GPUs with \texttt{tensor\_parallel\_size}$=1$ per shard, with
generations merged across shards afterwards, adding on average
$\sim 15$ minutes per (student, benchmark) pair.

\paragraph{Prompt consistency and truncation.}
A single chat template is applied throughout the pipeline. The same
\texttt{tokenizer.apply\_chat\_template} call is used for
(i)~selection-time scoring (when computing the forward-pass quantities
$\hat\rho_k$ and $\hat g_k$ on candidate trajectories under
$\boldsymbol{\theta}_{\mathrm{ref}}$), (ii)~SFT example construction
(masking out the system and user turns so that only the assistant
turn contributes to the loss), and (iii)~evaluation prompting
(generation prompt appended via
\texttt{add\_generation\_prompt=True}). All three stages truncate to
the same $32{,}768$-token limit. The reasoning instruction
\texttt{"Please reason step by step, and put your final answer within
\textbackslash boxed\{\}"} is held fixed across selection, training,
and evaluation, ensuring that selection scores measured on
$\boldsymbol{\theta}_{\mathrm{ref}}$ remain meaningful for the
fine-tuned model under matched prompting.

\paragraph{Reproducibility.}
We fix the global seed to $42$ for
\texttt{random}, \texttt{numpy}, and \texttt{torch} in all
selection-scoring and Condition~\ref{ass:anchor-bound} verification
experiments, and rely on the HuggingFace \texttt{Trainer} default
seed of $42$ for SFT. All headline numbers in
Table~\ref{tab:main_weighted_math_merged} are reported as the mean $\pm$ standard deviation over $3$ independent
evaluation runs that differ only in the sampling seed of the inference
engine, with decoding hyperparameters \texttt{temperature}$=0.6$,
\texttt{top\_p}$=0.95$, \texttt{repetition\_penalty}$=1.1$, and
maximum generation length $32{,}768$, evaluated under ACC@$5$ with
$5$ samples per problem (Appendix~\ref{app:exp_eval}). To facilitate
reproduction we will release, under an MIT licence, the full training
code, the YAML configuration files
(\texttt{train/configs/sft\_full.yaml},
\texttt{train/configs/sft\_full\_500.yaml},
\texttt{train/configs/zero2\_bf16.json}), the precomputed
per-trajectory scores
(\texttt{data/Q1/<student>/grape\_g/scores.json}), and the selection
rules used in each ablation.

\subsection{Evaluation Protocol}
\label{app:exp_eval}

\paragraph{Benchmarks.}
We evaluate on the same four reasoning benchmarks reported in
Section~\ref{sec:exp_setup}, which span competition mathematics,
grade-level mathematics, and graduate-level science:
AIME-2024~\citep{aops_aime} ($30$ problems from the $2024$ American
Invitational Mathematics Examination),
AMC~\citep{maa_amc} ($83$ problems from the American Mathematics
Competitions),
MATH-500~\citep{hendrycks2021measuring} ($500$ problems sampled from the
MATH test set, which covers algebra, geometry, number theory,
probability, and precalculus), and
GPQA-Diamond~\citep{rein2024gpqa} ($198$ multiple-choice questions
written by domain experts in physics, chemistry, and biology). The
four benchmarks together cover both numeric-answer and
multiple-choice formats, which lets us assess whether the
improvements brought by LARK transfer beyond the mathematical
reasoning domain on which the candidate pool is constructed.
The Avg column in Table~\ref{tab:main_weighted_math_merged} is the
unweighted mean of these four benchmarks; because the four splits
differ in the number of problems, the per-benchmark numbers remain
the primary basis of comparison.

\paragraph{Decoding configuration.}
For every benchmark and every fine-tuned student, we generate $5$
independent samples per problem with the following decoding
parameters:
\begin{itemize}
  \item temperature $= 0.6$
  \item top-$p = 0.95$
  \item top-$k = -1$ (no top-$k$ truncation)
  \item maximum generation length $=$ $32{,}768$ tokens (matching the
        SFT context length)
\end{itemize}
The same decoding configuration is used across all baselines and
LARK to ensure that any difference in downstream accuracy is
attributable to the selection rule rather than to inference-time
differences.

\paragraph{Metric (ACC@5).}
We report \textbf{ACC@5} as our primary evaluation metric. For each
test problem, we sample $5$ independent generations under the
decoding configuration above; the problem is marked \emph{correct}
if any of the $5$ generations contains the correct final answer
under the extraction rule described below. ACC@5 captures whether
the student model is capable of producing a correct reasoning
trajectory for the problem within a small sampling budget, and is
less sensitive to single-sample variance than greedy ACC@1. Each
experiment is repeated under three independent decoding seeds, with
the selected training data and the SFT seed held fixed; we report
the mean and standard deviation of ACC@5 across these three
decoding seeds, so the reported variability captures inference-time
sampling noise rather than training-seed noise.

\paragraph{System prompts.}
We use a fixed system prompt per benchmark family, distinguishing
numeric-answer tasks (AIME, AMC), multiple-choice tasks (GPQA), and
free-form-answer tasks (MATH-500). All three prompts share the same
overall structure --- enforcing English output, separating the
reasoning chain from the final answer, and requiring the answer in a
\verb|\boxed{...}| environment for unambiguous extraction --- and
differ only in the answer-format specification.

\begin{prompt}[title={System Prompt: AIME / AMC (Numeric Answer)}]
You are an AI mathematician. All content you output MUST be in English. Use the question to deduce the correct numeric answer.

\textbf{Finish all your reasoning, then on a NEW line output only \texttt{boxed\{\textless number\textgreater\}}.}

The content inside \texttt{boxed\{\textless number\textgreater\}} must be the final numeric answer only, with no expressions, variables, or additional text.
\end{prompt}

\begin{prompt}[title={System Prompt: GPQA (Multiple Choice)}]
You are an AI mathematician. All content you output MUST be in English. Use the question to deduce the correct choice.

\textbf{Finish all your reasoning, then on a NEW line output only \texttt{boxed\{\textless letter\textgreater\}}.}

The content inside \texttt{boxed\{\textless letter\textgreater\}} must be exactly one of A, B, C, D.
\end{prompt}

\begin{prompt}[title={System Prompt: MATH (Free-form Answer)}]
You are an AI mathematician. All content you output MUST be in English. Use the question to deduce the correct answer.

\textbf{Finish all your reasoning, then on a NEW line output only \texttt{boxed\{\textless answer\textgreater\}}.}
\end{prompt}

\paragraph{Answer extraction rule.}
We extract the predicted final answer from each generation using the
following deterministic rule:
\begin{itemize}
  \item If the generation contains at least one \verb|\boxed{...}|
        token, we take the content inside the \emph{last}
        \verb|\boxed{...}| as the predicted answer. The last (rather
        than first) occurrence is used because models occasionally
        produce intermediate boxed expressions during reasoning;
        only the final boxed value is treated as the answer.
  \item If the generation contains no \verb|\boxed{...}| token, the
        generation is treated as \emph{incorrect}, regardless of
        whether the surrounding text contains a correct value.
\end{itemize}
For numeric-answer tasks (AIME, AMC), the extracted string is
canonicalized by stripping surrounding whitespace and converting
fractions $a/b$ to their decimal form before comparison with the
ground truth. For multiple-choice tasks (GPQA), the extracted string
is uppercased and compared against the gold letter in
$\{\mathrm{A}, \mathrm{B}, \mathrm{C}, \mathrm{D}\}$. For free-form
answers (MATH-500), we follow the canonicalization rules of
\citet{hendrycks2021measuring}, which normalize fraction, square-root,
and exponent notation prior to string matching.

\paragraph{Efficient scoring for rank-based metrics.}
The RSR baseline of Appendix~\ref{app:exp_baselines} requires the
clipped rank $\bar r_t^k = \min(r_t^k, r_{\max})$ at every token
position. A naive implementation would sort the full
$|\mathcal{V}|$-dimensional logit vector at each token, costing
$O(|\mathcal{V}| \log |\mathcal{V}|)$ per token. We avoid this by
observing that under rank clipping at $r_{\max}$, only the relative
order of the top-$r_{\max}$ logits matters: any token whose true
rank exceeds $r_{\max}$ is mapped to $r_{\max}$ regardless of its
exact rank. We therefore extract the top-$r_{\max}$ logits via
\texttt{torch.topk} (cost $O(|\mathcal{V}| \log r_{\max})$ per token),
check whether the gold token $y_t^k$ is among them, and assign
$\bar r_t^k$ accordingly. This procedure is exact under rank clipping
and reduces the per-token cost by a factor of
$\log |\mathcal{V}| / \log r_{\max}$, which is roughly
$5\times$--$6\times$ for $|\mathcal{V}| \approx 1.5 \times 10^5$ and
$r_{\max} = 100$. The same forward pass that produces the surprisal
$s_t^k = -\log \boldsymbol{\pi}_t^k(y_t^k)$ is reused to produce the
top-$r_{\max}$ logits, so RSR scoring requires only one forward pass
per candidate trajectory.

\paragraph{Prompt consistency.}
We use a consistent chat template across (i) selection-time scoring
(when the student model evaluates each candidate trajectory under
teacher forcing), (ii) SFT loss formatting (when the model is
fine-tuned on the selected trajectories), and (iii) evaluation-time
prompting (when the fine-tuned model is queried on benchmark
problems). Using the same chat template at all three stages reduces
train--test mismatch and ensures that selection scores measured on
$\boldsymbol{\theta}_{\mathrm{ref}}$ remain meaningful for the
fine-tuned model. All inputs are truncated to the same maximum
sequence length of $32{,}768$ tokens used for SFT.

\subsection{Train/Eval Contamination Audit}
\label{app:contamination}

To ensure that the downstream evaluation results in Table~\ref{tab:main_weighted_math_merged} reflect genuine generalization rather than memorization of training-time supervision, we audit the overlap between the $5{,}000$-problem NuminaMath candidate pool of Appendix~\ref{app:exp_pool} and every problem in the evaluation benchmarks of Appendix~\ref{app:exp_eval}.

\paragraph{Method.}
For each evaluation problem we compute the character $8$-gram Jaccard similarity against every problem in the candidate pool, after a normalization step that lowercases the text, collapses whitespace, and strips redundant LaTeX spacing macros. We report two statistics: (i) \emph{exact matches} (Jaccard $=1$ after normalization), which would indicate verbatim duplication; and (ii) \emph{near-matches} at three progressively stricter thresholds, $J \geq 0.4$, $J \geq 0.7$, and $J \geq 0.85$, to capture different degrees of surface overlap. The metric is sensitive to both surface and structural duplication: an exact-template variant with all numerals changed typically scores $J \in [0.7, 0.95]$, while a fully independent problem on the same topic typically scores $J < 0.2$.

\paragraph{Results.}
Table~\ref{tab:contamination} summarizes the audit across all four evaluation benchmarks (AIME-2024, AMC, GPQA-Diamond, and MATH-500, broken down by difficulty levels L1--L5). The audit detects $\mathbf{0}$ exact matches across all $811$ evaluation problems. AIME-2024, AMC, GPQA-Diamond, and MATH-L1 contain $0$ near-matches at every threshold; GPQA-Diamond is graduate-level physics, chemistry, and biology with no domain overlap with the math-only NuminaMath training pool, while AIME, AMC, and MATH-L1 simply do not appear in the pool under any threshold. MATH-L2 through L5 contain $31$ near-matches at $J \geq 0.4$ ($\sim 3.7\%$ of the MATH-500 split), of which only $1$ case at $J \geq 0.85$.

\begin{table}[h]
\centering
\small
\caption{Train/eval contamination audit. For each evaluation benchmark, we report the number of problems whose character $8$-gram Jaccard similarity against the NuminaMath candidate pool of Appendix~\ref{app:exp_pool} exceeds the indicated threshold. \emph{Exact} denotes Jaccard $= 1$ after text normalization. MATH-L1 through MATH-L5 are the five difficulty levels of MATH-500.}
\label{tab:contamination}
\setlength{\tabcolsep}{8pt}
\begin{tabular}{lrrrrr}
\toprule
\textbf{Benchmark} & $N$ & \textbf{Exact} & $J \geq 0.4$ & $J \geq 0.7$ & $J \geq 0.85$ \\
\midrule
AIME-2024     & $30$  & $0$ & $0$  & $0$ & $0$ \\
AMC           & $83$  & $0$ & $0$  & $0$ & $0$ \\
GPQA-Diamond  & $198$ & $0$ & $0$  & $0$ & $0$ \\
MATH-L1       & $43$  & $0$ & $0$  & $0$ & $0$ \\
MATH-L2       & $90$  & $0$ & $2$  & $1$ & $1$ \\
MATH-L3       & $105$ & $0$ & $7$  & $1$ & $0$ \\
MATH-L4       & $128$ & $0$ & $16$ & $5$ & $0$ \\
MATH-L5       & $134$ & $0$ & $6$  & $1$ & $0$ \\
\midrule
\textbf{Total} & $\mathbf{811}$ & $\mathbf{0}$ & $\mathbf{31}$ & $\mathbf{8}$ & $\mathbf{1}$ \\
\bottomrule
\end{tabular}
\end{table}

\paragraph{Categorization of MATH-L2--L5 near-matches.}
Manual inspection of all $31$ flagged pairs places them in three categories, none of which constitute contamination:

\begin{enumerate}[leftmargin=*]
    \item \textbf{Template-shared, numerically perturbed.} Same algebraic template, different constants or signs, leading to a different polynomial and a different answer. The highest-similarity case (\texttt{MATH-L2 \#11}, $J = 0.869$) asks for the smallest $n$ such that all roots of $z^4 + z^2 + 1 = 0$ are $n$-th roots of unity, while the closest training problem asks the same question for $z^4 - z^2 + 1 = 0$.
    
    \item \textbf{Template-shared, different question on the same setup.} Same physical or geometric setup, but a different quantity is asked, with no derivable relation between the two answers. For example, \texttt{MATH-L5 \#46} ($J = 0.693$) describes a hot-air balloon held by four ropes anchored at points $A,B,C,D$; the training problem on the same setup asks for the length of $OH$, whereas the evaluation problem asks for the greatest rope length saved by replacing $HC + HD$ with a single rope at a chosen point. These are companion problems from the parent MATH dataset.
    
    \item \textbf{Surface-level template variation.} Same problem stem with different numerical inputs (e.g., coefficients in a quadratic completion task). \texttt{MATH-L4 \#38} ($J = 0.664$) asks to rewrite $x^2 + 2.6x + 3.6$ in the form $(x+b)^2 + c$, while the training problem performs the same operation on $x^2 - 20x + 36$. The procedure is identical but the inputs and outputs differ.
\end{enumerate}

In none of the $31$ flagged cases does the answer to a training problem transfer to the corresponding evaluation problem, and no exact duplicates of any AIME-2024, AMC, GPQA-Diamond, or MATH-500 problem appear in the training pool. The downstream Acc@5 results in Table~\ref{tab:main_weighted_math_merged} therefore reflect generalization to held-out problems rather than retrieval of memorized supervision.

\section{Diagnostic Analyses of LARK}
\label{app:diagnostics}

This appendix collects additional evaluation results and diagnostic
analyses that validate the design choices of LARK beyond the main
downstream comparison.
Appendix~\ref{app:complexity} verifies the efficiency
claim through both an asymptotic complexity derivation and a wall-clock
measurement on $8$ A100 GPUs, showing that LARK matches the lowest
cost achieved by any per-trajectory scoring baseline.
Appendix~\ref{app:budget_scaling} sweeps the selection budget $B$ from 1 to 20 on a 500-problem diagnostic subset and studies how performance changes with the budget. Appendix~\ref{app:case_study} zooms into a
single training problem to illustrate, end to end, the score landscape
over candidate trajectories, the token-level structure of the chosen
trajectory, the $\chi^2$-$B$ weighting it induces, and the resulting
weighted SFT objective for that problem.

\subsection{Computational Cost}
\label{app:complexity}

We separate every per-trajectory score into two parts: a single
forward pass over the trajectory through the student model, with cost
$C_{\mathrm{fwd}}(|a_k|)$, and a per-token post-processing step on
the resulting logits. Decomposing the total cost in this way exposes
the differences across methods, which would otherwise be hidden by
the dominant forward-pass term.

\paragraph{Per-trajectory cost decomposition.}
A single forward pass through a transformer with $L$ layers, hidden
dimension $d$, and vocabulary size $|\mathcal{V}|$ on a trajectory of
length $|a_k|$ has cost
\begin{align*}
C_{\mathrm{fwd}}(|a_k|)
= \underbrace{\mathcal{O}(L|a_k| d^2)}_{\text{MLP}}
+ \underbrace{\mathcal{O}(L|a_k|^2 d)}_{\text{attention}}
+ \underbrace{\mathcal{O}(|a_k| \cdot |\mathcal{V}| \cdot d)}_{\text{LM head}}.
\end{align*}
After this forward pass, every per-trajectory selection method applies
a post-processing operation to the logits at each token. GRAPE computes
the cross-entropy of the gold token from a $|\mathcal{V}|$-dimensional
log-sum-exp, with per-token cost $\mathcal{O}(|\mathcal{V}|)$. LARK
computes both the cross-entropy and the Brier component
$\sum_v \boldsymbol{\pi}_t^k(v)^2$, again with per-token cost
$\mathcal{O}(|\mathcal{V}|)$. RSR additionally selects the
top-$r_{\max}$ entries of each token distribution to form a partial
rank statistic, with per-token cost
$\mathcal{O}(|\mathcal{V}| \log r_{\max})$. Local Naturalness instead
runs $m+1$ independent forward passes per trajectory (one for each
masked rollout), each followed by an $\mathcal{O}(|\mathcal{V}|)$
per-token operation.

\paragraph{Total complexity.}
Let $N$ denote the number of training problems and $K$ the number of
candidate trajectories per problem. Aggregating the forward and the
post-processing terms over all $NK$ candidates yields the total cost
of each method:
\begin{align*}
\text{GRAPE}\!: \quad &
\mathcal{O}\!\big(NK \cdot [C_{\mathrm{fwd}}(|a_k|) + |a_k| \cdot |\mathcal{V}|]\big), \\[2pt]
\text{LARK}\!: \quad &
\mathcal{O}\!\big(NK \cdot [C_{\mathrm{fwd}}(|a_k|) + |a_k| \cdot |\mathcal{V}|]\big), \\[2pt]
\text{RSR}\!: \quad &
\mathcal{O}\!\big(NK \cdot [C_{\mathrm{fwd}}(|a_k|) + |a_k| \cdot |\mathcal{V}| \log r_{\max}]\big), \\[2pt]
\text{Local Naturalness}\!: \quad &
\mathcal{O}\!\big(NK \cdot [(m+1)\,C_{\mathrm{fwd}}(|a_k|) + |a_k| \cdot |\mathcal{V}|]\big).
\end{align*}
LARK matches the lowest cost achieved by GRAPE. RSR adds a
$\log r_{\max}$ factor on the per-token term because of its rank
selection step, and Local Naturalness multiplies the dominant forward
pass by $m+1$ because of its repeated rollouts. The strict ordering
$\text{GRAPE} = \text{LARK} < \text{RSR}$ and
$\text{GRAPE} = \text{LARK} < \text{Local Naturalness}$ follows from
$|a_k| \cdot |\mathcal{V}| < |a_k| \cdot |\mathcal{V}| \log r_{\max}$
and from $C_{\mathrm{fwd}} < (m+1)\,C_{\mathrm{fwd}}$ for $m \geq 1$.
We do not assign a model-independent asymptotic formula to
LLM-judged Quality because its cost depends on the judge model and
serving setup; we therefore report it in the wall-clock comparison.

\paragraph{Empirical wall-clock cost.}
We benchmark each scoring method by measuring its per-sample wall-clock
cost on $8$ A100 GPUs.\footnote{Random, Token Length, and Rule-based
Quality require no forward pass through the student and run in
negligible time, so we omit them from the figure.}
Figure~\ref{fig:computational_cost} reports the resulting cost across
all five scoring methods. LARK ($1.32$\,s/sample) matches GRAPE
($1.28$\,s/sample) to within $3\%$, which reflects the fact that they
share the same forward pass and differ only by an additional
$\mathcal{O}(|\mathcal{V}|)$ Brier sum per token. LLM-judged Quality
is $1.6\times$ slower at $2.11$\,s/sample because of the auxiliary
judge call. RSR is $2.7\times$ slower at $3.44$\,s/sample, which is
consistent with the extra $\log r_{\max}$ rank-selection factor in its
per-token operation. Local Naturalness is the most expensive at
$5.06$\,s/sample, which is $4.0\times$ slower than LARK and matches
the $(m+1)$-fold forward-pass cost predicted by the asymptotic
analysis. The empirical ordering matches the constant-factor ranking
implied by the complexity formulas.

\begin{figure}[t]
\centering
\includegraphics[width=0.7\linewidth]{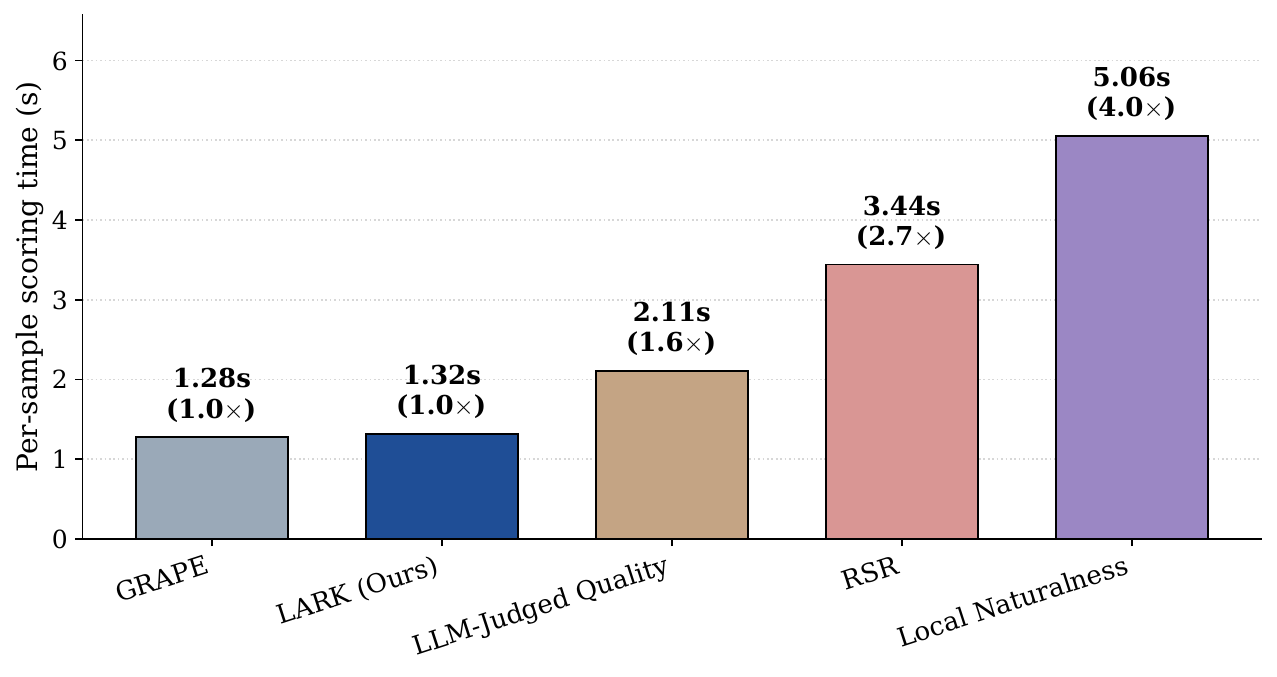}
\caption{\textbf{Per-sample wall-clock cost of trajectory scoring} on
$8$ A100 GPUs. The number in parentheses is the slowdown relative to
the fastest method. LARK is essentially as fast as GRAPE,
$1.6\times$ faster than LLM-judged Quality, $2.7\times$ faster than
RSR, and $4.0\times$ faster than Local Naturalness. Methods that
require no forward pass through the student (Random, Token Length,
Rule-based Quality) are omitted because their cost is negligible.}
\label{fig:computational_cost}
\end{figure}

\subsection{Selection Budget Scaling}
\label{app:budget_scaling}

We study how LARK behaves as the number of selected trajectories per
question changes. This analysis sweeps the selection budget
$B \in \{1, 3, 5, 10, 20\}$ on Qwen-2.5-7B and evaluates the resulting
student on AMC Acc@5. Because this sweep is conducted on the same
500-problem subset used for our diagnostic experiments, rather than the
full 5{,}000-problem training set used in
Table~\ref{tab:main_weighted_math_merged}, these results are intended
to show the relative trend across budgets rather than to replace the
main evaluation results.

For each budget $B$, we apply the LARK selection rule to choose
$B$ trajectories per problem, fine-tune Qwen-2.5-7B with the same SFT
protocol described in Appendix~\ref{app:exp_details}, and report AMC
Acc@5 in Figure~\ref{fig:budget_scaling}. LARK remains effective across
the entire budget range: accuracy increases from $49.4\%$ at $B{=}1$
to $65.1\%$ at $B{=}3$, reaches its highest value of $68.7\%$ at
$B{=}10$, and remains strong at $66.3\%$ when $B{=}20$.

The trend is not strictly monotonic. Moving from $B{=}1$ to $B{=}3$
brings a large improvement, indicating that using multiple selected
trajectories can provide complementary supervision. However, increasing
the budget beyond $B{=}3$ gives smaller and less stable gains. In
particular, $B{=}5$ underperforms $B{=}3$, while $B{=}10$ gives the
best result and $B{=}20$ slightly decreases. This pattern suggests that
a small number of carefully selected trajectories already captures most
of the useful training signal, while adding more lower-ranked
trajectories can introduce weaker or less informative supervision.

\begin{figure}[t]
\centering
\includegraphics[width=0.55\linewidth]{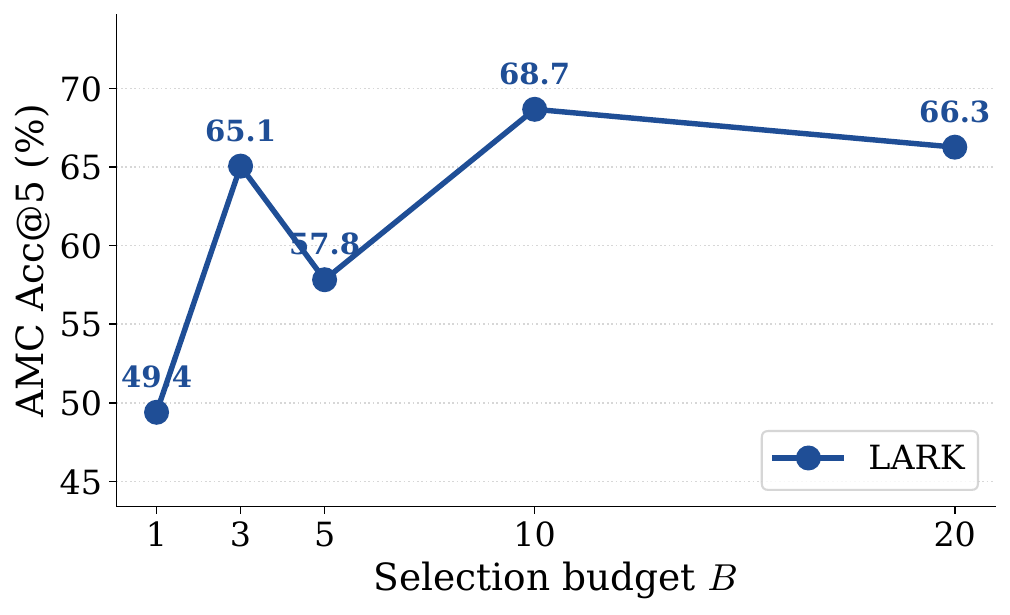}
\caption{\textbf{Selection budget scaling of LARK} on Qwen-2.5-7B,
evaluated on AMC after fine-tuning on the 500-problem subset. LARK
achieves strong performance across
$B \in \{1, 3, 5, 10, 20\}$, with the best result obtained at
$B{=}10$.}
\label{fig:budget_scaling}
\end{figure}

Overall, the budget sweep shows that LARK is not sensitive to a single
fixed choice of $B$. The method performs well with a small budget
($B{=}3$) and remains competitive as the budget increases. This supports
the use of a small top-$B$ selection strategy in the main experiments,
where the goal is to improve distillation efficiency while avoiding
unnecessary supervision from less informative trajectories.
\subsection{Case Study: A Single-Problem Walkthrough}
\label{app:case_study}

While Appendices~\ref{app:complexity} and~\ref{app:budget_scaling} support the efficiency and budget-scaling analyses of LARK at the aggregate level, this appendix zooms into a single training problem to illustrate the end-to-end behavior of the selection rule. We walk through (i) the problem and its candidate pool, (ii) the LARK score landscape, (iii) the top-1 trajectory selected by LARK, and (iv) the closed-form $\chi^2$-$B$ weights at $B=3$.

\paragraph{Problem and candidate pool.}
We illustrate on problem $\text{pid}=839$ from the candidate pool of Appendix~\ref{app:exp_pool}:

\begin{prompt}[title={User Question (pid$=839$)}]
If $\mathbf{a}$, $\mathbf{b}$, and $\mathbf{c}$ are vectors such that $\|\mathbf{a}\| = \|\mathbf{b}\| = 1$, $\|\mathbf{a} + \mathbf{b}\| = \sqrt{3}$, and
\[
\mathbf{c} - \mathbf{a} - 2 \mathbf{b} = 3 (\mathbf{a} \times \mathbf{b}),
\]
then find $\mathbf{b} \cdot \mathbf{c}$.
\end{prompt}

\noindent The ground-truth answer is $\boxed{\dfrac{5}{2}}$. The pool contains $K = 33$ candidate trajectories produced by 11 teacher models with 3 rollouts each; all reach the correct final answer per the construction of Appendix~\ref{app:exp_pool}.

\paragraph{Score landscape.}
Figure~\ref{fig:case_study_scores} shows the LARK score $\hat g_k$ for all 33 candidates, grouped by teacher. The top-1 trajectory is the second rollout of \texttt{phi4-reason-plus} (highlighted in red, $\hat g = 0.01782$); the same teacher's third rollout takes rank 2 ($\hat g = 0.01687$). Notably, \texttt{phi4-reason-plus} rollout 1 ranks only 25 ($\hat g = 0.00960$): the within-teacher spread on this teacher alone is $1.86\times$, exceeding the spread between several pairs of teachers. The same pattern recurs across the pool (e.g., \texttt{nemotron-super-v15} rollouts span $0.0052$ to $0.0078$), which underscores that LARK selects at the trajectory level rather than at the teacher level. Teachers commonly perceived as strong on aggregate benchmarks (\texttt{deepseek-r1-0528}, \texttt{qwen3-235b-2507}, \texttt{qwq-32b}) do not contribute any top-3 trajectory on this problem, with their rollouts clustering near the median.

\begin{figure}[h]
\centering
\includegraphics[width=\linewidth]{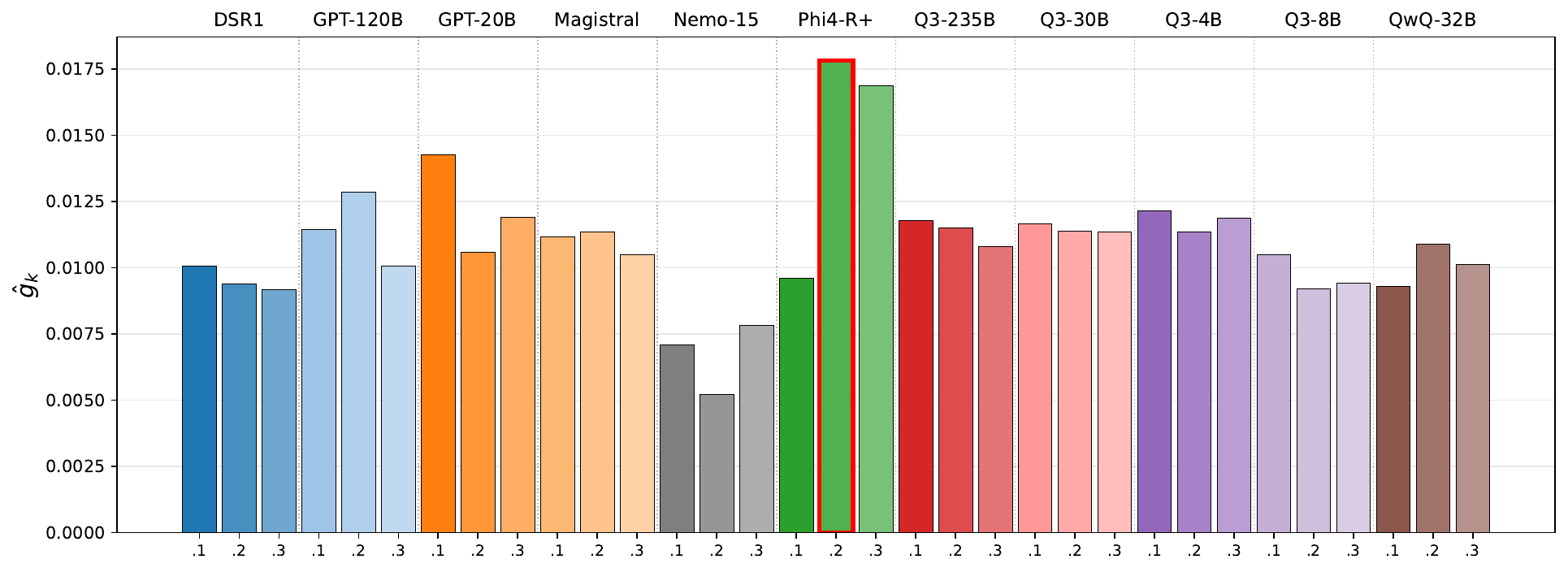}
\caption{$\hat g_k$ for all 33 candidate trajectories on problem $\text{pid}=839$, grouped by teacher. Each teacher contributes 3 rollouts (indexed .1/.2/.3 along the $x$-axis). The top-1 trajectory selected by LARK (\texttt{phi4-reason-plus} rollout 2, $\hat g = 0.01782$) is highlighted with a red border. Within-teacher variation is comparable to between-teacher variation.}
\label{fig:case_study_scores}
\end{figure}

\paragraph{Top-1 trajectory selected by LARK.}
We quote the top-1 trajectory (\texttt{phi4-reason-plus} rollout 2, $\hat g = 0.01782$) in full:

\begin{prompt}[title={Top-1 trajectory selected by LARK (\texttt{phi4-reason-plus} rollout 2)}]
\small
\texttt{<think>}\\
We are given:
Vectors $a$, $b$, $c$. Conditions:
$\|a\|=\|b\|=1$, and $\|a+b\|=\sqrt{3}$. Also $c - a - 2b = 3(a \times b)$.
We need find $b \cdot c$.

Step: First, note: $\|a+b\|^2 = (a+b)^2 = a^2 + 2 a \cdot b + b^2 = 1 + 2\cos\theta + 1 = 2 + 2\cos\theta = 3$. So $2 + 2\cos\theta = 3 \Rightarrow \cos\theta = 1/2$ so $a \cdot b = 1/2$.

Now, note: $a \times b$ is perpendicular to both $a$ and $b$. So it is orthogonal to $a$, $b$. Then, $c = a + 2b + 3(a \times b)$. Dotting with $b$: $b \cdot c = b \cdot a + 2 b \cdot b + 3 b \cdot (a \times b) = a \cdot b + 2 \cdot 1 + 3 \cdot 0 = 1/2 + 2 = 5/2$. So answer: $5/2$.

Thus final answer: $5/2$. We'll produce answer in box: \texttt{\textbackslash{}boxed\{5/2\}}.\\
\texttt{</think>}\\
\texttt{\textbackslash{}boxed\{\textbackslash{}frac\{5\}\{2\}\}}
\end{prompt}

\noindent The trajectory is substantially more compact than the average candidate in this pool. Importantly, its length-normalized cross-entropy $\ell_k$ under $\theta_{\text{ref}}$ is \emph{not} the smallest in the pool; what raises $\hat g_k$ appears to be the structure of the residual rather than low loss alone. A qualitative inspection suggests that high-residual tokens around numeric expressions and key algebraic steps carry a large portion of the cumulative Brier mass, matching the heuristic reading of $\hat g_k$ in Remark~\ref{rem:ghat-meaning}.

\paragraph{$\chi^2$-$B$ weighting at $B=3$.}
With $B=3$, the closed-form $\chi^2$-$B$ rule gives the weights from the score margins above the threshold $\hat g_{(B+1)}$. The four relevant scores on this problem are:
\begin{align*}
\hat g_{(1)} &= 0.01782 \quad (\texttt{phi4-reason-plus}.2), \\
\hat g_{(2)} &= 0.01687 \quad (\texttt{phi4-reason-plus}.3), \\
\hat g_{(3)} &= 0.01426 \quad (\texttt{gptoss-20b-high}.1), \\
\hat g_{(4)} &= 0.01285 \quad (\texttt{gptoss-120b-high}.2;\ \text{threshold}).
\end{align*}
The score margins above the threshold are
\begin{equation*}
\hat g_{(1)} - \hat g_{(4)} = 0.00497, \qquad
\hat g_{(2)} - \hat g_{(4)} = 0.00402, \qquad
\hat g_{(3)} - \hat g_{(4)} = 0.00141,
\end{equation*}
summing to $\sum_{j=1}^{3} (\hat g_{(j)} - \hat g_{(4)}) = 0.01040$. The closed-form temperature is therefore $\tau^*(B) = 0.01040 / K \approx 3.15 \times 10^{-4}$, and the LARK weights are
\begin{equation*}
\hat q_{(1)} = 0.478, \qquad \hat q_{(2)} = 0.387, \qquad \hat q_{(3)} = 0.136.
\end{equation*}
The contrast with hard top-$B$ truncation (uniform weight $1/B \approx 0.333$ each) is substantial: LARK assigns $43\%$ more weight to the rank-1 trajectory than uniform top-3 would, and only $41\%$ as much weight to the rank-3 trajectory. The asymmetry directly reflects the score gap between the two \texttt{phi4-reason-plus} rollouts and the remainder of the pool. The resulting per-problem contribution to the SFT loss is
\begin{equation*}
\mathcal{L}_{\text{SFT}}^{(\text{pid}=839)}(\theta) 
= 0.478\,\ell(\theta; \mathbf y_{16}) 
+ 0.387\,\ell(\theta; \mathbf y_{17}) 
+ 0.136\,\ell(\theta; \mathbf y_{6}),
\end{equation*}
with all other 30 candidates carrying zero weight on this problem.

\paragraph{Summary.}
The walkthrough illustrates three points anchored at a concrete problem: trajectory-level (rather than teacher-level) variation in $\hat g_k$ can be the dominant source of selection signal; the LARK-selected trajectory is not the one with the lowest $\ell_k$ but the one with structured residual signal; and the $\chi^2$-$B$ weighting deviates substantially from uniform top-$B$ truncation when the top-ranked trajectory is sharply separated from the threshold.
\section{Limitations}
\label{app:limitations}

LARK provides a practical and theoretically motivated approach for
trajectory-level selection in reasoning distillation. Its forward-pass
score makes the method efficient to apply to large candidate pools, and
the closed-form weighting rule avoids additional tuning while preserving
the benefits of multi-trajectory supervision. The empirical results
across multiple student models and benchmarks suggest that learnability
is a useful principle for selecting reasoning trajectories.

There are two natural limitations of the present study. First, our
experiments use a correctness-verified candidate pool, where all retained
trajectories reach the correct final answer. This setting isolates the
trajectory-selection problem from the separate problem of correctness
verification, but future work could study LARK in noisier pools that
include incorrect or partially correct teacher trajectories. Second, due
to the computational cost of full-parameter SFT, our reported standard
deviations are computed over independent decoding seeds while keeping
the training configuration fixed. Running multiple independent SFT
seeds would provide an even more complete estimate of training-time
variance.


\end{document}